  \newcommand{\citep}{\parencite}
  \newcommand{\citet}{\textcite}
  \newlength{\defbaselineskip}
\newtheorem{theorem}{Theorem}[section]  %
\newtheorem{corollary}[theorem]{Corollary}  %
\newtheorem{lemma}[theorem]{Lemma}
\newtheorem{proposition}[theorem]{Proposition}
\newtheorem{definition}[theorem]{Definition}
\newtheorem{remark}{Remark}
\newcommand{\cmark}{\ding{51}}%
\newcommand{\xmark}{\ding{55}}%
\newcommand{\R}{\mathbb{R}}
\newcommand{\dt}{\Delta}
\newcommand{\para}[1]{\iftoggle{arxiv}{\paragraph{#1}}{\textbf{#1}}}
\newcommand*\samethanks[1][\value{footnote}]{\footnotemark[#1]}
  \title{Transformers are SSMs: Generalized Models and Efficient Algorithms \\ Through Structured State Space Duality}
  \author[$^1$]{Tri Dao\thanks{Alphabetical by last name.}}
  \author[$^2$]{Albert Gu\samethanks}
  \affil[$^1$]{Department of Computer Science, Princeton University}
  \affil[$^2$]{Machine Learning Department, Carnegie Mellon University}
  \affil[ ]{{\texttt{tri@tridao.me}}, {\texttt{agu@cs.cmu.edu}}}
  \date{}
\begin{document}

  \maketitle

\begin{abstract}

  \noindent
  While Transformers have been the main architecture behind deep learning's
  success in language modeling, state-space models (SSMs) such as Mamba have recently been
  shown to match or outperform Transformers at small to medium scale.
  We show that these families of models are actually quite closely related,
  and develop a rich framework of theoretical connections between SSMs and variants of
  attention,
  connected through various decompositions of a well-studied class of structured \emph{semiseparable matrices}.
  Our state space duality (SSD) framework 
  allows us to design a new architecture (\textbf{Mamba-2}) whose core layer is an a refinement of Mamba's selective SSM
  that is 2-8$\times$ faster, while continuing to be competitive with
  Transformers on language modeling.

\end{abstract}

\section{Introduction}
\label{sec:intro}

Transformers, in particular decoder-only models (e.g.\ GPT~\citep{brown2020language}, Llama~\citep{touvron2023llama}) which process input sequences in a causal fashion, are one of the main drivers of modern deep learning's success.
Numerous approaches attempt to approximate the core attention layer to address its efficiency issues~\citep{tay2022efficient}, such as scaling quadratically in sequence length during training and requiring a cache of size linear in sequence length during autoregressive generation.
In parallel, a class of alternative sequence models, structured state-space models (SSMs), have emerged with linear scaling in sequence length during training and constant state size during generation.
They show strong performance on long-range tasks (e.g. S4~\citep{gu2022efficiently}) and recently matched or beat Transformers on language modeling (e.g. Mamba \citep{gu2023mamba}) at small to moderate scale.
However, the development of SSMs have appeared disjoint from the community's collective effort to improve Transformers, such as understanding them theoretically as well as optimizing them on modern hardware.
As a result, it is more difficult to understand and experiment with SSMs compared to Transformers, and it remains challenging to train SSMs as efficiently as Transformers from both an algorithmic and systems perspective.

Our main goal is to develop a rich body of theoretical connections between structured SSMs and variants of attention.
This will allow us to transfer algorithmic and systems optimizations originally developed for Transformers to SSMs, towards the goal of building foundation models that perform better than Transformers while scaling more efficiently in sequence length.
A milestone contribution in this direction was the \textbf{Linear Attention (LA)} framework \citep{katharopoulos2020transformers},
which derived a connection between autoregressive attention and linear RNNs
by showing the equivalence between ``dual forms'' of quadratic kernelized attention and a particular linear recurrence.
This duality allows new capabilities such as the ability to have both efficient parallelizable training and efficient autoregressive inference.
In the same spirit, this paper provides multiple viewpoints connecting linear-complexity SSMs with quadratic-complexity forms to combine the strengths of SSMs and attention.%
\footnote{Technically speaking, these connections only relate to certain flavors of attention; the title of this paper is an homage to \citet{katharopoulos2020transformers} which first showed that ``Transformers are RNNs''.}

\iftoggle{arxiv}{
\begin{wrapfigure}{R}{0.48\linewidth}
  \begin{center}
    \includegraphics[width=\linewidth]{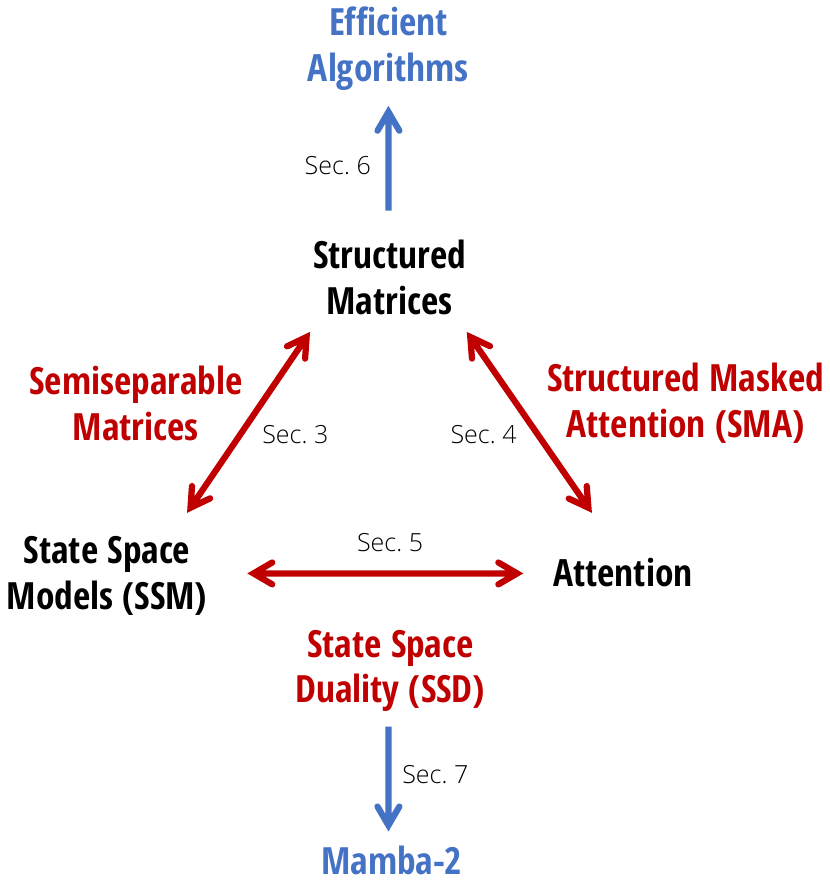}
  \end{center}
  \caption{
    (\textbf{Structured State-Space Duality}.)
    This paper fleshes out the relationship between state space models and attention through the bridge of structured matrices.
  }
  \label{fig:roadmap}
\end{wrapfigure}
}{}

\para{State Space Duality.}
Our framework connecting structured SSMs and variants of attention, which we call \textbf{structured state space duality} (SSD),
is made through the abstractions of \textbf{structured matrices}:
matrices with subquadratic parameters and multiplication complexity.
We develop two broad frameworks for representing sequence models, one as matrix transformations and one as tensor contractions, which each reveal different perspectives of the duality.
Our technical contributions include:
\begin{itemize}[leftmargin=*,itemsep=0pt,topsep=0pt]
  \item We show an equivalence between state space models and a well-studied family of structured matrices called \textbf{semiseparable matrices}\iftoggle{arxiv}{ (\cref{sec:ssm})}{}.
    This connection is at the heart our framework, revealing new properties and algorithms for SSMs. A central message of this paper is that \emph{different methods of computing state space models can be reframed as various matrix multiplication algorithms on structured matrices}.
  \item We significantly improve the theory of linear attention~\citep{katharopoulos2020transformers}.
    We first provide an incisive proof of its recurrent form through the language of tensor contractions, and then generalize it to a new family of \textbf{structured masked attention (SMA)}\iftoggle{arxiv}{ (\cref{sec:attention})}{}.
  \item We connect SSMs and SMA, showing that they have a large intersection that are duals of each other, possessing both SSM-like linear and attention-like quadratic forms\iftoggle{arxiv}{ (\cref{sec:ssd})}{}.
    \iftoggle{arxiv}{We also prove that any kernel attention method possessing a fast recurrent form must be an SSM.}{}
\end{itemize}

Beyond its intrinsic theoretical value, our framework opens up a broad set of directions for understanding and improving sequence models.

\para{Efficient Algorithms.}
First and most importantly, our framework exposes new efficient and easily-implementable algorithms for computing SSMs\iftoggle{arxiv}{ (\cref{sec:efficient})}{}.
We introduce a new \textbf{SSD algorithm}, based on block decompositions of semiseparable matrices, that takes advantage of both the linear SSM recurrence and quadratic dual form, obtaining optimal tradeoffs on all main efficiency axes (e.g. training and inference compute, memory usage, and ability to leverage matrix multiplication units on modern hardware).
A dedicated implementation of SSD is $2-8\times$ faster than the optimized selective scan implementation of Mamba, while simultaneously allowing for much larger recurrent state sizes ($8\times$ the size of Mamba or even higher, with minimal slowdown).
SSD is highly competitive with optimized implementations of softmax attention (FlashAttention-2~\citep{dao2023flashattention2}), crossing over at sequence length 2K and 6$\times$ faster at sequence length 16K.

\iftoggle{arxiv}{
\para{Architecture Design.}
One major obstacle to adopting new architectures such as SSMs is the ecosystem tailored to Transformers, such as hardware-efficient optimization and parallelism techniques for large-scale training.
Our framework allows using established conventions and techniques for attention to build a vocabulary of architecture design choices for SSMs, and further improve them (\cref{sec:architecture}).
For example, we introduce the analog of heads from multi-head attention (MHA) to SSMs.
We show that the Mamba architecture is a \textbf{multi-input SSM (MIS)} that turns out to be analogous to \textbf{multi-value attention (MVA)}, and compare other variants of Mamba with different head structures.

We also use these ideas to make slight modifications to the Mamba block, which allows tensor parallelism to be implemented (e.g. in the style of Megatron~\citep{shoeybi2019megatron}).
The main ideas include introducing grouped-value attention (GVA) head structure, and moving all data-dependent projections to occur in parallel at the beginning of the block.

}{
  \para{Mamba-2.}
  Additionally, inspired by the connection between SSMs and Transformers, we slightly modify the neural network architecture of Mamba by moving all data-dependent projections to occur in parallel at the beginning of the block. %
}
The combination of the modified parallel Mamba block, together with using SSD as the inner SSM layer, results in the \textbf{Mamba-2} architecture.
We investigate Chinchilla scaling laws for Mamba-2 in the same setting as Mamba, finding that it Pareto dominates Mamba and Transformer++ in both perplexity and wall-clock time.
We additionally train a family of Mamba-2 models at varying sizes on the Pile, showing that it matches or outperforms Mamba and open source Transformers on standard downstream evaluations.
For example, Mamba-2 with 2.7B parameters trained on 300B tokens on the Pile outperforms Mamba-2.8B, Pythia-2.8B and even Pythia-6.9B trained on the same dataset.

\iftoggle{arxiv}{
\paragraph{Systems Optimizations.}
The SSD framework connects SSMs and Transformers, allowing us to leverage a rich body of work on systems optimizations developed for Transformers~(\cref{sec:systems}).
\begin{itemize}[leftmargin=*,itemsep=0pt,topsep=0pt]
  \item For example, Tensor Parallelism (TP) is an important model parallelism technique to train large Transformer models by splitting each layer across GPUs on the same node.
    We design Mamba-2 to be TP-friendly, reducing the number of synchronization point per block by half.
  \item For very long sequences whose activations do not fit on one device, sequence parallelism has been developed for the attention blocks.
    We describe how to train SSMs in general and Mamba-2 in particular with sequence parallelism, by passing the recurrent states between devices.
  \item For finetuning with examples of different lengths, for best efficiency, Transformer requires sophisticated techniques to remove padding tokens and perform attention on variable length sequences.
    We show how Mamba-2 can be trained with variable sequence lengths efficiently, requiring no padding tokens.
\end{itemize}
}{}

\cref{sec:experiments} empirically validates Mamba-2 on language modeling, training efficiency, and a difficult multi-query associative recall task~\citep{arora2024simple}.
Finally, in \cref{sec:related}, we provide an extended related work and discuss potential research directions opened up by our framework.

Model code and pre-trained checkpoints are open-sourced at \url{https://github.com/state-spaces/mamba}.

\section{Background and Overview}
\label{sec:background}

\subsection{Structured State Space Models}

Structured state space sequence models (S4) are a recent class of sequence models for deep learning that are broadly related to RNNs, CNNs,
and classical state space models.
They are inspired by a particular continuous system \eqref{eq:ssm}
that maps a 1-dimensional sequence $x \in \R^\mathtt{T} \mapsto y \in \R^\mathtt{T}$ through an implicit latent state \( h \in \R^{\mathtt{(T, N)}} \). %
\iftoggle{arxiv}{

}{}
A general discrete form of structured SSMs takes the form of equation \eqref{eq:ssm}.
\begin{center}
  \vspace*{-1em}
\begin{minipage}[t]{.45\linewidth}
\begin{subequations}
  \label{eq:ssm}
  \begin{align}
  \label{eq:ssm:1}
    h_{t} &= A h_{t-1} + B x_t \\
  \label{eq:ssm:2}
    y_t &= C^{\top} h_t
  \end{align}
\end{subequations}
\end{minipage}
\qquad
\begin{minipage}[t]{.45\linewidth}
\begin{subequations}
  \label{eq:s6}
  \begin{align}
  \label{eq:s6:1}
    h_{t} &= A_t h_{t-1} + B_t x_t \\
  \label{eq:s6:2}
    y_t &= C_t^{\top} h_t
  \end{align}
\end{subequations}
\end{minipage}
\end{center}

where $A \in \R^{\mathtt{(N,N)}}, B \in \R^{\mathtt{(N,1)}}, C \in \R^{\mathtt{(N,1)}}$.
Structured SSMs are so named because the $A$ matrix controlling the temporal dynamics must be \emph{structured} in order to compute this sequence-to-sequence transformation efficiently enough to be used in deep neural networks.
The original structures introduced were diagonal plus low-rank (DPLR) \citep{gu2022efficiently} and diagonal~\citep{gupta2022diagonal,gu2022parameterization,smith2023s5}, which remains the most popular structure.

In this work, we use the term state space model (SSM) to refer to structured SSMs.
There are many flavors of such SSMs, with deep ties to several major paradigms of neural sequence models such as continuous-time, recurrent, and convolutional models~\citep{gu2021combining}.
\iftoggle{arxiv}{
  We provide a brief overview below, and refer to prior work for more context and details~\citep{gu2023thesis,gu2023mamba}.

\para{Continuous-time Models.}
The original structured SSMs originated as continuous-time maps on functions $x(t) \in \R \mapsto y(t) \in \R$, rather than operating directly on sequences. %
In the continuous-time perspective, in equation \eqref{eq:ssm:1} the matrices $(A, B)$ are not directly learned but generated from underlying
parameters $(\mathring{A}, \mathring{B})$, along with a parameterized step size $\dt$.
The ``continuous parameters'' $(\dt, \mathring{A}, \mathring{B})$ are converted to ``discrete parameters'' $(A, B)$ through fixed formulas $A = f_A(\dt, \mathring{A})$ and $B = f_B(\dt, \mathring{B})$,
where the pair $(f_A, f_B)$ is called a \emph{discretization rule}.

\begin{remark}
  While our main models adopt the same parameterization and discretization step as prior work (see \citet{gu2023mamba} for details), for simplifying exposition and notation we omit it in the rest of this paper.
  We note that prior work on structured SSMs referred to the continuous parameters $(\mathring{A}, \mathring{B})$ and discrete parameters $(A, B)$ as
  $(A, B)$ and $(\bar{A}, \bar{B})$ instead; we have changed notation to simplify the presentation and focus directly on the discrete parameters, which govern the main SSM recurrence.
\end{remark}

\para{Recurrent Models.}

Equations \eqref{eq:ssm} and \eqref{eq:s6} take the form of a recurrence which is linear in its input $x$.
Structured SSMs can therefore be viewed as types of recurrent neural networks (RNNs), where the linearity endows them with additional properties and
allows them to avoid the sequential computation of traditional RNNs.
Conversely, despite this simplification, SSMs are still fully expressive as sequence transformations (in the sense of universal approximation)~\citep{kaul2020linear,orvieto2023resurrecting,wang2023state}.

\para{Convolutional Models.}
When the SSM's dynamics are constant through time as in equation \eqref{eq:ssm}, the model is called \textbf{linear time-invariant (LTI)}.
In this case, they are equivalent to convolutions.
Thus, SSMs can also be viewed as types of CNNs, but where (i) the convolution kernels are implicitly parameterized through the SSM parameters $(A, B, C)$ and
(ii) the convolution kernels are generally global instead of local.
Conversely, through classical signal processing theory all sufficiently well-behaved convolutions can be represented as SSMs.

Commonly, previous LTI SSMs would use the convolutional mode for efficient parallelizable training (where the whole input sequence is seen ahead of time), %
and switched into recurrent mode \eqref{eq:ssm} for efficient autoregressive inference (where the inputs are seen one step at a time).

}{}

\para{Selective State Space Models.}
The form \eqref{eq:s6} where the parameters $(A, B, C)$ can also vary in time was introduced in Mamba as the \textbf{selective SSM}.
Compared to the standard LTI formulation \eqref{eq:ssm}, this model can selectively choose to focus on or ignore inputs at every timestep.
It was shown to perform much better than LTI SSMs on information-dense data such as language,
especially as its state size $\mathtt{N}$ increases allowing for more information capacity.
However, it can only be computed in recurrent instead of convolutional mode, and requires a careful hardware-aware implementation to be efficient.
Even so, it is still less efficient than hardware-friendly models such as CNNs and Transformers because it does not leverage matrix multiplication units, which modern accelerators such as GPUs and TPUs are specialized for.

While \emph{time-invariant} SSMs are closely related to continuous, recurrent, and convolutional sequence models,
they are not directly related to attention.
In this paper, we show a deeper relationship between \emph{selective} SSMs and attention,
and use it to significantly improve the training speed of SSMs while simultaneously allowing for much larger state sizes $\mathtt{N}$.

\para{Structured SSMs as Sequence Transformations.}

\begin{definition}
  \label{def:sequence-transformation}
We use the term \textbf{sequence transformation} to refer to a parameterized map on sequences $Y = f_{\theta}(X)$ where $X, Y \in \R^{\mathtt{(T,P)}}$ and $\theta$ is an arbitrary collection of parameters.
$\mathtt{T}$ represents the sequence or \emph{time} axis; subscripts index into the first dimension, e.g.\ $X_t, Y_t \in \R^\mathtt{P}$.
\end{definition}
Sequence transformations (e.g.\ SSMs, or self-attention) are the cornerstone of deep sequence models, where they are incorporated into neural network architectures (e.g.\ Transformers).
The SSM in \eqref{eq:ssm} or \eqref{eq:s6} is a sequence transformation with $\mathtt{P}=1$; it can be generalized to $\mathtt{P} > 1$ by simply broadcasting across this dimension (in other words, viewing the input as $\mathtt{P}$ independent sequences and applying the SSM to each).
One can think of $\mathtt{P}$ as a \textbf{head dimension}\iftoggle{arxiv}{, which we will elaborate on in \cref{sec:architecture}}{}.

\begin{definition}
  \label{def:ssm}
  We define the \textbf{SSM operator}
    $\mathsf{SSM}(A, B, C) = \mathsf{SSM}(A_{0:T}, B_{0:T}, C_{0:T})$ as the sequence transformation $X \in \R^{\mathtt{(T,P)}} \mapsto Y \in \R^{\mathtt{(T,P)}}$
    defined by equation \eqref{eq:s6}.
\end{definition}

In SSMs, the $\mathtt{N}$ dimension is a free parameter called the \textbf{state size} or state dimension.
We also call it the \textbf{state expansion factor}, because it expands the size of the input/output by a factor of $N$, with implications for the computational efficiency of these models.

Finally, we remark that many types of sequence transformations, such as attention, can be represented as a single matrix multiplication across the sequence dimension.
\begin{definition}
  \label{def:matrix-transformation}
  We call a sequence transformation $Y = f_\theta(X)$ a \textbf{matrix transformation} if it can be written in the form $Y = M_\theta X$ where $M$ is a matrix depending on the parameters $\theta$.
  We identify the sequence transformation with the matrix $M$, and often drop the dependence on $\theta$ when clear from context.
\end{definition}

\subsection{Attention}
\label{sec:overview:attention}

Attention broadly refers to a type of computation that assigns scores to every pair of positions in a sequence, allowing each element to ``attend'' to the rest.
By far the most common and important variant of attention is softmax self-attention, which can be defined as
\begin{align*}%
  Y = \operatorname*{softmax}(QK^{\top}) \cdot V
\end{align*}
for $Q, K, V \in \R^{\mathtt{(T,P)}}$.
The mechanism of pairwise comparisons (induced by materializing $QK^{\top}$) leads to the characteristic quadratic training cost of attention.

Many variants of attention have been proposed, but all share the underlying core of these attention scores, with various approximations~\citep{tay2022efficient}.
The most important variant for this work is \textbf{linear attention}~\citep{katharopoulos2020transformers}.
Roughly speaking, this family of methods drops the softmax by folding it into a kernel feature map, and uses associativity of matrix multiplication to rewrite
$(QK^{\top}) \cdot V = Q \cdot (K^{\top} V)$.
Moreover, in the important case of causal (autoregressive) attention,
they show that when the causal mask is incorporated into the left-hand side as
$(L \circ QK^{\top}) \cdot V$, where $L$ is the lower-triangular 1's matrix,
then the right-hand side can be expanded as a recurrence.
Several recent and concurrent works
such as RetNet~\citep{sun2023retentive} and GateLoop~\citep{katsch2023gateloop} strengthen this to more general forms of $L$ (\cref{sec:related}).
In this work, our formulation of structured masked attention will strongly generalize these ideas.

\subsection{Structured Matrices}
\label{sec:overview:structured-matrix}

General matrices $M \in \R^{\mathtt{(T,T)}}$ require $\mathtt{T}^2$ parameters to represent and $O(\mathtt{T}^2)$ time to perform basic operations such as matrix-vector multiplication.
\textbf{Structured matrices} are those that
\begin{enumerate}[label=(\roman*)]
  \item can be represented in subquadratic (ideally linear) parameters through a compressed representation, and
  \item have fast algorithms (most importantly matrix multiplication) by operating directly on this compressed representation.
\end{enumerate}
Perhaps the most canonical families of structured matrices are sparse and low-rank matrices.
However, there exist many other families, such as Toeplitz, Cauchy, Vandermonde, and butterfly matrices, which have all been used in machine learning for efficient models~\citep{thomas2018learning,dao2019learning,gu2022parameterization,fu2024monarch}.
Structured matrices are a powerful abstraction for efficient representations and algorithms.
In this work, we will show that SSMs are equivalent to another class of structured matrices that have not previously been used in deep learning, and use this connection to derive efficient methods and algorithms.

\subsection{Overview: Structured State Space Duality}

While this paper develops a much richer framework of connections between SSMs, attention, and structured matrices, we provide a brief summary of the main method, which is actually quite self-contained and simple algorithmically.

\paragraph{Recurrent (Linear) Form.}
The state space dual (SSD) layer can be defined as a special case of the selective SSM \eqref{eq:s6}.
The standard computation of an SSM as a recurrence (or parallel scan) can be applied, which has linear complexity in sequence length.
Compared to the version used in Mamba, SSD has two minor differences:
\begin{itemize}
  \item The structure on $A$ is further simplified from diagonal to \emph{scalar times identity} structure.
    Each $A_t$ can also be identified with just a scalar in this case.
  \item We use a larger head dimension $\mathtt{P}$, compared to $\mathtt{P}=1$ used in Mamba. Typically $\mathtt{P}=\{64,128\}$ is chosen which is similar to conventions for modern Transformers. %
\end{itemize}
Compared to the original selective SSM, these changes can be viewed as slightly decreasing the expressive power in return for significant training efficiency improvements.
In particular, our new algorithms will allow the use of matrix multiplication units on modern accelerators.

\paragraph{Dual (Quadratic) Form.}

The dual form of SSD is a quadratic computation closely related to attention, defined as
\begin{align*}%
  (L \circ QK^{\top}) \cdot V \qquad
  L_{ij} = \begin{cases}
    a_i \times \dots \times a_{j+1} & i \ge j \\
    0 & i < j
  \end{cases}
\end{align*}
where $a_i$ are input-dependent scalars bounded in $[0, 1]$.

Compared to standard softmax attention, there are two main differences
\begin{itemize}
  \item The softmax is dropped.
  \item The attention matrix is multiplied elementwise-wise by an additional mask matrix $L$.
\end{itemize}
Both of these changes can be viewed as addressing problems in vanilla attention.
For example, the softmax has been recently observed to cause problems in attention scores, such as the ``attention sink'' phenomenon~\citep{xiao2024efficient,darcet2024vision}.
More importantly, the mask matrix $L$ can be viewed as replacing the heuristic positional embeddings of Transformers with a different \emph{data-dependent positional mask} that controls how much information is transfered across time.

More broadly, this form is an instance of our \textbf{structured masked attention} generalization of linear attention, defined in \cref{sec:attention}.

\paragraph{Matrix Form and SSD Algorithm.}

The various forms of SSD are connected through a unified matrix representation,
by showing that SSMs have a matrix transformation form
$Y = MX$ for a matrix $M_\theta \in \R^{\mathtt{(T,T)}}$ that depends on $\theta = (A, B, C)$.
In particular, the dual form of SSD is equivalent to naive (quadratic-time) multiplication by the matrix $M$,
and the recurrent form is a particular efficient (linear-time) algorithm that leverages the structure in $M$.

Going beyond these, \emph{any} algorithm for multiplication by $M$ can be applied.
Our proposed hardware-efficient SSD algorithm (\cref{sec:efficient}) is a new structured matrix multiplication method that involves block decompositions of $M$, which obtains better efficiency tradeoffs than either the pure linear or quadratic forms.
It is relatively simple and easy-to-implement compared to general selective SSMs \citep{gu2023mamba};
\cref{listing} provides a complete implementation in a few lines of code.

\iftoggle{arxiv}{
\cref{fig:roadmap} provides a simple roadmap of the relationships between the concepts presented in this paper.
}{}

\subsection{Notation}

Throughout this paper, we prefer using precise notation that can be mapped to code.

\para{Matrices and Vectors.}
We generally use lower case to denote vectors (i.e.\ tensors with a single axis)
and upper case to denote matrices (i.e.\ tensors with more than one axes).
We do not bold matrices in this work.
Sometimes, if a matrix is tied or repeated along one axis (and hence can also be viewed as a vector),
we may use either upper or lower case for it.\footnote{In this work, this happens only with the $A$ parameter of SSMs.}
$\cdot$ denotes scalar or matrix multiplication while $\circ$ denotes Hadamard (elementwise) multiplication.

\para{Indexing.}
We use Python-style indexing, e.g. $i:j$ refers to the range $(i, i+1, \dots, j-1)$ when $i < j$ and $(i, i-1, \dots, j+1)$ when $i > j$.
For example, for any symbol $v$ we let $v_{j:i}$ for $j \ge i$ denote the sequence $(v_j, \dots, v_{i+1})$.
$[i]$ is equivalent to $0:i = (0, \dots, i-1)$.
For shorthand, we also let $v_{j:i}^\times$ denote the product $v_j \times \dots \times v_{i+1}$.%
\footnote{In some contexts, it is always clear that the notation $a_{i:j}$ or $A_{i:j}$ means $a_{i:j}^\times$, and the superscript is omitted. }

\para{Dimensions.}
To distinguish from matrices and tensors, we often use capital letters in typewriter fonts (e.g. $\mathtt{D}, \mathtt{N}, \mathtt{T})$ to denote dimensions and tensor shapes.
Instead of the traditional notation $M \in \mathbb{R}^{T \times T}$ we frequently use $M \in \mathbb{R}^{\mathtt{(T,T)}}$ to reflect tensor shapes in code.

\para{Tensor Contractions.}
We will heavily rely on \textbf{tensor contraction} or \textbf{einsum} notation both for clarity and as a central tool in stating and proving our results.
We assume the reader to be familiar with this notation,
which is commonly used in modern tensor libraries such as \texttt{numpy}.
For example, we can use $\mathsf{contract}(\mathtt{MN,NK \to MK})$ to denote the matrix-matrix multiplication operator, and in our notation $\mathsf{contract}(\mathtt{MN,NK \to MK})(X, Y)$ (which is equivalent to $X \cdot Y$) can be translated to code as $\mathtt{numpy.einsum('mn,nk\to mk', X, Y)}$.

\iftoggle{arxiv}{
A large glossary of notation is included in \cref{sec:glossary}.
}{}

\section{State Space Models are Structured Matrices}
\label{sec:ssm}

This section explores different perspectives of the state space model as a sequence transformation, and outlines properties and algorithms of such maps.
The main results of this section are about the equivalence between state space models and a family of structured matrices called semiseparable matrices,
which imply new efficiency results (\cref{thm:ssm-sss,thm:ssm-efficiency}).

\subsection{The Matrix Transformation Form of State Space Models}

Recall that our definition of an SSM is defined as a parameterized map
defined through \eqref{eq:s6}.
Our theoretical framework starts by simply writing this transformation as a matrix multiplication mapping the vectors $x \in \R^\mathtt{T} \mapsto y \in \R^\mathtt{T}$.

By definition, $h_0 = B_0 x_0$.
By induction,
\begin{align*}
  h_t &= A_t \dots A_1 B_0 x_0 + A_t \dots A_2 B_1 x_1 + \dots + A_t A_{t-1} B_{t-2} x_{t-2} + A_t B_{t-1} x_{t-1} + B_t x_t
    \\&= \sum_{s=0}^t A_{t:s}^\times B_s x_s
    .
\end{align*}

Multiplying by $C_t$ to produce $y_t$ and vectorizing the equation over $t \in [\mathtt{T}]$,
we derive the matrix transformation form of SSMs.
\begin{equation}
  \label{eq:ssm-matrix}
  \begin{aligned}
    y_t &= \sum_{s=0}^t C_t^{\top} A_{t:s}^\times B_s x_s
    \\
    y &= \mathsf{SSM}(A, B, C)(x) = Mx
    \\
    M_{ji} &\coloneqq C_j^{\top} A_{j} \cdots A_{i+1} B_{i}
  \end{aligned}
\end{equation}

\subsection{Semiseparable Matrices}

$M$ in equation \eqref{eq:ssm-matrix} is a particular representation of a class of matrices known as semiseparable matrices.
Semiseparable matrices are a fundamental matrix structure.
We first define these matrices and their properties.

\begin{definition}
  \label{def:semiseparable-rank}
  A (lower triangular) matrix $M$ is $\mathtt{N}$-semiseparable if every submatrix contained in the lower triangular portion (i.e.\ on or below the diagonal) has rank at most $\mathtt{N}$.
  We call $\mathtt{N}$ the \emph{order} or \emph{rank} of the semiseparable matrix.
\end{definition}

\cref{def:semiseparable-rank}, and other forms of related ``separable'' structure (e.g.\ quasiseparable matrices and other definitions of semiseparable matrices) are sometimes called \textbf{structured rank matrices} (or rank-structured matrices) because they are characterized by rank conditions on their submatrices.
Semiseparable matrices have many structured representations including the hierarchical semiseparable (HSS),
sequential semiseparable (SSS), and Bruhat forms~\citep{pernet2018time}.
We will primarily use the SSS form.

\subsubsection{The Sequentially Semiseparable (SSS) Representation}

\begin{definition}
  \label{def:sss}
  A lower triangular matrix $M \in \R^{\mathtt{(T,T)}}$ has a $\mathtt{N}$-\textbf{sequentially semiseparable (SSS)} representation if it can be written in the form
  \begin{equation}%
    \label{eq:sss}
    M_{ji} = C_j^{\top} A_{j} \cdots A_{i+1} B_{i}
  \end{equation}
  for vectors $B_{0}, \dots, B_{\mathtt{T}-1}, C_{0}, \dots, C_{\mathtt{T}-1} \in \R^{\mathtt{N}}$
  and matrices $A_{0}, \dots, A_{\mathtt{T}-1} \in \R^{\mathtt{(N,N)}}$.

  We define the operator $\mathsf{SSS}$ so that $M = \mathsf{SSS}(A_{0:\mathtt{T}}, B_{0:\mathtt{T}}, C_{0:\mathtt{T}})$.
\end{definition}

A fundamental result of semiseparable matrices is that they are exactly equivalent to matrices with SSS representations.
One direction can be deduced with a simple constructive proof.

\begin{lemma}
  \label{lmm:sss-rank-factor}
  An $\mathtt{N}$-SSS matrix $M$ with representation \eqref{eq:sss} is $\mathtt{N}$-semiseparable.
\end{lemma}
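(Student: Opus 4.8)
The plan is to show directly that any submatrix of $M$ lying on or below the diagonal has rank at most $\mathtt{N}$, using the explicit form \eqref{eq:sss}. The key observation is that the entries $M_{ji} = C_j^\top A_j \cdots A_{i+1} B_i$ for $j \ge i$ admit a rank-$\mathtt{N}$ factorization whenever we restrict to a block of rows all lying (weakly) below a block of columns. Concretely, fix a contiguous (or even arbitrary) set of row indices and column indices with the property that every chosen row index is at least every chosen column index — this is exactly what it means for the submatrix to be contained in the lower-triangular portion. The natural splitting point is to pick an index $k$ such that all selected columns have index $\le k$ and all selected rows have index $\ge k$; then for any such pair $(j,i)$ we can insert a telescoping factorization through step $k$.

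First I would make the telescoping precise: for $j \ge k \ge i$, write
\begin{equation*}
  M_{ji} = C_j^\top A_{j} \cdots A_{i+1} B_i = \left(C_j^\top A_j \cdots A_{k+1}\right)\left(A_k \cdots A_{i+1} B_i\right) = \left(A_{j:k}^{\times\top} C_j\right)^\top \left(A_{k:i}^\times B_i\right),
\end{equation*}
where I am using the product shorthand $A_{j:k}^\times = A_j \cdots A_{k+1} \in \R^{\mathtt{(N,N)}}$ and interpreting an empty product as the identity. Define the vector $u_j \coloneqq (A_{j:k}^\times)^\top C_j \in \R^{\mathtt{N}}$ for each selected row index $j$, and $v_i \coloneqq A_{k:i}^\times B_i \in \R^{\mathtt{N}}$ for each selected column index $i$. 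Then the $(j,i)$ entry of the submatrix equals $u_j^\top v_i$, so if $U$ is the matrix whose rows are the $u_j^\top$ and $V$ is the matrix whose columns are the $v_i$, the submatrix equals $U V$. Since $U$ has $\mathtt{N}$ columns and $V$ has $\mathtt{N}$ rows, the product has rank at most $\mathtt{N}$, which is exactly the claim.

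I would then just note that an arbitrary submatrix contained in the lower-triangular portion always admits such a splitting index $k$: if $\mathcal{R}$ is the set of row indices and $\mathcal{C}$ the set of column indices and $\min \mathcal{R} \ge \max \mathcal{C}$ (which is forced by the submatrix being on or below the diagonal), then taking $k = \max \mathcal{C}$ (or any value in $[\max\mathcal{C}, \min\mathcal{R}]$) works, since every $i \in \mathcal{C}$ satisfies $i \le k$ and every $j \in \mathcal{R}$ satisfies $j \ge k$. This covers submatrices formed from non-contiguous index sets as well, so the rank bound holds for every lower-triangular submatrix, establishing that $M$ is $\mathtt{N}$-semiseparable per \cref{def:semiseparable-rank}.

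The argument is essentially a one-line rank factorization, so there is no serious obstacle; the only thing to be careful about is bookkeeping with the index ranges and empty products — in particular handling the boundary cases $j = k$ or $i = k$ (where one of the two factors is the identity) and making sure the "every submatrix on or below the diagonal" condition is translated correctly into the existence of the splitting index $k$. One subtlety worth a sentence is that the definition of semiseparable quantifies over \emph{all} submatrices in the lower-triangular portion, not merely contiguous principal-type blocks, but since the factorization $M_{ji} = u_j^\top v_i$ depends only on the single split point $k$ and not on contiguity, the same $U, V$ construction handles the general case with no extra work.
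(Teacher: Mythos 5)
Your proof is correct and is essentially the same argument as the paper's: both insert a split point between the selected rows and columns and read off a rank-$\mathtt{N}$ factorization of the lower-triangular submatrix through the $\mathtt{N}$-dimensional state (the paper's equation \eqref{eq:sss-rank-factor} is exactly your $UV$ factorization, with the center factor $A_{j:i-1}^\times$ absorbed into one side). Your explicit handling of non-contiguous index sets is a small point of extra care that the paper leaves implicit, but it does not change the argument.
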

\begin{proof}
  Consider any off-diagonal block $M_{j:j', i':i}$ where $j' > j \ge i > i'$.
  This has an explicit rank-$\mathtt{N}$ factorization as
  \begin{equation}
    \label{eq:sss-rank-factor}
    \begin{bmatrix}
      C_j^{\top} A_{j:i'}^\times B_{i'}         & \dots & C_j^{\top} A_{j:i-1}^\times B_{i-1}   \\
      \vdots                                    &       &       \vdots      \\
      C_{j'-1}^{\top} A_{j'-1:i'}^\times B_{i'} & \dots & C_{j'-1}^{\top} A_{j'-1:i-1}^\times B_{i-1} \\
    \end{bmatrix}
    =
    \begin{bmatrix} C_j^{\top} A_{j:j}^\times \\ \vdots \\ C_{j'-1}^{\top} A_{j'-1:j}^\times \end{bmatrix}
    A_{j:i-1}^\times
    \begin{bmatrix} A_{i-1:i'}^\times B_{i'} & \cdots & A_{i-1:i-1}^\times B_{i-1} \end{bmatrix}
    .
  \end{equation}
\end{proof}
Equation \eqref{eq:sss-rank-factor} will be used extensively in deriving our fast algorithms for sequence models.
The other direction is well-established in the literature on semiseparable matrices.

\begin{proposition}
  \label{prop:sss}
  Every $\mathtt{N}$-semiseparable matrix has a $\mathtt{N}$-SSS representation.
\end{proposition}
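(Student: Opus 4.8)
The plan is to prove the converse constructively: given an $\mathtt{N}$-semiseparable lower triangular matrix $M$, I will exhibit vectors $B_i, C_j \in \R^{\mathtt{N}}$ and matrices $A_t \in \R^{\mathtt{(N,N)}}$ realizing the SSS form \eqref{eq:sss}. The natural route is to build the hidden state recurrence directly from the rank conditions. The key observation is that the rank-$\mathtt{N}$ condition on every lower-triangular submatrix means that, for each split point $t$, the ``off-diagonal block'' $M_{t:\mathtt{T},\,0:t}$ (everything strictly below and to the left of position $t$) has rank at most $\mathtt{N}$, so it factors as $P_t R_t$ with $P_t \in \R^{\mathtt{(T-t, N)}}$ and $R_t \in \R^{\mathtt{(N, t)}}$. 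The columns of $R_t$ should be thought of as the states $h_0, \dots, h_{t-1}$ needed to reconstruct the future, and the rows of $P_t$ encode the $C_j^\top A_{j:t}^\times$ readout maps.

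Concretely, first I would set up these factorizations for all $t$ and note the nesting: the block at split $t+1$ is obtained from the block at split $t$ by deleting row $t$ (moving it to the diagonal side) and appending column $t$. From the compatibility of these two factorizations on their overlap, one deduces that there must exist a matrix $A_{t+1} \in \R^{\mathtt{(N,N)}}$ and a vector $B_t$ such that the ``state update'' $R_{t+1} = [\,A_{t+1} R_t \mid B_t\,]$ holds (reading the deleted row appropriately to pin down $A_{t+1}$ via the relation between consecutive $P$ factors, $P_t = [\text{row}\,t;\ P_{t+1} A_{t+1}]$ up to the chosen bases). Then define $C_j$ from the $j$-th row of $P_0$ (or of the appropriate $P_t$), and verify by unwinding the recurrence that $M_{ji} = C_j^\top A_j \cdots A_{i+1} B_i$ for all $j \ge i$, with the diagonal entries $M_{jj}$ absorbed by setting $B_j$ (or a companion term) appropriately. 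A cleaner alternative, which I would actually prefer to write, is to avoid choosing bases consistently across all $t$ by instead arguing inductively on $\mathtt{T}$: assume $M_{0:\mathtt{T}-1, 0:\mathtt{T}-1}$ already has an SSS representation, and extend it by one row/column, using the rank-$\mathtt{N}$ bound on the new bottom row block $M_{\mathtt{T}-1, 0:\mathtt{T}-1}$ to express it as a linear combination of the existing state-readout vectors — this gives $C_{\mathtt{T}-1}$ and, with a little care, $A_{\mathtt{T}-1}$.

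The main obstacle is the bookkeeping needed to make the bases of the successive rank-$\mathtt{N}$ factorizations compatible, so that a single coherent family $(A_t)$ emerges rather than a different factorization at every step; this is exactly where the nesting structure of the off-diagonal blocks must be exploited carefully. The diagonal entries are a minor nuisance — they are unconstrained by the semiseparability rank conditions (a $1\times 1$ block has rank $\le 1 \le \mathtt{N}$) and must simply be matched by the free choice of $B_j$ together with $C_j$. Since \cref{lmm:sss-rank-factor} already gives one direction, I would state that this converse is standard in the rank-structured-matrices literature (e.g. the references already cited, \citet{pernet2018time}), give the inductive construction sketch above as the proof, and not belabor the linear-algebra details, since the forward direction and the explicit factorization \eqref{eq:sss-rank-factor} are what the rest of the paper actually uses.
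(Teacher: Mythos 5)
The paper does not actually prove this proposition: it simply remarks that ``the other direction is well-established in the literature on semiseparable matrices'' and moves on, which is precisely where your proposal also lands. Your constructive sketch is the standard realization-theoretic argument (factor each off-diagonal block $M_{t:\mathtt{T},\,0:t} = P_t R_t$ using the rank-$\mathtt{N}$ hypothesis, read the states off the columns of $R_t$ and the readout maps off the rows of $P_t$, and extract $A_{t+1}$ and $B_t$ from the compatibility of consecutive factorizations on their overlap), and it is sound in outline. The one step that genuinely requires care --- which you correctly identify as the main obstacle --- is that deducing a single coherent $A_{t+1}$ from the overlap identity presupposes the factors are chosen compatibly: if $R_t$ fails to have full row rank the equation does not pin down $A_{t+1}$ by a pseudo-inverse, and one must either pad or run a minimal-realization argument to keep the bases aligned across all $t$. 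That bookkeeping is standard but is exactly the content of the theorem, so a fully self-contained proof would have to carry it out. Since the rest of the paper only ever uses the forward direction (\cref{lmm:sss-rank-factor}) and the explicit factorization \eqref{eq:sss-rank-factor}, your decision to cite the rank-structured-matrix literature for the converse rather than belabor the linear algebra matches the paper's own treatment.
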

Furthermore, note that although \cref{def:sss} involves $O(\mathtt{N}^2\mathtt{T})$ parameters for the representation (in particular to store the $A$ matrices),
it can actually be compressed down to $O(\mathtt{NT})$ parameters, which is asymptotically tight~\citep{pernet2023exact}.
Therefore in the rest of this paper we will conflate the structured matrix class (\cref{def:semiseparable-rank}) and a particular representation of it (\cref{def:sss}); we will always use this representation instead of other candidates.
In turn we will use $\mathtt{N}$-SS to refer to an $\mathtt{N}$-semiseparable matrix in SSS form.

Semiseparable matrices are a fundamental matrix structure and have many important properties.
They are deeply related to recurrences at large, and can be defined by multiple characterizations (e.g.\ \cref{def:semiseparable-rank,def:sss}) which reveal different connections and efficient algorithms for them.
We mention some of their other properties in \cref{sec:ssm:properties}.

\begin{remark}
  The notion of semiseparability is very broad and many similar but subtlely different definitions appear in the literature;
  our definitions may differ slightly from other conventions.
  First, because we are primarily concerned with causal or autoregressive settings in this paper,
  we have restricted the definition of semiseparability to the triangular case;
  \cref{def:semiseparable-rank} more formally might be called $(\mathtt{N},0)$-semiseparability by some authors.
  Some authors may also instead refer to it as a form of quasiseparability~\citep{eidelman1999new,pernet2016computing}.
  See \citet{vandebril2005bibliography} for a brief survey.
\end{remark}

\subsubsection{1-Semiseparable Matrices: the Scalar SSM Recurrence}
\label{sec:ssm:1-ss}

We will single out the special case of $1$-SS matrices.
Note that in this case, the $C_j$ and $B_i$ are scalars, and can be factored out of the SSS representation \eqref{eq:sss}
(we also use lower-case to emphasize that the parameters are scalars in this case)
\begin{align*}%
  \mathsf{SSS}(a, b, c) = \mathsf{diag}(c) \cdot M \cdot \mathsf{diag}(b) \qquad \text{where} \qquad M_{ji} = a_{j:i}^\times
  .
\end{align*}

Since diagonal matrices are easy to handle (e.g.\ multiplication by a diagonal matrix is the same as elementwise scalar multiplication),
we can ignore these terms.
Thus our basic representation of a 1-SS matrix is $M_{ji} = a_{j:i}$ or
\begin{equation}%
  \label{eq:1ss}
  M =
  \mathsf{1SS}(a_{0:T}) \coloneqq
  \begin{bmatrix}
    1 & \\
    a_1 & 1 & \\
    a_2a_1 & a_2 & 1 \\
    \vdots & \vdots & \ddots & \ddots \\
    a_{T-1}\dots a_1 & a_{T-1}\dots a_2 & \dots & a_{T-1} & 1 \\
  \end{bmatrix}
  .
\end{equation}

The importance of 1-SS matrices lies in their equivalence to the minimal form of a scalar recurrence -- the case of a degenerate SSM with state dimension $\mathtt{N}=1$ and no $(B, C)$ projections.
Note that multiplication $y = Mx$ can be computed by the recurrence
\begin{equation}
  \label{eq:1ss-recurrence}
  \begin{aligned}
    y_t &= a_{t:0}x_0 + \dots + a_{t:t}x_t \\
        &= a_t \left(a_{t-1:0}x_0 + \dots + a_{t-1:t-1}x_{t-1}\right) + a_{t:t}x_t \\
        &= a_t y_{t-1} + x_t
        .
  \end{aligned}
\end{equation}
We thus also refer to matrix multiplication by $1$-SS matrices as the \textbf{scalar SSM recurrence} or the \texttt{cumprodsum} (cumulative product sum; a generalization of cumulative product and cumulative sum) operator.
As the fundamental form of recurrence, multiplication by 1-SS matrices is important
as a building block for our main algorithms.

We emphasize that one of the central themes of this paper is that \emph{many algorithms on sequence models can be reduced to structured matrix multiplication algorithms}.
1-SS matrices exemplify this connection: there are many fast algorithms for computing the primitive scalar recurrence or \texttt{cumprodsum} operator,
and all of them turn out to be equivalent to different structured factorization of 1-SS matrices.
We dedicate \cref{sec:scan} to these algorithms for 1-SS matrix multiplication.

\subsection{State Space Models are Semiseparable Matrices}
Recall that our definition of an SSM is defined as a parameterized map
defined through \cref{def:sequence-transformation}.
The connection between SSMs and semiseparable matrices follows from simply writing this transformation as a matrix multiplication mapping the vectors $x \mapsto y \in \R^\mathtt{T}$.

Equation \eqref{eq:ssm-matrix} directly establishes the link between state space models and the sequentially semiseparable representation, which in turn are equivalent to semiseparable matrices in general (\cref{lmm:sss-rank-factor,prop:sss}).
\begin{theorem}
  \label{thm:ssm-sss}
  The state space model transformation $y = \mathsf{SSM}(A, B, C)(x)$ with state size $\mathtt{N}$ is identical to matrix multiplication by an $\mathtt{N}$-SS matrix in sequentially semiseparable representation $y = \mathsf{SSS}(A, B, C) \cdot x$.
\end{theorem}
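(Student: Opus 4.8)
The plan is to recognize that this theorem is essentially a repackaging of the matrix transformation form \eqref{eq:ssm-matrix} together with \cref{lmm:sss-rank-factor}, so the work is almost entirely already in place. First I would expand the selective recurrence \eqref{eq:s6:1} by induction on $t$, starting from the base case $h_0 = B_0 x_0$, to obtain the closed form $h_t = \sum_{s=0}^{t} A_{t:s}^\times B_s x_s$, where $A_{t:t}^\times$ is the empty product interpreted as the identity (this is exactly the computation carried out just before \eqref{eq:ssm-matrix}). Left-multiplying by $C_t^{\top}$ via \eqref{eq:s6:2} gives $y_t = \sum_{s=0}^{t} C_t^{\top} A_{t:s}^\times B_s x_s$, and stacking these equations over $t \in [\mathtt{T}]$ yields the single matrix identity $y = Mx$ with $M_{ji} = C_j^{\top} A_{j:i}^\times B_i$ for $j \ge i$ and $M_{ji} = 0$ for $j < i$.

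Next I would observe that this matrix $M$ is, verbatim, the matrix of \cref{def:sss}: it is lower triangular with $(j,i)$ entry $C_j^{\top} A_j \cdots A_{i+1} B_i$, built from vectors $B_i, C_i \in \R^{\mathtt{N}}$ and matrices $A_i \in \R^{\mathtt{(N,N)}}$ supplied directly by the SSM parameters. Hence $M = \mathsf{SSS}(A_{0:\mathtt{T}}, B_{0:\mathtt{T}}, C_{0:\mathtt{T}})$, which is precisely the asserted equality $\mathsf{SSM}(A,B,C)(x) = \mathsf{SSS}(A,B,C) \cdot x$; the generalization from $\mathtt{P}=1$ to $\mathtt{P}>1$ follows since the SSM acts by broadcasting across the head dimension, so the same $M$ is applied to each of the $\mathtt{P}$ columns of $X$. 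Finally, \cref{lmm:sss-rank-factor} applied to this SSS representation certifies that $M$ is $\mathtt{N}$-semiseparable, via the explicit rank-$\mathtt{N}$ factorization \eqref{eq:sss-rank-factor} of every off-diagonal block; thus $M$ is an $\mathtt{N}$-SS matrix in the sense of \cref{def:semiseparable-rank}, completing the identification. (If one also wants the full equivalence in the reverse direction, \cref{prop:sss} supplies it, but the statement as phrased only needs the forward construction.)

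I do not expect a genuine obstacle here — the mathematical content lives in the already-established \eqref{eq:ssm-matrix} and \cref{lmm:sss-rank-factor}. The only points needing care are clerical: keeping the index/range conventions consistent (recall $v_{j:i}^\times = v_j \times \cdots \times v_{i+1}$, so $A_{j:j}^\times = 1$ and the diagonal entries of $M$ are $C_j^{\top} B_j$), correctly handling the $i > j$ case so that lower-triangularity is manifest, and noting that the rank bound is \emph{at most} $\mathtt{N}$ as in \cref{def:semiseparable-rank} — the claim is that state size $\mathtt{N}$ suffices to represent the transformation as a semiseparable matrix, not that it is forced to be exactly $\mathtt{N}$.
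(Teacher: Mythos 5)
Your proposal is correct and follows essentially the same route as the paper, which proves this theorem simply by pointing to the inductive unrolling of the recurrence in equation \eqref{eq:ssm-matrix} (giving $M_{ji} = C_j^{\top} A_{j:i}^\times B_i$, which is verbatim the SSS form of \cref{def:sss}) together with \cref{lmm:sss-rank-factor} and \cref{prop:sss} for the equivalence with semiseparability. Your added attention to the index conventions, the empty-product diagonal entries, and the broadcast over the head dimension $\mathtt{P}$ is consistent with the paper and does not change the argument.
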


In other words the sequence transformation operator $\mathsf{SSM}$ (\cref{def:ssm}) coincides with the matrix construction operator $\mathsf{SSS}$ (\cref{def:sss}),
and we use them interchangeably (or sometimes $\mathsf{SS}$ as shorthand).
Furthermore---by a twist of fate---structured state space models and sequentially semiseparable matrices have the same acronyms, underscoring their equivalence!
Conveniently we can use any of these acronyms SSM (state space model or semiseparable matrix), SSS (structured state space or sequentially semiseparable), or SS (state space or semiseparable) interchangeably to unambiguously refer to either concept.
However, we will generally use the convention that SSM refers to state space model, SS refers to semiseparable, and SSS refers to sequentially semiseparable.

\cref{fig:ssm-semiseparable} illustrates the sequence transformation perspective of state space models as semiseparable matrices.
\begin{figure}[!t]
  \centering
  \includegraphics[width=\linewidth]{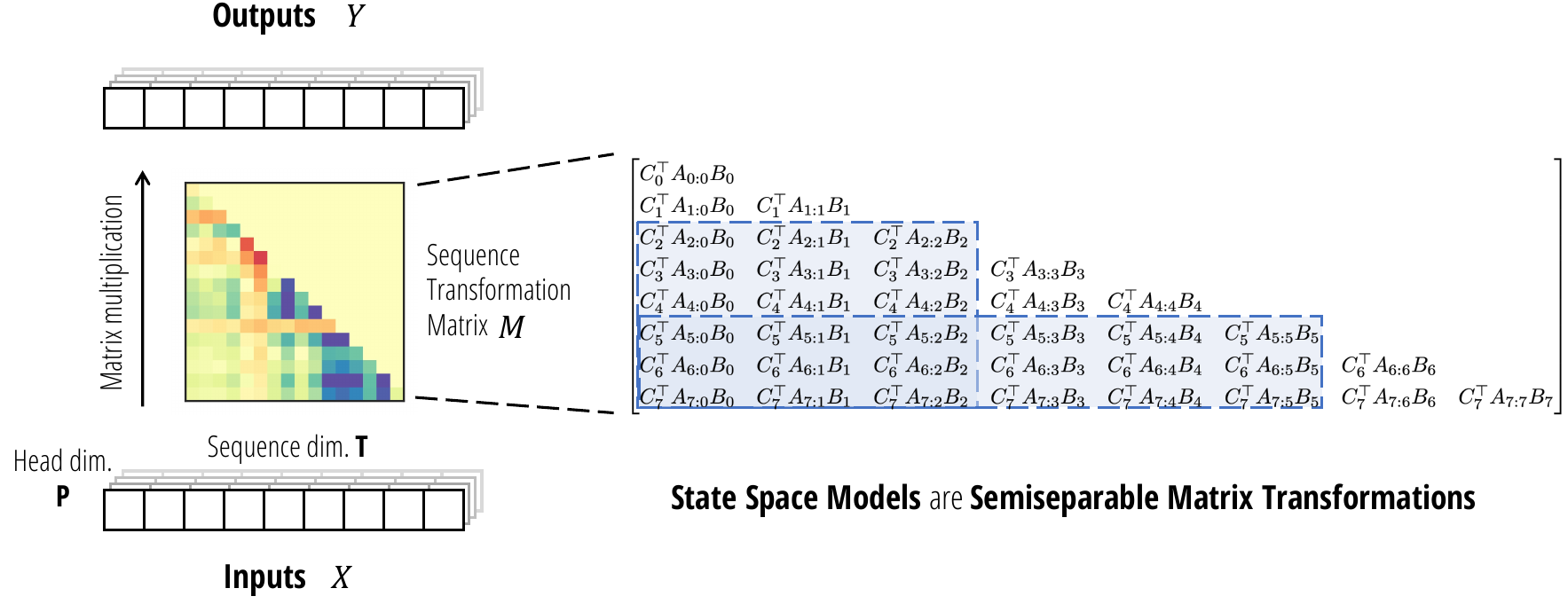}
  \caption{
    (\textbf{State Space Models are Semiseparable Matrices}.)
    As sequence transformations, state space models can be represented as a matrix transformation $M \in \mathbb{R}^{\mathtt{(T,T)}}$ acting on the sequence dimension $\mathtt{T}$,
    sharing the same matrix for each channel in a head (\emph{Left}).
    This matrix is a semiseparable matrix (\emph{Right}), which is a rank-structured matrix where every submatrix contained on-and-below the diagonal (\emph{Blue}) has rank at most $\mathtt{N}$,
    equal to the SSM's state dimension.
  }
  \label{fig:ssm-semiseparable}
\end{figure}

\subsection{Computing State Space Models through Structured Matrix Algorithms}
\label{sec:ssm:algorithms}

The reason \cref{thm:ssm-sss} is important is that it will allow us to \emph{reduce the problem of efficient computation of SSMs (and other sequence models) into efficient algorithms for structured matrix multiplication}.
We briefly provide an overview and defer our main new algorithm to \cref{sec:efficient}, after showing the equivalence of SSMs to other sequence models in
\cref{sec:attention,sec:ssd}.

As previously defined, semiseparable matrices (i.e.\ rank-structured matrices) are a classical type of structured matrix:
\begin{enumerate}[label=(\roman*)]
  \item They have compressed representations such as the SSS form which has only $O(\mathtt{T})$ instead of $O(\mathtt{T}^2)$ parameters.
  \item They have fast algorithms operating directly on the compressed representation.
\end{enumerate}
Furthermore, the parameterization and matrix multiplication cost can be tight in the semiseparable order.
\begin{proposition}[\citet{pernet2023exact}]
  \label{prop:ss-mvm}
  An $\mathtt{N}$-SS matrix of size $\mathtt{T}$ can be represented in $O(\mathtt{NT})$ parameters and has matrix-vector multiplication in time and space $O(\mathtt{NT})$.
\end{proposition}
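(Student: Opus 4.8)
The plan is to obtain both halves of the statement from the equivalence between SSMs and semiseparable matrices (\cref{thm:ssm-sss}) together with a block (``chunked'') decomposition of $M$, invoking \citet{pernet2023exact} only for the tightness of the constant. First I would recall that, by \cref{thm:ssm-sss}, computing $y = Mx$ for an $\mathtt{N}$-SS matrix $M = \mathsf{SSS}(A_{0:\mathtt{T}}, B_{0:\mathtt{T}}, C_{0:\mathtt{T}})$ is exactly running the linear recurrence $h_t = A_t h_{t-1} + B_t x_t$ followed by $y_t = C_t^{\top} h_t$, with $h_t \in \R^{\mathtt{N}}$. Carried out one step at a time this already gives $O(\mathtt{N}^2\mathtt{T})$ time and $O(\mathtt{N}\mathtt{T})$ space, so the remaining task is to shave the extra factor of $\mathtt{N}$ from the time bound while simultaneously getting an $O(\mathtt{N}\mathtt{T})$-size representation; both amount to not storing or applying the $A_t$ as generic dense $\mathtt{N}\times\mathtt{N}$ matrices. (For the diagonal- or scalar-$A$ SSMs used later in this paper this is immediate, since each recurrence step is then $O(\mathtt{N})$; the content of the proposition is the general case.)

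The key step is to coarsen the time axis: partition $[\mathtt{T}]$ into $\lceil \mathtt{T}/\mathtt{N}\rceil$ consecutive chunks of length at most $\mathtt{N}$, and for each chunk record (i)~the $\mathtt{N}\times\mathtt{N}$ diagonal block of $M$ supported on that chunk and (ii)~four $\mathtt{N}\times\mathtt{N}$ transfer maps describing how the chunk reads its incoming boundary state, emits its outgoing boundary state, and responds to its own inputs. This is precisely a state-space realization of $M$ at the coarsened scale, with both block size and boundary-state dimension equal to $\mathtt{N}$; it uses $O(\mathtt{N}^2)$ parameters per chunk, hence $O(\mathtt{N}\mathtt{T})$ in total, and a left-to-right sweep over the $\lceil \mathtt{T}/\mathtt{N}\rceil$ chunks carrying the $\mathtt{N}$-dimensional boundary state performs $O(\mathtt{N}^2)$ arithmetic per chunk, giving $O(\mathtt{N}\mathtt{T})$ time and space. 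The existence of this realization is where semiseparability enters: for each $k$, the strip of $M$ lying below chunk $k$ and to the left of chunk $k$ is a submatrix contained on or below the diagonal, hence has rank at most $\mathtt{N}$ with an explicit rank factorization as in \eqref{eq:sss-rank-factor}, and the nesting of these rank bounds across consecutive $k$ is what forces the outgoing boundary state to be a fixed $\mathtt{N}\times\mathtt{N}$ linear image of the incoming boundary state plus the current chunk's inputs.

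The main obstacle is exactly this existence claim --- that the coarsened realization genuinely fits in $O(\mathtt{N}\mathtt{T})$ parameters, i.e.\ that the chunk-to-chunk boundary-state transition is a bona fide $\mathtt{N}\times\mathtt{N}$ map independent of the full history. Establishing it needs the rank bound on \emph{all} below-diagonal blocks, not merely the chunk-aligned ones, so that consecutive strips share a common $\mathtt{N}$-dimensional column space and the state can be propagated telescopically; this is the substance of the minimal-representation theory for (quasi)separable matrices, and here I would cite \citet{pernet2023exact} rather than reprove it. The matching lower bound $\Omega(\mathtt{N}\mathtt{T})$ on the parameter count is easy --- already the $\lceil \mathtt{T}/\mathtt{N}\rceil$ unconstrained triangular diagonal blocks carry $\Theta(\mathtt{N}\mathtt{T})$ free entries --- and the reduction to the recurrence, the per-chunk arithmetic accounting, and the space bound are all routine.
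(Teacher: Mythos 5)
The paper does not actually prove \cref{prop:ss-mvm}; it imports the result wholesale from \citet{pernet2023exact}, so there is no in-paper proof to match your sketch against. Judged on its own merits, your sketch is sound, and it is pleasing that it essentially reconstructs the machinery the paper deploys elsewhere: your coarsened chunk realization with $\mathtt{N}$-dimensional boundary states is the block decomposition of \cref{sec:efficient} (and the state-passing mode of \cref{sec:scan}) specialized to chunk length $\mathtt{Q}=\mathtt{N}$, and the per-chunk accounting of $O(\mathtt{N}^2)$ parameters and arithmetic over $\mathtt{T}/\mathtt{N}$ chunks is exactly how \cref{thm:algorithm} is costed. One refinement: you are slightly over-cautious about where the cited reference is genuinely needed. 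If the matrix is handed to you in $\mathtt{N}$-SSS form (which is the paper's standing convention for ``$\mathtt{N}$-SS''), the existence of the coarsened realization is elementary rather than deep --- the explicit factorization \eqref{eq:sss-rank-factor} already exhibits the shared rank-$\mathtt{N}$ column space of every below-diagonal strip, and the chunk-to-chunk transition being a single $\mathtt{N}\times\mathtt{N}$ map is just the telescoping $A_{j:i}^\times = A_{j:k}^\times A_{k:i}^\times$; the only cost is the preprocessing to form the chunk-level $A$-products, which is precisely the caveat the paper attaches to \cref{thm:ssm-efficiency}. The citation does real work only when the matrix is given abstractly by the rank condition of \cref{def:semiseparable-rank}, where one needs \cref{prop:sss} (or the minimal-generator theory you allude to) to produce any SSS realization in the first place, and for the exact (not merely asymptotic) parameter counts. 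With that caveat your argument is complete and correct.
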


For example, 1-SS matrices illustrate the essence of this connection.
The matrix $M = \mathsf{1SS}(a)$ is defined by exactly $\mathtt{T}-1$ parameters $a_{0:\mathtt{T}-1} = a_1, \dots, a_{\mathtt{T}-1}$,
and can be computed in $O(\mathtt{T})$ time by following the scalar recurrence \eqref{eq:1ss-recurrence}.

\subsubsection{The Linear (Recurrent) Mode}
\label{sec:ssm:algorithms:linear}

\cref{prop:ss-mvm} can be easily seen in the case of diagonal structured SSMs (S4D~\citep{gu2022parameterization}),
simply by leveraging the state space model formulation \eqref{eq:s6} and unrolling the recurrence.
We provide the formal tensor-contraction algorithm in \eqref{eq:ssm-diagonal}, where the dimension $\mathtt{S}$ is equal to $\mathtt{T}$%
\footnote{A different symbol is required for the contraction notation.}.
\begin{subequations}
  \label{eq:ssm-diagonal}
  \begin{align}
    \label{eq:ssm-diagonal:1}
    Z &= \mathsf{contract}(\mathtt{SP},\mathtt{SN} \to \mathtt{SPN})(X, B) & \mathtt{(S,P,N)} \\
    \label{eq:ssm-diagonal:2}
    H &= \mathsf{contract}(\mathtt{TSN},\mathtt{SPN} \to \mathtt{TPN})(L, Z) & \mathtt{(T,P,N)} \\
    \label{eq:ssm-diagonal:3}
    Y &= \mathsf{contract}(\mathtt{TN},\mathtt{TPN} \to \mathtt{TP})(C, H) & \mathtt{(T,P)}
  \end{align}
\end{subequations}
Here, $L \in \R^{(\mathtt{T},\mathtt{T})}$ is defined as $\mathsf{1SS}(A)$, or in other words $L_{0:\mathtt{T},0:\mathtt{T}} = \mathsf{1SS}(A_{0:\mathtt{T}})$ for $i \in [\mathtt{N}]$.
This algorithm involves three steps corresponding to \eqref{eq:s6}:
\begin{enumerate}[label=(\roman*)]
  \item \emph{expanding} the input $X$ by the input matrix $B$ \eqref{eq:ssm-diagonal:1},
  \item unrolling independent scalar SSM recurrences \eqref{eq:ssm-diagonal:2}, and
  \item \emph{contracting} the hidden state $H$ by the output matrix $C$ \eqref{eq:ssm-diagonal:3}.
\end{enumerate}
Note that we have used the equivalence between scalar SSMs and 1-SS matrices in step \eqref{eq:ssm-diagonal:2}.

\begin{remark}
  We note that \eqref{eq:ssm-diagonal} is a special case of the Mamba (S6) model.
  however, a naive implementation is slow because of the expanded tensors $Z$ and $H$ of size $\mathtt{(T,P,N)}$;
  \citet{gu2023mamba} introduced a hardware-aware implementation to avoid materializing these tensors.
\end{remark}

Surprisingly, \cref{thm:ssm-sss} and \cref{prop:ss-mvm} immediately imply that all SSMs have the same asymptotic efficiency as algorithm \eqref{eq:ssm-diagonal}.
\begin{theorem}
  \label{thm:ssm-efficiency}
  Any state space model (\cref{def:ssm}) of state size $\mathtt{N}$ on sequence length $\mathtt{T}$ can be computed in time $O(\mathtt{TN})$ (not accounting for potential preprocessing).
\end{theorem}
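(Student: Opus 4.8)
The plan is to obtain the bound as an immediate consequence of the two structural facts already established in this section: \cref{thm:ssm-sss}, which identifies the SSM sequence transformation with multiplication by an $\mathtt{N}$-semiseparable matrix in SSS form, and \cref{prop:ss-mvm}, which states that such a matrix admits an $O(\mathtt{NT})$-parameter representation supporting matrix--vector multiplication in $O(\mathtt{NT})$ time and space. No new algorithmic ingredient is needed beyond combining these; the work is in the bookkeeping.

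First I would fix the SSM parameters $(A_{0:\mathtt{T}}, B_{0:\mathtt{T}}, C_{0:\mathtt{T}})$ of state size $\mathtt{N}$ and apply \cref{thm:ssm-sss} to write the transformation as $Y = \mathsf{SSM}(A,B,C)(X) = M X$, where $M = \mathsf{SSS}(A,B,C)$ has entries $M_{ji} = C_j^{\top} A_j \cdots A_{i+1} B_i$ and is $\mathtt{N}$-semiseparable by \cref{lmm:sss-rank-factor}. Since $M$ is shared across the (constant number of) channels of $X \in \R^{\mathtt{(T,P)}}$, it suffices to bound the cost of a single matrix--vector product $M x$ with $x \in \R^{\mathtt{T}}$. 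Then I would invoke \cref{prop:ss-mvm} directly: applying its fast multiplication routine to $M$ produces $Mx$, hence $Y$, in $O(\mathtt{NT})$ time and space, which is the claim. As a concrete instantiation in the diagonal / scalar-$A$ regime, this is exactly \cref{eq:ssm-diagonal}: \cref{eq:ssm-diagonal:1,eq:ssm-diagonal:3} are contractions costing $O(\mathtt{NT})$ per channel and \cref{eq:ssm-diagonal:2} is a batch of $\mathtt{N}$ independent scalar ($1$-SS) recurrences, each $O(\mathtt{T})$ via \cref{eq:1ss-recurrence}.

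The one point needing care — and the reason for the ``not accounting for potential preprocessing'' qualifier — is the mismatch between the representation we are handed and the one \cref{prop:ss-mvm}'s algorithm consumes. The SSS parameterization of \cref{def:sss} nominally stores the $A_t$ as dense $\mathtt{N}\times\mathtt{N}$ matrices, i.e.\ $O(\mathtt{N}^2\mathtt{T})$ parameters, whereas the fast multiplication algorithm (Pernet et al.) operates on a compressed $O(\mathtt{NT})$-parameter form of a semiseparable matrix. I would handle this by observing that the conversion to the compressed form is a one-time preprocessing step explicitly excluded by the theorem statement, and moreover that for the structured $A$ actually used — diagonal, or scalar-times-identity as in SSD — the given parameterization is already $O(\mathtt{NT})$, so no conversion occurs and the bound holds verbatim.

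I expect this representational bookkeeping to be the only (and mild) obstacle; there is no deep computation, since the genuinely nontrivial algorithmic statement has been imported as \cref{prop:ss-mvm} and the equivalence between SSMs and semiseparable matrices as \cref{thm:ssm-sss}. The proof is therefore essentially a two-line deduction together with a short remark reconciling the parameter counts.
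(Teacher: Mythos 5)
Your proposal is correct and follows the paper's own reasoning exactly: the paper presents this theorem as an immediate consequence of \cref{thm:ssm-sss} together with \cref{prop:ss-mvm}, with the ``not accounting for potential preprocessing'' clause covering precisely the conversion from the dense $O(\mathtt{N}^2\mathtt{T})$ parameterization of the $A_t$ to the compressed $O(\mathtt{NT})$ form that the fast multiplication algorithm consumes. Your additional remarks about the diagonal/scalar case and the broadcast over the head dimension are consistent with the paper's surrounding discussion and do not change the argument.
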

We note that this result is new to the structured SSM literature.
In particular, given dense unstructured $A_t$ matrices, the total representation alone seems to be of size $O(\mathtt{TN}^2)$.
Thus \cref{thm:ssm-efficiency} states the non-trivial result that with a pre-processing step, even an unstructured SSM can be computed optimally efficiently,
with upper bound matching the lower bound $O(\mathtt{TN})$ given by the size of $B$ and $C$.

\begin{remark}
  \cref{thm:ssm-efficiency} is perhaps not too surprising in light of the fact that almost all dense matrices over $\R^{\mathtt{(N,N)}}$ are diagonalizable over $\mathbb{C}$,
  leading to the result that \emph{almost all} dense real SSMs are equivalent to a diagonal complex SSM.
  This fact underlies the reason why diagonal SSMs are the most popular form of structured SSM~\citep{gupta2022diagonal,gu2022parameterization,smith2023s5}.
  However, \cref{thm:ssm-efficiency} implies the much stronger result for \emph{all} real SSMs (not just the diagonalizable ones), as well as dense SSMs over other fields (including $\mathbb{C}$ itself).
\end{remark}

In practice, efficiently computable SSMs still require additional structure on $A$,
particularly to avoid the expensive preprocessing step (which both has order $\mathtt{N}$ extra FLOPs and involves hardware-inefficient operations such as singular value decompositions).
These structures are the focus of past work on structured SSMs (e.g.\ S4(D) and Mamba) as well as our new algorithms.
In particular, when slightly stronger structure is imposed on $A$, we will design very hardware-efficient algorithms through block decompositions of the SSM matrix $M = \mathsf{SSS}(A, B, C)$ in \cref{sec:efficient}.

\subsubsection{The Quadratic (Naive) Mode}

We note that there is another way to compute an SSM exposed by our new matrix point of view.
A naive computation of the matrix SSM representation \eqref{eq:ssm-matrix} involves simply materializing the sequence transformation matrix $M=\mathsf{SSS}(A, B, C)$.
This is a $\mathtt{(T,T)}$ matrix, and therefore this naive algorithm will scale quadratically in sequence length.
However, when the sequence length $\mathtt{T}$ is short, this can actually be more efficient than the linear algorithm due to constant factors and the hardware-friendliness of the computation pattern (e.g. leveraging matrix-matrix multiplications).
In fact, for a particular case of structured SSMs, this looks very similar to a quadratic attention computation (\cref{sec:ssd}).

\subsubsection{Summary}

Many sequence models are explicitly motivated or defined as matrix sequence transformations --
most notably Transformers, where the matrix mixer is the attention matrix.
On the other hand, RNNs and SSMs have not previously been described in this way.
By providing an explicit \emph{matrix transformation} form of state space models,
we reveal new ways of understanding and using them.
From a computational perspective, any method of computing the forward pass of a state space model can be
viewed as a matrix multiplication algorithm on semiseparable matrices.
The semiseparable matrix perspective provides one lens into state space duality (SSD),
where the dual modes respectively refer to a linear-time semiseparable matrix multiplication algorithm and quadratic-time naive matrix multiplication.

Moreover, leveraging the rich structure of semiseparable matrices can lead to even better algorithms and more insights (e.g. \cref{sec:efficient,sec:scan}).
In \cref{sec:ssm:properties}, we describe some additional properties of semiseparable matrices.

\section{Structured Masked Attention: Generalizing Linear Attention \texorpdfstring{\\}{} with Structured Matrices}
\label{sec:attention}

In this section we revisit the linear attention framework from first principles.
The main results in this section are a simple tensor-contraction-based proof of linear attention (\cref{prop:linear-attention}),
and our generalized abstraction of structured masked attention in \cref{def:sma}.
\iftoggle{arxiv}{
We note that this section derives the main duality results from a different direction than state space models and can be read completely independently of \cref{sec:ssm}.
}{}

\begin{itemize}
  \item \cref{sec:masked-attention} sets up our framework for variants of attention, with a particular focus on kernel attention and masked kernel attention.
  \item \cref{sec:linear-attention} provides our first main attention result, a simple proof of linear attention through the lens of tensor contractions.
  \item \cref{sec:structured-attention} defines structured masked attention, our generalization of prior attention variants through structured matrices.
\end{itemize}

\subsection{The Attention Framework}
\label{sec:masked-attention}

\subsubsection{Attention}
The basic form of (single-head) attention is a map on three sequences of vectors $(Q, K, V) \mapsto Y$.
\begin{equation}
  \label{eq:kernel-attention}
  \begin{aligned}%
    Q &= \mathsf{input} & \mathtt{(T,N)} \\
    K &= \mathsf{input} & \mathtt{(S,N)} \\
    V &= \mathsf{input} & \mathtt{(S,P)} \\
    G &= QK^\top        & \mathtt{(T,S)} \\
    M &= f(G)           & \mathtt{(T,S)} \\
    Y &= GV             & \mathtt{(T,P)} \\
  \end{aligned}
\end{equation}
We use ``shape annotations'' to indicate the dimensions of tensors, e.g.\ $Q \in \R^{\mathtt{(T,N)}}$.
In this general form, $\mathtt{S}$ and $\mathtt{T}$ represent \emph{source} and \emph{target} sequence lengths,
$\mathtt{N}$ represents the \emph{feature dimension}, and $\mathtt{P}$ represents the \emph{head dimension}.

The most common variant of \textbf{softmax attention} uses a softmax activation $f=\mathsf{softmax}$ to normalize the rows of the $G$ matrix.

\subsubsection{Self-Attention}
Our treatment is motivated by the most important case of self-attention, where
\begin{enumerate}[label=(\roman*)]
  \item the source and target sequences are the same (i.e.\ $\mathtt{S}=\mathtt{T}$),
  \item usually the feature and head dimensions are the same (i.e.\ $\mathtt{N}=\mathtt{P}$),
  \item and $Q, K, V$ are generated by linear projections on the same input vector ($Q = W_Q \cdot X, K = W_K \cdot X, V = W_V \cdot X$).
\end{enumerate}
However, our presentation abstracts away these choices and begins from the $Q, K, V$ matrices.

\begin{remark}
  Our focus is on the self-attention case with equal head and feature dimensions (i.e.\ $\mathtt{S}=\mathtt{T}$ and $\mathtt{N}=\mathtt{P}$),
  which should be used as the running example.
  We define the general formulation of attention not only so that our framework captures variants such as cross-attention,
  but also because separating the notation for dimensions (e.g.\ $\mathtt{S}$ and $\mathtt{T}$) makes the contraction notation proofs
  of our main results in this section more clear.
\end{remark}

\begin{remark}
  \label{rmk:attention-input}
  While attention is usually framed as an operation on these three inputs $Q, K, V$ which are viewed symmetrically,
  the input and output dimensions in \eqref{eq:kernel-attention} indicate otherwise.
  In particular, the feature dimension $\mathtt{N}$ is not present in the output;
  therefore in the case when $\mathtt{S}=\mathtt{T}$ (e.g.\ self-attention),
  we view $V$ as the main input, so that \eqref{eq:kernel-attention} defines a proper sequence transformation $V \mapsto Y$ (\cref{def:sequence-transformation}).
\end{remark}

\subsubsection{Kernel Attention}
\label{sec:attention:kernel}

The step where the softmax function is applied to the Gram matrix $G$ can be decomposed into two parts:
\begin{enumerate}
  \item Exponentiating the $G$ matrix.
  \item Normalizing the $G$ matrix on the $\mathtt{S}$ axis.
\end{enumerate}
We can ignore the normalization term for now, as it amounts to simply passing in $V=1$ and dividing\iftoggle{arxiv}{ (we revisit this in \cref{sec:architecture:kernels})}{}.
The exponentiation term can be viewed as a kernel transformation:
there is an (infinite-dimensional) feature map $\varphi$ such that $\exp(QK^{\top}) = \varphi(Q)\varphi(K)^{\top}$.
By abstracting away the feature map into the definition of $Q$ and $K$ itself (i.e.\ define $Q, K$ as the post-transformed versions),
we can ignore the softmax transformation, and assume that $Q, K$ are arbitrarily generated by kernel feature maps and potentially $\mathtt{N} \neq \mathtt{P}$.

Many instantiations of kernel attention have been proposed, including:
\begin{itemize}
  \item The original Linear Attention \citep{katharopoulos2020transformers} defines the kernel feature map as an arbitrary pointwise activation function, such as $x \mapsto 1+\mathsf{elu}(x)$.
  \item Random Feature Attention (RFA)~\citep{peng2021random} chooses the kernel feature map to approximate softmax attention (i.e. the $\exp$ feature map) using the random Fourier feature approximation of Gaussian kernels~\citep{rahimi2007random}. This involves random projections (i.e.\ multiplying $Q$ and $K$ by a random projection $W$ and applying the activation $x \mapsto (\cos(x), \sin(x))$.
  \item Performer~\citep{choromanski2021rethinking} proposes the fast attention via positive orthogonal random features (FAVOR+).
    The positive random features (PRF) part chooses the kernel feature map to be a random projection followed by the feature map $x \mapsto 2^{-1/2}(\exp(x), \exp(-x))$.
    This choice is motivated so that the kernel elements are positive-valued and provably approximates the softmax attention. [It also proposes choosing the random projections in orthogonal directions, which we do not consider.]
  \item cosFormer~\citep{qin2022cosformer} augment RFA with a cosine reweighting mechanism that incorporates positional information to emphasize locality.
    This effectively passes $Q_t,K_t$ through the feature map $x \mapsto (x \cos(\pi t / 2T), \sin(\pi t / 2T))$.
  \item Linear Randomized Attention~\citep{zheng2022linear} generalize RFA from the perspective of importance sampling, and generalize it to provide better estimates of the full softmax kernel (rather than just the $\exp$-transformed numerator).
\end{itemize}

Other related attention variants include Linformer~\citep{wang2020linformer} and Nystr\"{o}former~\citep{xiong2021nystromformer}, which both use low-rank approximations of the attention matrix $M$ (and are thus compatible with equation \eqref{eq:kernel-attention}), through random projections (Johnson-Lindenstrauss) and kernel approximation (the Nystr\"{o}m method) respectively.

\subsubsection{Masked (Kernel) Attention}

Let $L$ be a mask of shape $\mathtt{(T,S)}$.
Most commonly, in the \emph{autoregressive} self-attention case when $\mathtt{S}=\mathtt{T}$,
$L$ may be a lower-triangular matrix of $1$'s representing a \emph{causal mask}.
Besides enforcing causality, many other types of masks can be applied -- in particular various sparsity patterns such as banded, dilated, or block diagonal -- which are motivated by reducing the complexity of dense attention. %

Masked attention is usually written in matrix notation as
\begin{equation}%
  \label{eq:sha-quad-matrix}
  y = (L \circ (QK^\top)) \cdot V
  .
\end{equation}
More precisely, with shape annotations and breaking this down into the precise sequence of computations:
\begin{equation}
  \label{eq:sha-quad-0}
  \begin{aligned}%
    G &= QK^\top & \mathtt{(T,S)} \\
    M &= G \circ L & \mathtt{(T,S)} \\
    Y &= M V & \mathtt{(T,P)}
  \end{aligned}
\end{equation}

Our improved derivation of attention variants in this section starts by noticing that this formula can be written as a \emph{single contraction}:
\begin{equation}
  \label{eq:sha}
  Y = \mathsf{contract}(\mathtt{TN},\mathtt{SN},\mathtt{SP},\mathtt{TS} \to \mathtt{TP})(Q, K, V, L)
\end{equation}

and the algorithm in \eqref{eq:sha-quad-0} can be reframed as computing \eqref{eq:sha} by a particular ordering of pairwise contractions
\begin{subequations}
  \label{eq:sha-quad}
  \begin{align}%
    \label{eq:sha-quad:1}
    G &= \mathsf{contract}(\mathtt{TN, SN} \to \mathtt{TS})(Q, K) && \qquad \mathtt{(T,S)} \\
    \label{eq:sha-quad:2}
    M &= \mathsf{contract}(\mathtt{TS, TS} \to \mathtt{TS})(G, L) && \qquad \mathtt{(T,S)} \\
    \label{eq:sha-quad:3}
    Y &= \mathsf{contract}(\mathtt{TS, SP} \to \mathtt{TP})(M, V) && \qquad \mathtt{(T,P)}
  \end{align}
\end{subequations}

\subsection{Linear Attention}
\label{sec:linear-attention}

Linear attention, and many other variants of efficient attention, is often motivated by changing the order of matrix associativity in the core attention computation $(QK^\top)V = Q(K^\top V)$.
However when the mask is added, the derivation is somewhat less straightforward (for example, the original paper~\citep{katharopoulos2020transformers} and variants~\citep{sun2023retentive} state the formula without proof).

Roughly, the linear attention method claims that the following formula
is equivalent to \eqref{eq:sha-quad-matrix}, which must be verified by expanding the sum and tracking indices carefully.
\begin{equation}
  \label{eq:sha-lin-matrix}
  Y = Q \cdot \mathsf{cumsum}(K^\top V)
\end{equation}

\begin{proposition}[\citep{katharopoulos2020transformers}]
  \label{prop:linear-attention}
  Autoregressive kernel attention, i.e.\ masked kernel attention with the causal mask, can be computed in $O(T)$ time by a recurrence taking constant time per step.
\end{proposition}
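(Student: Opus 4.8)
The plan is to exploit the single-contraction form \eqref{eq:sha} of masked attention, namely $Y = \mathsf{contract}(\mathtt{TN},\mathtt{SN},\mathtt{SP},\mathtt{TS} \to \mathtt{TP})(Q, K, V, L)$, together with the fact that this four-way contraction may be evaluated by \emph{any} ordering of pairwise contractions. The quadratic algorithm \eqref{eq:sha-quad} contracts the $Q,K$ pair first; for the causal mask I would instead contract $K$ with $V$ first, then apply $L$, then contract with $Q$:
\begin{subequations}
  \begin{align}
    Z_{snp} &= \mathsf{contract}(\mathtt{SN},\mathtt{SP} \to \mathtt{SNP})(K, V) && \mathtt{(S,N,P)} \\
    H_{tnp} &= \mathsf{contract}(\mathtt{TS},\mathtt{SNP} \to \mathtt{TNP})(L, Z) && \mathtt{(T,N,P)} \\
    Y_{tp}  &= \mathsf{contract}(\mathtt{TN},\mathtt{TNP} \to \mathtt{TP})(Q, H) && \mathtt{(T,P)}
  \end{align}
\end{subequations}
Correctness is a one-line substitution: $Y_{tp} = \sum_n Q_{tn}\sum_s L_{ts} K_{sn} V_{sp} = \sum_s L_{ts}(Q_t^\top K_s) V_{sp}$, which is exactly \eqref{eq:sha-quad-matrix} with $L$ the causal mask. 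This is the payoff of the contraction viewpoint over the index-chasing in the original derivations.

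The key observation is that the middle step is a cumulative sum along the sequence axis. When $L$ is the lower-triangular matrix of ones, $L = \mathsf{1SS}(a)$ with every $a_i = 1$, so by the scalar recurrence \eqref{eq:1ss-recurrence} the map $Z \mapsto H$ is computed by $H_t = H_{t-1} + Z_t$ with $H_{-1} = 0$, where each $H_t, Z_t \in \R^{\mathtt{(N,P)}}$. Unrolling this yields the recurrent form of linear attention: maintain the running state $H_t = \sum_{s \le t} K_s V_s^\top \in \R^{\mathtt{(N,P)}}$ via $H_t = H_{t-1} + K_t V_t^\top$, and emit $Y_t = H_t^\top Q_t$. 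Each step performs $O(\mathtt{NP})$ work independent of $T$ and of the other positions, so the whole sequence costs $O(T\mathtt{NP})$ time and $O(\mathtt{NP})$ memory, never materializing the $\mathtt{(T,S)}$ matrix $G$ or the $\mathtt{(T,N,P)}$ tensor $H$ — i.e.\ linear in $T$ with constant time per step. The softmax normalizer is handled in the same way: it is the $V = \mathbf{1}$ case of the identical recurrence (an extra state $\sum_{s \le t} K_s \in \R^{\mathtt{N}}$), after which one divides, exactly as set up in \cref{sec:attention:kernel}.

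I do not expect a genuinely hard step here; the entire content is recognizing the correct contraction order and spotting \eqref{eq:1ss-recurrence} inside it. The one place warranting a sentence of care is justifying the reassociation — but tensor contraction is associative and commutative in its arguments, so every bracketing of \eqref{eq:sha} computes the same tensor, and \eqref{eq:sha-quad} and the linear form above are simply two such bracketings. This same template (write the model as one contraction, then reorder the contractions) is what will later generalize the argument from the causal mask to arbitrary structured masks $L$, with the cumsum in the middle step replaced by multiplication against a more general structured matrix.
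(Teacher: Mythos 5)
Your proposal is correct and follows essentially the same route as the paper: reassociate the four-way contraction \eqref{eq:sha} into the order \eqref{eq:sha-lin} (expand $K,V$ into $Z$, multiply by $L$, contract with $Q$), and observe that for the causal mask the middle step is exactly the cumulative sum \eqref{eq:1ss-recurrence}, giving the constant-time-per-step recurrence $H_t = H_{t-1} + K_t V_t^{\top}$, $Y_t = H_t^{\top} Q_t$. The normalizer-via-$V=\mathbf{1}$ remark and the forward pointer to general structured masks $L$ also match the paper's treatment.
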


\subsubsection{A Tensor Contraction Proof of Linear Attention}

We present a simple and rigorous derivation of linear attention that will also immediately reveal how to generalize it.
The main idea is to perform the contraction \eqref{eq:sha} in an alternate order.
We avoid ambiguous matrix notation and work directly with contraction notation:
\begin{subequations}
  \label{eq:sha-lin}
  \begin{align}
    \label{eq:sha-lin:1}
    Z &= \mathsf{contract}(\mathtt{SP},\mathtt{SN} \to \mathtt{SPN})(V, K) & \mathtt{(S,P,N)} \\
    \label{eq:sha-lin:2}
    H &= \mathsf{contract}(\mathtt{TS},\mathtt{SPN} \to \mathtt{TPN})(L, Z) & \mathtt{(T,P,N)} \\
    \label{eq:sha-lin:3}
    Y &= \mathsf{contract}(\mathtt{TN},\mathtt{TPN} \to \mathtt{TP})(Q, H) & \mathtt{(T,P)}
  \end{align}
\end{subequations}

Intuitively, we interpret this contraction order as follows.

The first step \eqref{eq:sha-lin:1} performs an ``expansion'' into more features, by a factor of the feature dimension $\mathtt{N}$.
The third step \eqref{eq:sha-lin:3} contracts the expanded feature dimension away.
If $K$ is viewed as the input (\cref{rmk:attention-input}),
then $V$ and $Q$ perform the expansion and contraction, respectively.

The second step is the most critical, and explains the \emph{linear} part of linear attention.
First notice that \eqref{eq:sha-lin:2} is just a direct matrix multiplication by $L$ (since the $\mathtt{(P,N)}$ axes can be flattened).
Also note that this is the only term that involves both $\mathtt{T}$ and $\mathtt{S}$ axes,
hence should have $\Omega(\mathtt{TS})$ complexity (i.e.\ quadratic in sequence length).
However, when the mask $L$ is the standard causal attention mask (lower triangular $1$'s),
matrix-vector multiplication by $L$ is identical to a feature-wise cumulative sum
\begin{align*}%
  y = \begin{bmatrix} 1 \\ \vdots & \ddots \\ 1 & \dots & 1 \end{bmatrix} x
  \quad\iff\quad
  \begin{aligned}
    y_0 &= x_0 \\
    y_t &= y_{t-1} + x_t
  \end{aligned}
  .
\end{align*}

\subsection{Structured Masked Attention}
\label{sec:structured-attention}

With the tensor contraction perspective of masked attention \eqref{eq:sha-lin},
we can immediately see that the crux of the original linear attention is the fact that \emph{matrix-vector multiplication by the causal mask is equivalent to the cumulative sum operator}.

However, we observe that there is no reason the attention mask has to be all $1$'s.
All that is necessary for linear attention to be fast is for $L$ to be a \emph{structured matrix},
which by definition are those that have fast matrix multiplication\iftoggle{arxiv}{ (\cref{sec:overview:structured-matrix})}{}.
In particular, we can use \emph{any mask matrix} $L$ that has sub-quadratic (ideally linear) matrix-vector multiplication,
which would have the same complexity as standard linear attention by speeding up the bottleneck equation \eqref{eq:sha-lin:2}.

\begin{definition}%
  \label{def:sma}
  \textbf{Structured masked attention (SMA)} (or \textbf{structured attention} for short) is defined as a \emph{function} on queries/keys/values $Q, K, V$ as well as any \emph{structured matrix} $L$ (i.e.\ has sub-quadratic matrix multiplication), through the 4-way tensor contraction
  \begin{align*}
    Y = \mathsf{contract}(\mathtt{TN},\mathtt{SN},\mathtt{SP},\mathtt{TS} \to \mathtt{TP})(Q, K, V, L)
    .
  \end{align*}
  The SMA \textbf{quadratic mode algorithm} is the sequence of pairwise contractions defined by \eqref{eq:sha-quad},
  which corresponds to the standard (masked) attention computation.

  The SMA \textbf{linear mode algorithm} is the sequence of pairwise contractions defined by \eqref{eq:sha-lin},
  where step \eqref{eq:sha-lin:2} is optimized through the subquadratic structured matrix multiplication.
\end{definition}
We can instantiate structured masked attention to any given class of matrix structure.
Some examples include (\cref{fig:sma}):
\begin{itemize}
  \item Linear attention uses a causal mask.
  \item RetNet~\citep{sun2023retentive} uses a decay mask $L_{ij} = \gamma^{i-j} \cdot \mathbb{I}[j \ge i]$ for some decay factor $\gamma \in [0, 1]$.
  \item The decay mask could be generalized to a Toeplitz matrix $L_{ij} = \alpha_{i-j}$ for some learnable (or input-dependent) set of parameters $\alpha \in \mathbb{R}^\mathtt{T}$. This can be interpreted as a form of relative positional encoding, reminiscent of other methods such as AliBi~\citep{press2022train} but multiplicative instead of additive.
  \item Another variant could use a Fourier matrix $L_{ij} = \omega^{ij / \mathtt{T}}$ to encode positional structure a different way.
\end{itemize}
In \cref{sec:ssd}, we consider semiseparable SMA, which defines our main SSD model.

\iftoggle{arxiv}{
\begin{figure}[!t]
  \centering
  \includegraphics[width=0.9\linewidth]{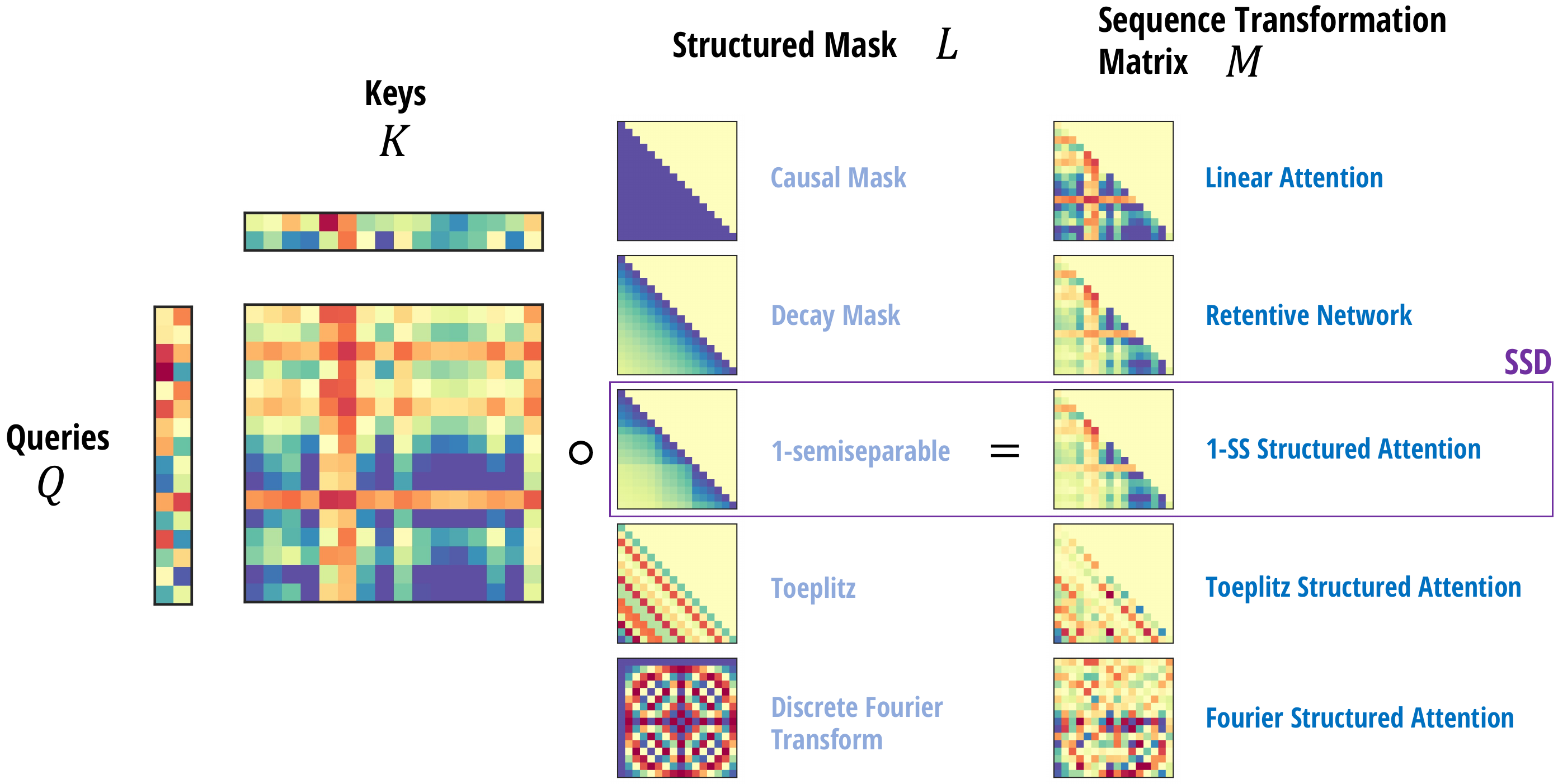}
  \caption{
    (\textbf{Structured Masked Attention}.)
    SMA constructs a masked attention matrix $M = QK^\top \circ L$ for any structured matrix $L$, which defines a matrix sequence transformation $Y = MV$.
    All instances of SMA have a dual subquadratic form induced by a different contraction ordering, combined with the efficient structured matrix multiplication by $L$.
    Previous examples include Linear Attention~\citep{katharopoulos2020transformers} and RetNet~\citep{sun2023retentive}.
    Beyond SSD (1-semiseparable SMA), the focus of this paper, many other potential instantiations of structured attention are possible.
  }
  \label{fig:sma}
\end{figure}
}{}

\subsubsection{Summary: The Dual Forms of Masked Attention}

Standard (masked kernel) attention is often conflated between a function and an algorithm.
Separating this distinction presents a clear way to understand different variants of attention.
\begin{itemize}
  \item We view \textbf{masked attention} as a particular \emph{function}~\eqref{eq:sha}.
  \item The standard \textbf{quadratic attention} computation \eqref{eq:sha-quad} can be viewed as an \emph{algorithm} to compute the function.
  \item \textbf{Linear attention} \eqref{eq:sha-lin} is an alternate algorithm to compute the same function.
\end{itemize}

Moreover, in this case
\begin{itemize}
  \item The masked attention function is simply a particular \emph{contraction on four terms}.
  \item The quadratic and linear attention algorithms are simply \emph{two different orders to perform the contractions}.
\end{itemize}
It is known that contraction orderings can make large differences in computation complexity, leading to the quadratic vs.\ linear split.
Just as state space models are a transformation that can be computed in multiple ways, with dual quadratic vs.\ linear forms (\cref{sec:ssm:algorithms}),
linear attention has a similar duality that results from two contraction orders.
\iftoggle{arxiv}{
In fact, these turn out to be different perspectives on the same underlying duality, which we make explicit in \cref{sec:ssd}.
}{}

\section{State Space Duality}
\label{sec:ssd}

In \cref{sec:ssm,sec:attention}, we defined structured state space models and structured attention,
discussed their properties, and showed that they both have a quadratic algorithm and a linear algorithm.
This section connects them together.
Our main result is showing that a particular case of structured state space models coincides with a particular case of structured attention,
and that the linear-time SSM algorithm and quadratic-time kernel attention algorithm are dual forms of each other.
\begin{itemize}
  \item \cref{sec:ssd:quadratic-ssm} specializes state space models to scalar structure, where the naive quadratic computation can be seen as an instance of kernel attention.
  \item \cref{sec:ssd:1ss-sma} specializes structured masked attention to semiseparable SMA, which characterizes masked attention with efficient autoregression.
  \item \cref{sec:ssd:ssd} summarizes the connection between structured masked attention and structured state space models, termed structured state space duality.
\end{itemize}

\subsection{Scalar-Identity Structured State Space Models}
\label{sec:ssd:quadratic-ssm}

In \cref{sec:ssm} we showed that state space models are equivalent to semiseparable matrix transformations,
resulting in both a linear recurrent form and quadratic naive form.

Recall that SSMs are defined by $y = \mathsf{SSM}(A, B, C)(x)$, and the matrix form of SSMs uses the SSS (sequentially semiseparable) representation
$M = \mathsf{SSS}(A, B, C)$ where
$M_{ji} = C_j^{\top} A_{j:i} B_i$ (equation \eqref{eq:ssm-matrix}).

Now let us consider the case where $A_j$ is simply a scalar;
in other words, an instantiation of a structured SSM where the $A$ matrices are \emph{extremely} structured: $A = aI$ for scalar $a$ and identity matrix $I$.
Then we can rearrange
\begin{align*}
  M_{ji} = A_{j:i} \cdot (C_j^{\top}B_i)
.
\end{align*}
And this can be vectorized into
\begin{align*}%
  L &\coloneqq \mathsf{1SS}(a) \\
  M &= L \circ (C B^{\top}) \\
\end{align*}
where $B, C \in \R^{\mathtt{(T,N)}}$.

Using this formulation, the full output $Y=MX$ is computed precisely as
\begin{equation}
  \label{eq:ssm-quad}
  \begin{aligned}%
    G     & = \mathsf{contract}(\mathtt{TN, SN \to TS})(C, B) & & \qquad \mathtt{(T,S)} \\
    M     & = \mathsf{contract}(\mathtt{TS, TS \to TS})(G, L) & & \qquad \mathtt{(T,S)} \\
    Y     & = \mathsf{contract}(\mathtt{TS, SP \to TP})(M, X) & & \qquad \mathtt{(T,P)}
  \end{aligned}
\end{equation}
where $\mathtt{S}=\mathtt{T}$.
But this is exactly the same as original definition of masked kernel attention definition \eqref{eq:sha-quad}!

Therefore, as alluded to in \cref{sec:ssm:algorithms},
\emph{naively computing the scalar structured SSM---by materializing the semiseparable matrix $M$ and performing quadratic matrix-vector multiplication---is exactly the same as quadratic masked kernel attention.}

\subsection{1-Semiseparable Structured Masked Attention}
\label{sec:ssd:1ss-sma}

Structured masked attention allows for the use of any structured mask $L$.
When $L$ is the causal mask, it is standard linear attention.
Note that the causal mask is $L = \mathsf{SS}(1_T)$, i.e.\ the $1$-SS mask is generated by $a_t=1$ in definition~\eqref{eq:1ss}.
This motivates generalizing $L$ to the class of 1-semiseparable masks, or \textbf{1-semiseparable structured masked attention (1-SS SMA)},
where the $\mathsf{cumsum}$ in linear attention's recurrence is replaced by a more general recurrence -- the scalar SSM scan, i.e.\ 1-semiseparable matrix multiplication~(\cref{sec:ssm:1-ss}).

Finally, the most important reason we consider 1-semiseparable SMA is because the linear form for computing it is a special case of diagonal state space model.
The linear form of SMA is algorithm \eqref{eq:sha-lin}, where the bottleneck step \eqref{eq:sha-lin:2} can be viewed as matrix multiplication by the 1-SS mask.
In \cref{sec:ssm}, we also wrote out the computation for a diagonal SSM \eqref{eq:ssm-diagonal}, where the bottleneck step \eqref{eq:ssm-diagonal:2} is a scalar SSM recurrence which is equivalent to 1-SS multiplication.
The only difference is that \eqref{eq:ssm-diagonal:2} has an extra $\mathtt{N}$ dimension in $L$, because the matrix $A$ is a diagonal matrix of size $\mathtt{N}$.
This $\mathtt{N}$ dimension would disappear if all diagonal entries of $A$ are the same,
which results in \cref{cor:1ss-sma}.

\begin{corollary}
  \label{cor:1ss-sma}
  1-SS SMA (masked attention with 1-semiseparable structured matrices $L$)~\eqref{eq:sha-lin} is a special case of a diagonal SSM~\eqref{eq:ssm-diagonal} where the diagonal matrix is a scalar multiple of the identity.
\end{corollary}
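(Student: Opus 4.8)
\textbf{Proof proposal for \cref{cor:1ss-sma}.}

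The plan is to show that the linear-mode SMA computation \eqref{eq:sha-lin} is literally the diagonal-SSM computation \eqref{eq:ssm-diagonal} once the $\mathtt{N}$-dimensional diagonal matrix $A$ is forced to be scalar times identity. First I would set up the dictionary between the two notations: in SMA the inputs are $(Q, K, V)$ with $Q \in \R^{\mathtt{(T,N)}}$, $K \in \R^{\mathtt{(S,N)}}$, $V \in \R^{\mathtt{(S,P)}}$ and mask $L \in \R^{\mathtt{(T,S)}}$, whereas in the SSM picture the inputs are $(A, B, C)$ and $X \in \R^{\mathtt{(T,P)}}$ with $B, C \in \R^{\mathtt{(T,N)}}$ and $A_t$ diagonal of size $\mathtt{N}$. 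The correspondence is $V \leftrightarrow X$ (the ``main input'' in the sense of \cref{rmk:attention-input}), $K \leftrightarrow B$, $Q \leftrightarrow C$, and $\mathtt{S}=\mathtt{T}$ in the self-attention setting.

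Next I would compare the three contraction steps side by side. Step \eqref{eq:sha-lin:1}, $Z = \mathsf{contract}(\mathtt{SP},\mathtt{SN} \to \mathtt{SPN})(V, K)$, is identical to \eqref{eq:ssm-diagonal:1}, $Z = \mathsf{contract}(\mathtt{SP},\mathtt{SN} \to \mathtt{SPN})(X, B)$, under the dictionary above. Step \eqref{eq:sha-lin:3}, $Y = \mathsf{contract}(\mathtt{TN},\mathtt{TPN} \to \mathtt{TP})(Q, H)$, is identical to \eqref{eq:ssm-diagonal:3}, $Y = \mathsf{contract}(\mathtt{TN},\mathtt{TPN} \to \mathtt{TP})(C, H)$. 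The only step that differs is the middle one: \eqref{eq:sha-lin:2} contracts $Z$ against a mask $L \in \R^{\mathtt{(T,S)}}$, giving $H = \mathsf{contract}(\mathtt{TS},\mathtt{SPN} \to \mathtt{TPN})(L, Z)$, while \eqref{eq:ssm-diagonal:2} contracts against $L \in \R^{\mathtt{(T,S,N)}}$ with $L_{:,:,n} = \mathsf{1SS}(A_{:,n,n})$, giving $H = \mathsf{contract}(\mathtt{TSN},\mathtt{SPN} \to \mathtt{TPN})(L, Z)$. So the whole statement reduces to the claim that, when every diagonal entry $A_{t,n,n}$ equals a common scalar $a_t$, the $\mathtt{N}$-indexed family of $1$-SS matrices $\mathsf{1SS}(A_{:,n,n})$ collapses to a single $1$-SS matrix $\mathsf{1SS}(a)$ independent of $n$, and the $\mathtt{TSN}\times\mathtt{SPN}\to\mathtt{TPN}$ contraction degenerates to the $\mathtt{TS}\times\mathtt{SPN}\to\mathtt{TPN}$ contraction of \eqref{eq:sha-lin:2}.

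The verification of that last point is the only real content, and it is short: by definition \eqref{eq:1ss}, $\mathsf{1SS}(A_{:,n,n})_{ji} = \prod_{k=i+1}^{j} A_{k,n,n} = \prod_{k=i+1}^{j} a_k = a_{j:i}^\times$, which is the same for every $n$; hence the tensor $L \in \R^{\mathtt{(T,S,N)}}$ in \eqref{eq:ssm-diagonal:2} is constant along its $\mathtt{N}$ axis, equal to $\mathsf{1SS}(a)$ broadcast over $n$, and contracting a tensor that is broadcast over an index against a tensor sharing that index is the same as contracting without that index at all --- i.e. $\sum_s \sum_? L_{tsn} Z_{spn}$ with $L_{tsn}$ independent of $n$ equals $\sum_s \mathsf{1SS}(a)_{ts} Z_{spn}$, which is exactly \eqref{eq:sha-lin:2}. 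Therefore \eqref{eq:ssm-diagonal} with $A = aI$ computes exactly the function \eqref{eq:sha-lin} with mask $L = \mathsf{1SS}(a)$, i.e. 1-SS SMA, proving the corollary. I do not expect any genuine obstacle here; the subtlety worth stating carefully is precisely the broadcasting/degeneracy argument for the middle contraction, and making explicit that the ``scalar multiple of the identity'' hypothesis is exactly what kills the spurious $\mathtt{N}$ axis that distinguishes a general diagonal SSM from plain linear attention with a $1$-SS mask.
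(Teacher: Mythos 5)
Your proposal is correct and follows essentially the same route as the paper: the paper's own justification (in the paragraph preceding the corollary) is precisely that steps \eqref{eq:sha-lin:1} and \eqref{eq:sha-lin:3} coincide with \eqref{eq:ssm-diagonal:1} and \eqref{eq:ssm-diagonal:3} under the dictionary $(V,K,Q)\leftrightarrow(X,B,C)$, and that the extra $\mathtt{N}$ axis in the mask of \eqref{eq:ssm-diagonal:2} disappears when all diagonal entries of $A_t$ are equal, collapsing it to \eqref{eq:sha-lin:2}. Your explicit broadcasting computation $\mathsf{1SS}(A_{:,n,n})_{ji}=a_{j:i}^{\times}$ independent of $n$ just makes the paper's one-line observation precise.
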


While \cref{cor:1ss-sma} says that 1-SS SMA has an efficient recurrent form,
we can also show a converse result that characterizes which instances of SMA has efficient autoregression.
\begin{theorem}
  \label{thm:ss-sma}
  For any instantiation of structured masked attention (\cref{def:sma}) that is an autoregressive process with bounded order,
  the structured mask $L$ must be a semiseparable matrix.
\end{theorem}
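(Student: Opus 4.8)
The plan is to unpack the phrase ``autoregressive process with bounded order'' and show it forces a rank condition on the lower-triangular portion of $L$, which by \cref{def:semiseparable-rank} is exactly semiseparability. An autoregressive process computing $Y$ from $V$ with bounded order $\mathtt{N}$ should mean that there is a hidden state $h_t \in \R^\mathtt{N}$ (or something of bounded size) evolving by a fixed-width recurrence, so that $Y_t$ is produced from $h_t$ and $h_t$ is produced from $h_{t-1}$ and the current input. First I would fix the SMA function $Y = \mathsf{contract}(\mathtt{TN},\mathtt{SN},\mathtt{SP},\mathtt{TS} \to \mathtt{TP})(Q,K,V,L)$ and write out its entrywise form $Y_t = \sum_{s \le t} L_{ts} (Q_t^\top K_s) V_s$ (the sum runs over $s \le t$ because an autoregressive process cannot see the future, which already forces $L$ to be lower triangular). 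Define $P_{ts} \coloneqq L_{ts}(Q_t^\top K_s)$, so $Y_t = \sum_{s \le t} P_{ts} V_s$, i.e. $Y = PV$ with $P$ lower triangular; the goal becomes showing the lower-triangular matrix $P$ (or rather $L$ itself) is semiseparable.

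The key step is to extract from the bounded-order autoregressive assumption a uniform rank-$\mathtt{N}$ factorization of every off-diagonal block of $P$. If $Y$ is computed by a width-$\mathtt{N}$ recurrence $h_t = \mathcal{F}_t(h_{t-1}, V_t)$, $Y_t = \mathcal{G}_t(h_t)$ that must agree with $\sum_{s\le t} P_{ts}V_s$ for \emph{all} inputs $V$, then linearity in $V$ forces the recurrence to be linear: $h_t = \bar A_t h_{t-1} + \bar B_t V_t$ and $Y_t = \bar C_t^\top h_t$ for some $\bar A_t \in \R^{\mathtt{(N,N)}}$, $\bar B_t, \bar C_t$ of the appropriate shapes. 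Unrolling (exactly as in \cref{eq:ssm-matrix}) gives $Y_t = \sum_{s\le t} \bar C_t^\top \bar A_{t:s}^\times \bar B_s V_s$, and matching coefficients of the free variables $V_s$ yields $P_{ts} = \bar C_t^\top \bar A_{t:s}^\times \bar B_s$. That is precisely the $\mathtt{N}$-SSS representation of $P$, so by \cref{lmm:sss-rank-factor} $P$ is $\mathtt{N}$-semiseparable. Finally I would transfer this from $P$ back to $L$: since $L_{ts} = P_{ts}/(Q_t^\top K_s)$ wherever the denominator is nonzero, and the block factorization \eqref{eq:sss-rank-factor} of $P$ on a block $t \in [j,j')$, $s \in [i',i)$ is $(\text{column vector in }t)\cdot(\text{scalar})\cdot(\text{row vector in }s)$, dividing entrywise by the rank-one matrix $(Q_t^\top K_s)_{t,s}$ (outer product of a $t$-vector and an $s$-vector, using that $Q_t^\top K_s = \langle Q_t, K_s\rangle$) preserves rank $\le \mathtt{N}$, so every lower-triangular block of $L$ has rank $\le \mathtt{N}$ and $L$ is semiseparable.

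The main obstacle is making rigorous the step ``bounded-order autoregressive $\Rightarrow$ linear recurrence of bounded width,'' i.e. pinning down the definition so that nonlinear state updates are genuinely excluded or shown to be reducible. The honest move is to take ``autoregressive process with bounded order $\mathtt{N}$'' to mean by definition that $Y = \mathsf{SMA}(Q,K,V,L)$, viewed as a map $V \mapsto Y$, admits a state-space realization with states in $\R^\mathtt{N}$; since the map is manifestly linear in $V$, a minimal realization is linear (standard realization theory for linear sequence-to-sequence maps), which is what the argument above uses. A secondary subtlety is the division by $Q_t^\top K_s$: on the diagonal and wherever $Q_t^\top K_s = 0$ the identity $L_{ts} = P_{ts}/(Q_t^\top K_s)$ is vacuous, but the rank bound on a block of $L$ still follows by a density/limiting argument or by noting that the claimed semiseparability is a statement about the generic $L$ compatible with a bounded-order process, so one may assume $Q_t^\top K_s \neq 0$ off the diagonal. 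I expect the proof to be short once the definition is fixed; essentially all the content is the realization-theory observation plus the rank-preservation-under-Hadamard-division-by-rank-one fact.
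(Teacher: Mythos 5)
Your proof is essentially sound under your chosen formalization, but it takes a genuinely different route from the paper, starting with a different reading of the hypothesis. The paper (\cref{thm:ss-sma-formal} in the appendix) defines ``autoregressive of order $k$'' in the classical time-series sense: $y_t = \mu_t x_t + \ell_{t1}y_{t-1} + \dots + \ell_{tk}y_{t-k}$, i.e.\ the output depends on the last $k$ \emph{outputs}, not on a width-$\mathtt{N}$ hidden state. Rearranging and vectorizing shows that $L^{-1}$ is a $(k{+}1)$-banded lower-triangular matrix; since banded matrices are semiseparable and semiseparability is closed under inversion (\cref{prop:ss-closure}, via Woodbury), $L$ is semiseparable of order about $k{+}1$. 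Crucially, the paper's formal statement is about $L$ itself as a matrix transformation, so it never has to disentangle $L$ from $QK^{\top}$. Your proof instead interprets ``bounded order'' as the existence of a bounded-width state realization of the full map $V \mapsto Y$, invokes linear realization theory to make the recurrence linear, reads off the $\mathtt{N}$-SSS form of $P = L \circ QK^{\top}$ via \cref{eq:ssm-matrix} and \cref{lmm:sss-rank-factor}, and then divides out the rank-one blocks of $QK^{\top}$ to transfer the rank bound to $L$. That last Hadamard-division step is a nice observation and is correct where $Q_t^{\top}K_s \neq 0$, and your approach has the virtue of working directly with the rank characterization (\cref{def:semiseparable-rank}) and of matching the ``efficient autoregression $=$ bounded state'' intuition more closely than the paper's AR($k$) definition does. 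The costs are the two caveats you already identify: (i) the reduction from a possibly nonlinear bounded-width realization to a linear one is nontrivial and you resolve it by building linearity into the definition, whereas the paper's definition makes the linear structure explicit from the outset; and (ii) where $QK^{\top}$ vanishes the SMA function does not constrain $L$ at all, so your conclusion about $L$ holds only generically or up to modification on those entries --- a gap the paper avoids entirely by positing the autoregressive structure on $L$ rather than on the composite map. Neither caveat is fatal, but both are places where the paper's more concrete formalization buys a cleaner argument.
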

In other words, efficient autoregressive attention is general \emph{semiseparable SMA}.
\cref{thm:ss-sma} is proved in \cref{sec:theory-details:ssm-sma}.

\begin{remark}
  While 1-semiseparable SMA is a special case of a state space model,
  general semiseparable SMA is strictly more expressive than 1-SS SMA, and cannot be described by a standard SSM.
  However, the semiseparable multiplication by $L$ and the linear form of SMA (equation \eqref{eq:sha-lin:1}) each involve an expansion and contraction step, and can be absorbed into a similar instance of 1-SS SMA with a single (larger) expansion.
\end{remark}

In summary, 1-semiseparable structured attention is the most important case of SMA, because it is:
\begin{itemize}
  \item a natural generalization of linear attention with an input-dependent recurrence.
  \item the simplest case of general semiseparable attention, which is equivalent to efficient autoregressive attention.
  \item a special case of a diagonal state space model.
\end{itemize}

\subsection{Structured State-Space Duality (SSD)}
\label{sec:ssd:ssd}

To summarize our results:
\begin{itemize}
  \item Structured state-space models (\cref{sec:ssm}) are a model usually defined through a linear-time recurrence.
    However, by expanding the matrix formulation characterizing its linear sequence-to-sequence transformation, one can derive a quadratic form.
  \item Attention variants (\cref{sec:attention}) are a model defined through quadratic-time pairwise interactions. However, by viewing it as a four-way tensor contraction and reducing in a different order, one can derive a linear form.
  \item A natural special case of each one -- 
    more precisely, state space models with scalar-identity structure on the $A$ matrices, and structured masked attention with 1-semiseparable structure on its $L$ mask
    -- are duals of each other with the exact same linear and quadratic forms.
\end{itemize}
\cref{fig:ssd} summarizes the duality between these two representations.

\begin{figure*}
  \begin{minipage}[c]{.49\linewidth}
    \small
    \centering
    \begin{tabular}{@{}lll@{}}
      \toprule
      Structured State Space Model                                  & Structured Masked Attention                               \\
      \midrule
      $C$ \hfill (contraction matrix)                                    & $Q$ \hfill (queries)                                      \\
      $B$ \hfill (expansion matrix)                                     & $K$ \hfill (keys)                                         \\
      $X$ \hfill (input sequence)                                   & $V$ \hfill (values)                                       \\
      $A_{j:i}$ \hfill (state matrix)                               & $L_{ji}$ \hfill (mask)                                    \\
      $\mathtt{N}$ \hfill (state expansion dim.)                    & $\mathtt{N}$ \hfill (kernel feature dim.)              \\
                                                                    \midrule
      $H$ \hfill (hidden states \eqref{eq:ssm-diagonal:2})          & \qquad \multirow{2}{*}{SMA linear dual \eqref{eq:sha-lin}} \\
      $\quad = L \cdot XB$ \hfill (linear mode)                     &                                                          \\
      \midrule
      \qquad \multirow{2}{*}{SSM quadratic dual \eqref{eq:ssm-quad}} & $G$ \hfill (Gram matrix \eqref{eq:sha-quad:1})            \\
                                                                    & $\quad = Q \cdot K^{\top}$ \hfill (quadratic mode)        \\
      \bottomrule
    \end{tabular}
  \end{minipage}
  \hfill
  \begin{minipage}[c]{0.49\linewidth}
    \centering
    \includegraphics[width=\linewidth]{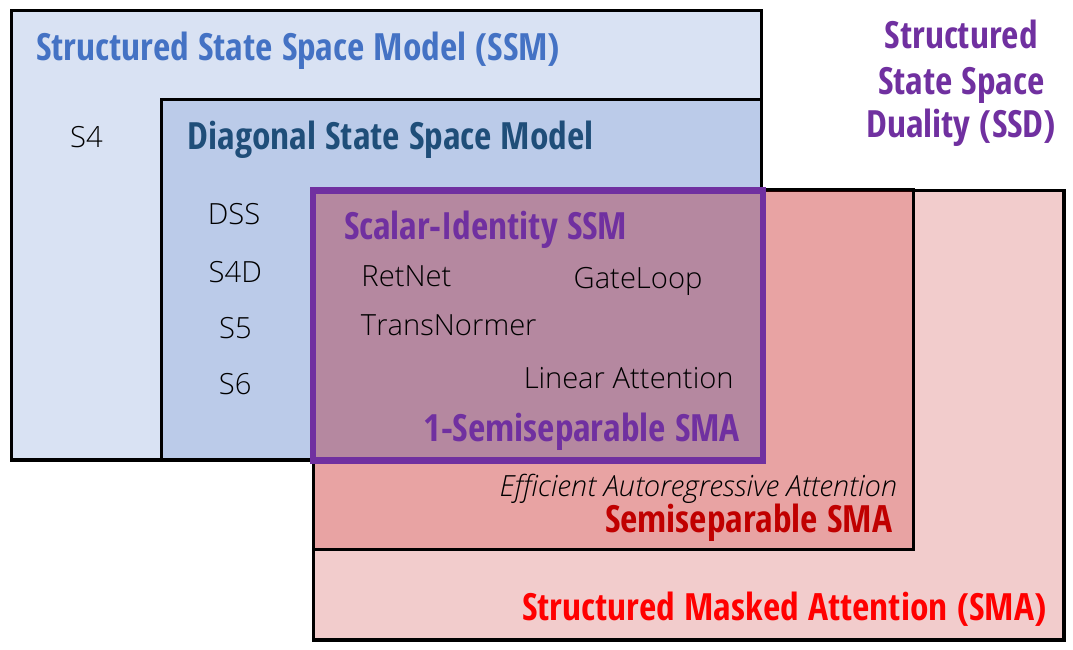}
  \end{minipage}
  \captionsetup{type=figure}
  \caption{
    (\textbf{Structured State Space Duality}.)
    State space duality describes the close relationship between state space models and masked attention.
    (\emph{Left}) General SSMs and SMA both possess linear and quadratic forms, with direct analogs in notation.
    (\emph{Right}) SSMs and SMA intersect at a large class of \emph{state space dual models} (SSD) which capture many sequence models as special cases.
  }
  \label{fig:ssd}
\end{figure*}

An extended related work and discussion (\cref{sec:related}) describes the relationship between SSD and general SSMs / attention in more detail.

\section{A Hardware-Efficient Algorithm for SSD Models}
\label{sec:efficient}

The benefits of developing the theoretical SSD framework between SSMs, attention, and structured matrices lies in using the connections to improve the models and algorithms.
In this section, we show how various algorithms for computing SSD models efficiently can be derived from various algorithms for computing structured matrix multiplication.

\iftoggle{arxiv}{
Our main computational result is an algorithm for computing SSD models that combines both the linear (recurrent) mode and quadratic (attention) mode.
This algorithm is as computation efficient as SSMs (linear scaling in sequence length) and as hardware-friendly as attention (primarily uses matrix multiplications).

\begin{theorem}
  \label{thm:algorithm}
  Consider an SSD model with state expansion factor $\mathtt{N}$ and head dimension $\mathtt{P}=\mathtt{N}$.
  There exists an algorithm for computing the model on any input $X \in \R^{\mathtt{(T, P)}}$
  which only requires $O(\mathtt{TN}^2)$ training FLOPs, $O(\mathtt{TN})$ inference FLOPs,
  $O(\mathtt{N}^2)$ inference memory,
  and whose work is dominated by matrix multiplications.
\end{theorem}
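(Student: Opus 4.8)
The plan is to prove the theorem constructively by exhibiting a block-decomposition algorithm for the semiseparable matrix $M = \mathsf{SSS}(A,B,C)$ of \eqref{eq:ssm-matrix}, and to obtain the inference claims separately from the recurrent form \eqref{eq:s6}. Fix a chunk length $\mathtt{Q}$ (optimized at the end) and partition the time axis $[\mathtt{T}]$ into $\mathtt{T}/\mathtt{Q}$ contiguous chunks, viewing $M$ as a $(\mathtt{T}/\mathtt{Q})\times(\mathtt{T}/\mathtt{Q})$ block matrix of $\mathtt{Q}\times\mathtt{Q}$ blocks. Each \emph{diagonal} block is itself an $\mathtt{N}$-SS matrix of size $\mathtt{Q}$ (the same parameters $A,B,C$ restricted to that chunk), and each strictly \emph{below-diagonal} block carries the explicit rank-$\mathtt{N}$ factorization of \eqref{eq:sss-rank-factor}: it is a $\mathtt{Q}\times\mathtt{N}$ ``left'' factor (built from $C$ and the within-chunk $A$-products of the later chunk), times a scalar chunk-to-chunk $A$-product, times an $\mathtt{N}\times\mathtt{Q}$ ``right'' factor (built from $B$ and the within-chunk $A$-products of the earlier chunk). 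Writing $M$ as its block-diagonal part plus its strictly block-lower-triangular part and using this factorization is the entire correctness argument; everything else is an accounting of costs.

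Given the decomposition, the algorithm computes $Y = MX$ in four matmul-dominated phases. \textbf{(1) Diagonal blocks (intra-chunk).} On each chunk apply the quadratic dual form \eqref{eq:ssm-quad} locally: form $G = C B^{\top}$ restricted to the chunk, Hadamard-mask by the local $\mathsf{1SS}$ matrix, and multiply by $X$; cost $O(\mathtt{Q}^2\mathtt{N})$ per chunk, hence $O(\mathtt{TQN})$ total. \textbf{(2) Chunk states (right factors).} For each chunk contract its $B$ and $X$, weighted by the within-chunk $A$-products, into an end-of-chunk state $h_c \in \R^{\mathtt{(N,P)}}$; this is one $(\mathtt{N},\mathtt{Q})\times(\mathtt{Q},\mathtt{P})$ matmul per chunk, cost $O(\mathtt{QNP}) = O(\mathtt{QN}^2)$, total $O(\mathtt{TN}^2)$. \textbf{(3) State recurrence (inter-chunk).} Run the scalar ($1$-SS) SSM recurrence of \cref{sec:ssm:1-ss} over the $\mathtt{T}/\mathtt{Q}$ chunk states, using the chunk-boundary $A$-products, to obtain the state entering each chunk; cost $O((\mathtt{T}/\mathtt{Q})\mathtt{N}^2)$, subdominant. \textbf{(4) Output from incoming states (left factors).} For each chunk, multiply the incoming state by the ``left'' $C$-and-$A$ factor, a $(\mathtt{Q},\mathtt{N})\times(\mathtt{N},\mathtt{P})$ matmul, cost $O(\mathtt{QN}^2)$ per chunk, total $O(\mathtt{TN}^2)$; add to the phase-(1) output. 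Summing, training cost is $O(\mathtt{TQN} + \mathtt{TN}^2)$, and choosing $\mathtt{Q} = \Theta(\mathtt{N})$ gives $O(\mathtt{TN}^2)$ with all heavy work in batched matrix multiplications (the only non-matmul work is the cheap scalar cumulative products of the $A$'s at three granularities and the length-$\mathtt{T}/\mathtt{Q}$ scalar scan of phase (3), both lower order). For inference, the recurrent form \eqref{eq:s6} maintains a state $h_t \in \R^{\mathtt{(N,P)}}$ of size $O(\mathtt{N}^2)$ and advances it one token at a time by a scalar multiply plus a rank-one update and an inner product, none of which depends on $\mathtt{T}$, yielding the stated inference memory and per-step cost.

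The main obstacle is organizing phases (2)--(4) so that the off-diagonal (``low-rank'') part of $M$ is handled in time \emph{linear} in $\mathtt{T}$ while remaining a sequence of matrix multiplications: there are $\Theta((\mathtt{T}/\mathtt{Q})^2)$ off-diagonal blocks, and treating them individually would cost $O(\mathtt{T}^2\mathtt{N}^2/\mathtt{Q})$, so the crux is to route all inter-chunk information through the $\mathtt{T}/\mathtt{Q}$ chunk-level states and a single scalar scan over them, which collapses the quadratic-in-chunks blowup exactly the way the rank-$\mathtt{N}$ factorization \eqref{eq:sss-rank-factor} licenses. A secondary point to verify carefully is the bookkeeping of the scalar $A$-products --- within a chunk (phases 1, 2, 4), across chunk boundaries (phase 3), and their composition --- together with the fact that these reductions are asymptotically negligible and do not undermine the ``dominated by matrix multiplications'' claim; the chunk-size optimization and the $\mathtt{T}$- and $\mathtt{N}$-independence of the hidden constants are then routine.
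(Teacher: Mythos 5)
Your proposal is correct and follows essentially the same route as the paper's own argument in \cref{sec:efficient}: the identical block decomposition of $M=\mathsf{SSS}(A,B,C)$ into diagonal blocks (computed via the quadratic dual form) and rank-$\mathtt{N}$ off-diagonal blocks factored into right/center/left pieces, with the inter-chunk information routed through chunk-level states and a single $1$-SS scan, the same cost accounting with $\mathtt{Q}=\Theta(\mathtt{N})$, and the same recurrent-form argument for the inference bounds. No substantive differences to report.
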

Note that all of these bounds are tight, because
a state space model with state expansion $\mathtt{N}$ operating on a head size of $\mathtt{N}$ has total state size $\mathtt{N}^2$ (yielding the lower bounds for training and inference FLOPs of $O(\mathtt{TN}^2)$ and $O(\mathtt{N}^2)$ respectively).
Furthermore the input $X$ itself has $\mathtt{TN}$ elements, yielding the memory lower bound.

The main idea behind \cref{thm:algorithm} is once again viewing the problem of computing a state space model as a semiseparable matrix multiplication, but leveraging its structure in a new way.
Instead of computing the whole matrix in either recurrent or attention mode,
we perform a \emph{block decomposition} of the matrix.
The diagonal blocks can be computed using the dual attention mode, which can be efficiently done with matrix multiplications,
while the off-diagonal blocks can be factored by the rank-structure of semiseparable matrices and reduced to a smaller recurrence.
We highlight that \cref{listing} provides a self-contained implementation of the SSD algorithm.
Compared to the general selective SSM of \citet{gu2023mamba},
this implementation is much simpler, and relatively efficient even in native PyTorch without requiring special low-level kernels.
}{
}

To begin, we partition the matrix $M$ into a $\frac{\mathtt{T}}{\mathtt{Q}} \times \frac{\mathtt{T}}{\mathtt{Q}}$ grid of submatrices of size $\mathtt{Q} \times \mathtt{Q}$,
for some block size $\mathtt{Q}$.
Note that the off-diagonal blocks are low-rank by the defining property of semiseparable matrices (\cref{def:semiseparable-rank}).%
\footnote{Note that the block decomposition is valid even with partitions of varying size, e.g.\ if $\mathtt{Q} \not\mid \mathtt{T}$, but we assume even divisibility for simplicity.}
\begin{align*}%
  \text{(Block Decomposition)} \quad M &=
  \begin{bmatrix}
    M^{(0,0)} \\
    M^{(1,0)} & M^{(1,1)} \\
    \vdots & \vdots & \ddots \\
    M^{(\mathtt{T}/\mathtt{Q}-1,0)} & M^{(\mathtt{T}/\mathtt{Q}-1,1)} & \dots & M^{(\mathtt{T}/\mathtt{Q}-1,\mathtt{T}/\mathtt{Q}-1)} \\
  \end{bmatrix}
  \\
  \text{(Diagonal Block)} \quad M^{(j,j)} &= \mathsf{SSM}(A_{j\mathtt{Q}:(j+1)\mathtt{Q}}, B_{j\mathtt{Q}:(j+1)\mathtt{Q}}, C_{j\mathtt{Q}:(j+1)\mathtt{Q}}) \\
  \text{(Low-Rank Block)} \quad M^{(j,i)} &= 
  \begin{bmatrix}C_{j\mathtt{Q}}^{\top} A_{j\mathtt{Q}:j\mathtt{Q}-1} \\[1pt] \vdots \\[1pt] C_{(j+1)\mathtt{Q}-1}^{\top} A_{(j+1)\mathtt{Q}-1:j\mathtt{Q}-1}\end{bmatrix}
  A_{j\mathtt{Q}-1:(i+1)\mathtt{Q}-1}
  \begin{bmatrix}B_{i\mathtt{Q}}^{\top} A_{(i+1)\mathtt{Q}-1:i\mathtt{Q}} \\[1pt] \vdots \\[1pt] B_{(i+1)\mathtt{Q}-1}^{\top} A_{(i+1)\mathtt{Q}-1:(i+1)\mathtt{Q}-1}\end{bmatrix}^{\top}
\end{align*}

This is easiest illustrated through an example, e.g.\ for $\mathtt{T}=9$ and decomposing into chunks of length $\mathtt{Q}=3$.
The shaded cells are low-rank factorizations of the off-diagonal blocks of the semiseparable matrix.
\begin{align*}%
M &=
\begin{bNiceArray}{ccc|ccc|ccc}[cell-space-limits=2pt]
    C_0^{\top} A_{0:0} B_0 & \\
    C_1^{\top} A_{1:0} B_0 & C_1^{\top} A_{1:1} B_1 & \\
    C_2^{\top} A_{2:0} B_0 & C_2^{\top} A_{2:1} B_1 & C_2^{\top} A_{2:2} B_2 \\
    \hline
    C_3^{\top} A_{3:0} B_0 & C_3^{\top} A_{3:1} B_1 & C_3^{\top} A_{3:2} B_2 & C_3^{\top} A_{3:3} B_3 \\
    C_4^{\top} A_{4:0} B_0 & C_4^{\top} A_{4:1} B_1 & C_4^{\top} A_{4:2} B_2 & C_4^{\top} A_{4:3} B_3 & C_4^{\top} A_{4:4} B_4 \\
    C_5^{\top} A_{5:0} B_0 & C_5^{\top} A_{5:1} B_1 & C_5^{\top} A_{5:2} B_2 & C_5^{\top} A_{5:3} B_3 & C_5^{\top} A_{5:4} B_4 & C_5^{\top} A_{5:5} B_5 \\
    \hline
    C_6^{\top} A_{6:0} B_0 & C_6^{\top} A_{6:1} B_1 & C_6^{\top} A_{6:2} B_2 & C_6^{\top} A_{6:3} B_3 & C_6^{\top} A_{6:4} B_4 & C_6^{\top} A_{6:5} B_5 & C_6^{\top} A_{6:6} B_6 \\
    C_7^{\top} A_{7:0} B_0 & C_7^{\top} A_{7:1} B_1 & C_7^{\top} A_{7:2} B_2 & C_7^{\top} A_{7:3} B_3 & C_7^{\top} A_{7:4} B_4 & C_7^{\top} A_{7:5} B_5 & C_7^{\top} A_{7:6} B_6 & C_7^{\top} A_{7:7} B_7 \\
    C_8^{\top} A_{8:0} B_0 & C_8^{\top} A_{8:1} B_1 & C_8^{\top} A_{8:2} B_2 & C_8^{\top} A_{8:3} B_3 & C_8^{\top} A_{8:4} B_4 & C_8^{\top} A_{8:5} B_5 & C_8^{\top} A_{8:6} B_6 & C_8^{\top} A_{8:7} B_7 & C_8^{\top} A_{8:8} B_8 \\
\end{bNiceArray}
\\
\\&=
\begin{bNiceArray}{ccc|ccc|ccc}[cell-space-limits=2pt]
    C_0^{\top} A_{0:0} B_0 & \\
    C_1^{\top} A_{1:0} B_0 & C_1^{\top} A_{1:1} B_1 & \\
    C_2^{\top} A_{2:0} B_0 & C_2^{\top} A_{2:1} B_1 & C_2^{\top} A_{2:2} B_2 \\
    \hline
    \Block[fill=[RGB]{243,240,255}]{3-3}{\begin{bmatrix}C_3^{\top} A_{3:2} \\[1pt] C_4^{\top} A_{4:2} \\[1pt] C_5^{\top} A_{5:2}\end{bmatrix}A_{2:2}\begin{bmatrix}B_0^{\top} A_{2:0} \\[1pt] B_1^{\top} A_{2:1} \\[1pt] B_2^{\top} A_{2:2}\end{bmatrix}^{\top}}
                           &&& C_3^{\top} A_{3:3} B_3 \\
                           &&& C_4^{\top} A_{4:3} B_3 & C_4^{\top} A_{4:4} B_4 \\
                           &&& C_5^{\top} A_{5:3} B_3 & C_5^{\top} A_{5:4} B_4 & C_5^{\top} A_{5:5} B_5 \\
    \hline
    \Block[fill=[RGB]{243,240,255}]{3-3}{\begin{bmatrix}C_6^{\top} A_{6:5} \\[1pt] C_7^{\top} A_{7:5} \\[1pt] C_8^{\top} A_{8:5}\end{bmatrix}A_{5:2}\begin{bmatrix}B_0^{\top} A_{2:0} \\[1pt] B_1^{\top}A_{2:1} \\[1pt] B_2^{\top}A_{2:2}\end{bmatrix}^{\top}}
                           &&&
                           \Block[fill=[RGB]{243,240,255}]{3-3}{\begin{bmatrix}C_6^{\top} A_{6:5} \\[1pt] C_7^{\top} A_{7:5} \\[1pt] C_8^{\top} A_{8:5}\end{bmatrix}A_{5:5}\begin{bmatrix}B_3^{\top}A_{5:3} \\[1pt] B_4^{\top}A_{5:4} \\[1pt] B_5^{\top}A_{5:5}\end{bmatrix}^{\top}}
                           &&& C_6^{\top} A_{6:6} B_6 \\
                           &&&&&& C_7^{\top} A_{7:6} B_6 & C_7^{\top} A_{7:7} B_7 \\
                           &&&&&& C_8^{\top} A_{8:6} B_6 & C_8^{\top} A_{8:7} B_7 & C_8^{\top} A_{8:8} B_8 \\
  \end{bNiceArray}
\end{align*}

From here we can reduce the problem into these two parts.
These can also be interpreted as dividing the output of a ``chunk'' $y_{j\mathtt{Q}:(j+1)\mathtt{Q}}$ into two components:
the effect of inputs within the chunk $x_{j\mathtt{Q}:(j+1)\mathtt{Q}}$, and the effect of inputs before the chunk $x_{0:j\mathtt{Q}}$.

\subsection{Diagonal Blocks}
The diagonal blocks are easy to handle, because they are simply self-similar problems of a smaller size.
The $j$-th block represents computing the answer $\mathsf{SSM}(A_R, B_R, C_R)(x_R)$ for the range $R = j\mathtt{Q}:(j+1)\mathtt{Q} = (j\mathtt{Q}, j\mathtt{Q}+1, \dots, j\mathtt{Q}+\mathtt{Q}-1)$.
The key is that this block can be computed using any desired method.
In particular, for small chunk lengths $\mathtt{Q}$, this problem is computed more efficiently using the dual quadratic SMA form.
Additionally, the chunks can be computed in parallel.

These subproblems can be interpreted as: what is the output per chunk \emph{supposing that the initial state (to the chunk) is $0$}.
In other words for chunk $j$, this computes the correct outputs taking into account only the chunk inputs $x_{j\mathtt{Q}:(j+1)\mathtt{Q}}$.

\subsection{Low-Rank Blocks}

The low-rank factorizations consist of 3 terms, and there are correspondingly three pieces of the computation.
In this factorization, we will use the terminology
\begin{itemize}
  \item The terms like $\begin{bmatrix}B_0^{\top} A_{2:0} \\[1pt] B_1^{\top}A_{2:1} \\[1pt] B_2^{\top}A_{2:2}\end{bmatrix}^{\top}$ are called the right factors or $B$-block-factors.
  \item The terms like $A_{5:2}$ are called the center factors or $A$-block-factors.
  \item The terms like $\begin{bmatrix}C_6^{\top} A_{6:5} \\[1pt] C_7^{\top} A_{7:5} \\[1pt] C_8^{\top} A_{8:5}\end{bmatrix}$ are called the left factors or $C$-block-factors.
\end{itemize}

\paragraph{Right Factors.}

This step computes the multiplication by the right $B$-block-factors of the low-rank factorization.
Note that for each chunk, this is a $(\mathtt{N}, \mathtt{Q})$ by $(\mathtt{Q}, \mathtt{P})$ matrix multiplication, where $\mathtt{N}$ is the state dimension and $P$ is the head dimension.
The result is a $(\mathtt{N}, \mathtt{P})$ tensor for each chunk, which has the same dimensionality as the expanded hidden state $h$.

This can be interpreted as: what is the final state per chunk \emph{supposing that the initial state (to the chunk) is $0$}.
In other words this computes $h_{j\mathtt{Q}+\mathtt{Q}-1}$ assuming that $x_{0:j\mathtt{Q}} = 0$.

\paragraph{Center Factors.}

This step computes the effect of the center $A$-block-factors terms in the low-rank factorization.
In the previous step, the final states per chunk have total shape $\mathtt{(\mathtt{T}/\mathtt{Q},\mathtt{N},\mathtt{P})}$.
This is now multiplied by a 1-SS matrix generated by $A_{2\mathtt{Q}-1:\mathtt{Q}-1}^\times, A_{3\mathtt{Q}-1:2\mathtt{Q}-1}^\times, \dots, A_{\mathtt{T}-1:\mathtt{T}-\mathtt{Q}-1}^\times$.

This step can be computed by any algorithm for computing 1-SS multiplication (also known as the scalar SSM scan or \texttt{cumprodsum} operator).

This can be interpreted as: what is the actual final state per chunk \emph{taking into account all previous inputs};
in other words, this computes the true hidden state $h_{j\mathtt{Q}}$ taking into account all of $x_{0:(j+1)\mathtt{Q}}$.

\paragraph{Left Factors.}

This step computes the multiplication by the left $C$-block-factors of the low-rank factorization.
For each chunk, this can be represented by a matrix multiplication
$\mathsf{contract}(\mathtt{QN,NP \to QP})$.

This can be interpreted as: what is the output per chunk \emph{taking into account the correct initial state $h_{j\mathtt{Q}-1}$, and supposing the inputs $x_{j\mathtt{Q}:(j+1)\mathtt{Q}}$ are $0$}.
In other words for chunk $j$, this computes the correct outputs taking into account only the prior inputs $x_{0:j\mathtt{Q}}$.

\begin{listing*}[!t]
\inputminted[fontsize=\footnotesize]{python}{structure/code.py}
\caption{Full PyTorch example of the state space dual (SSD) model.}
\label{listing}
\end{listing*}

\begin{figure}[!t]
\centering
\includegraphics[width=\linewidth]{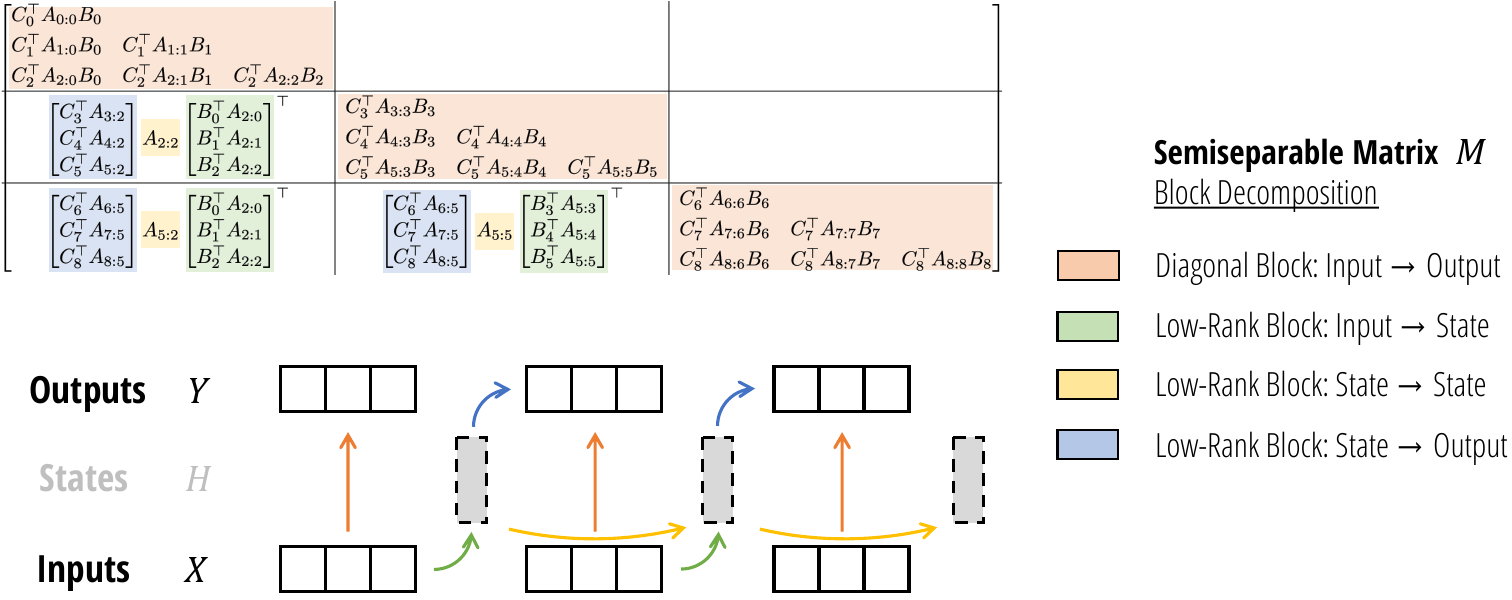}
\caption{
  (\textbf{SSD Algorithm}.)
  By using the matrix transformation viewpoint of state space models to write them as semiseparable matrices (\cref{sec:ssm}), we develop a more hardware-efficient computation of the SSD model through a block-decomposition matrix multiplication algorithm.
  The matrix multiplication also has an interpretation as a state space model,
  where blocks represent chunking the input and output sequence.
  Diagonal blocks represent intra-chunk computations and the off-diagonal blocks represent inter-chunk computations, factored through the SSM's hidden state.
}
\label{fig:ssd-algorithm}
\end{figure}

\subsection{Computational Cost}

We define the notation $\mathsf{BMM}(\mathtt{B}, \mathtt{M}, \mathtt{N}, \mathtt{K})$ to define a batched matrix multiplication $\mathsf{contract}(\mathtt{\mathtt{MK},\mathtt{KN} \to \mathtt{MN}})$ with batch dimension $\mathtt{B}$.
From this notation we can infer three aspects of the efficiency:
\begin{itemize}
  \item \emph{Computation cost}: total of $O(\mathtt{BMNK})$ FLOPs.
  \item \emph{Memory cost:} total of $O(\mathtt{B}(\mathtt{MK}+\mathtt{KN}+\mathtt{MN}))$ space.
  \item \emph{Parallelization:} larger $\mathtt{M}, \mathtt{N}, \mathtt{K}$ terms can leverage specialized matrix multiplication units on modern accelerators.
\end{itemize}

\paragraph{Center Blocks.}
The cost of the quadratic SMA computation consists of three steps (equation \eqref{eq:ssm-quad}):
\begin{itemize}
  \item Computing the kernel matrix $C^{\top} B$, which has cost $\mathsf{BMM}(\mathtt{T}/\mathtt{Q}, \mathtt{Q}, \mathtt{Q}, \mathtt{N})$.
  \item Multiplying by the mask matrix, which is an elementwise operation on tensors of shape $(\mathtt{T}/\mathtt{Q}, \mathtt{Q}, \mathtt{Q})$.
  \item Multiplying by the $X$ values, which has cost $\mathsf{BMM}(\mathtt{T}/\mathtt{Q}, \mathtt{Q}, \mathtt{P}, \mathtt{N})$
\end{itemize}

\paragraph{Low-Rank Blocks: Right Factors.}
This step is a single matrix multiplication with cost $\mathsf{BMM}(\mathtt{T}/\mathtt{Q}, \mathtt{N}, \mathtt{P}, \mathtt{Q})$.

\paragraph{Low-Rank Blocks: Center Factors.}
This step is a scalar SSM scan (or 1-SS multiplication) of length $\mathtt{T}/\mathtt{Q}$ on $(\mathtt{N}, \mathtt{P})$ independent channels.
The work of this scan is $\mathtt{TNP}/\mathtt{Q}$, which is negligible compared to the other factors.

Note that because of the blocking which reduces the length of the sequence from $\mathtt{T}$ to $\mathtt{T}/\mathtt{Q}$,
this scan has $\mathtt{Q}$ times smaller cost than a pure SSM scan (e.g. the selective scan of Mamba).
Thus we observe that on most problem lengths,
other algorithms (\cref{sec:scan}) may be more efficient or much easier to implement without a significant slowdown.
For example, a naive implementation of this via 1-SS matrix multiplication has cost $\mathsf{BMM}(1, \mathtt{T}/\mathtt{Q}, \mathtt{NP}, \mathtt{T}/\mathtt{Q})$,
which is much easier to implement and can be more efficient than a naive recurrence/scan implementation.

\paragraph{Low-Rank Blocks: Left Factors.}
This step is a single matrix multiplication with cost $\mathsf{BMM}(\mathtt{T}/\mathtt{Q}, \mathtt{Q}, \mathtt{P}, \mathtt{N})$.

\paragraph{Total Cost.}
If we set $\mathtt{N} = \mathtt{P} = \mathtt{Q}$ (in other words the state dimension, head dimension, and chunk length are equal),
then all BMM terms above become $\mathsf{BMM}(\mathtt{T}/\mathtt{N}, \mathtt{N}, \mathtt{N}, \mathtt{N})$.
The computational chacteristics of this are:
\begin{itemize}
  \item Total FLOP count of $O(\mathtt{TN}^2)$.
  \item Total memory of $O(\mathtt{TN})$.
  \item The work \emph{consists primarily of matrix multiplications} on matrices of shape $(\mathtt{N}, \mathtt{N})$.
\end{itemize}
Notice that the memory consumption is tight; the inputs and outputs $x, y$ have shape $(\mathtt{T}, \mathtt{P}) = (\mathtt{T}, \mathtt{N})$.
Meanwhile the flop count reflects an extra factor of $\mathtt{N}$, which is cost incurred by the autoregressive state size and is common to all models.

Aside from the matmuls, there is a scalar SSM scan on $\mathtt{NP} = \mathtt{N}^2$ features and sequence length $\mathtt{T}/\mathtt{Q}$.
This has cost $O(\mathtt{T}/\mathtt{Q} \mathtt{N}^2)$ FLOPs and $O(\log(\mathtt{T}/\mathtt{Q}))$ depth.
Although it does not use matrix multiplications, it is still parallelizable and the total work done is negligible compared to the other steps;
this has a negligible cost in our GPU implementation.

\paragraph{Comparison to Pure SSM and Attention Models.}

Quadratic attention is also very hardware efficient by only leveraging matrix multiplications, but has $\mathtt{T}^2 N$ total FLOPs.
Its slower computation speed at both training and inference can directly be seen as a consequence of having a larger state size -- standard attention has a state size scaling with sequence length $\mathtt{T}$ because it caches its history and does not compress its state.

Linear SSMs have $\mathtt{TNP} = \mathtt{TN}^2$ total FLOPs, which is the same as SSD.
However, a naive implementation requires a state expansion \eqref{eq:sha-lin:1} that materializes extra memory,
and a scalar operation \eqref{eq:sha-lin:2} that does not leverage matrix multiplications.

\begin{center}
  \begin{tabular}{@{}llll@{}}
    \toprule
                            & Attention                & SSM                                        & \textbf{SSD} \\
    \midrule
    State size              & $\mathtt{T}$             & $\mathtt{\mathbf{N}}$                      & $\mathtt{\mathbf{N}}$ \\
    Training FLOPs          & $\mathtt{T}^2\mathtt{N}$ & $\mathtt{\mathbf{T}}\mathtt{\mathbf{N}}^2$ & $\mathtt{\mathbf{T}}\mathtt{\mathbf{N}}^2$ \\
    Inference FLOPs         & $\mathtt{T}\mathtt{N}$   & $\mathtt{\mathbf{N}}^2$                    & $\mathtt{\mathbf{N}}^2$ \\
    \midrule
    (Naive) memory          & $\mathtt{T}^2$           & $\mathtt{T}\mathtt{N}^2$                   & $\mathtt{\mathbf{T}}\mathtt{\mathbf{N}}$ \\
    Matrix multiplication   & \cmark                   &                                            & \textbf{\cmark} \\
    \bottomrule
  \end{tabular}
  \label{tab:compute}
\end{center}

We note that many other matrix decompositions are possible (for example, see \cref{sec:scan} for a compendium of algorithms for 1-SS multiplication through different structured matrix decompositions)
which may lead to more algorithms for SSDs that could be better for other specialized settings.
Even more broadly, we note that semiseparable matrices have a rich literature and many more representations besides the SSS form that we use (\cref{def:sss}),
and even more efficient algorithms may be possible.

\section{The Mamba-2 Architecture}
\label{sec:architecture}

By connecting SSMs and attention, the SSD framework allows us to develop a shared vocabulary and library of techniques for both.
In this section we discuss some examples of understanding and modifying SSD layers using ideas originally developed for Transformers.
We discuss several design choices, resulting in the Mamba-2 architecture.
These axes of variation are ablated in \cref{sec:experiments:ablations}.

\begin{figure}[!t]
  \centering
  \includegraphics[width=\iftoggle{arxiv}{0.8\linewidth}{\linewidth}]{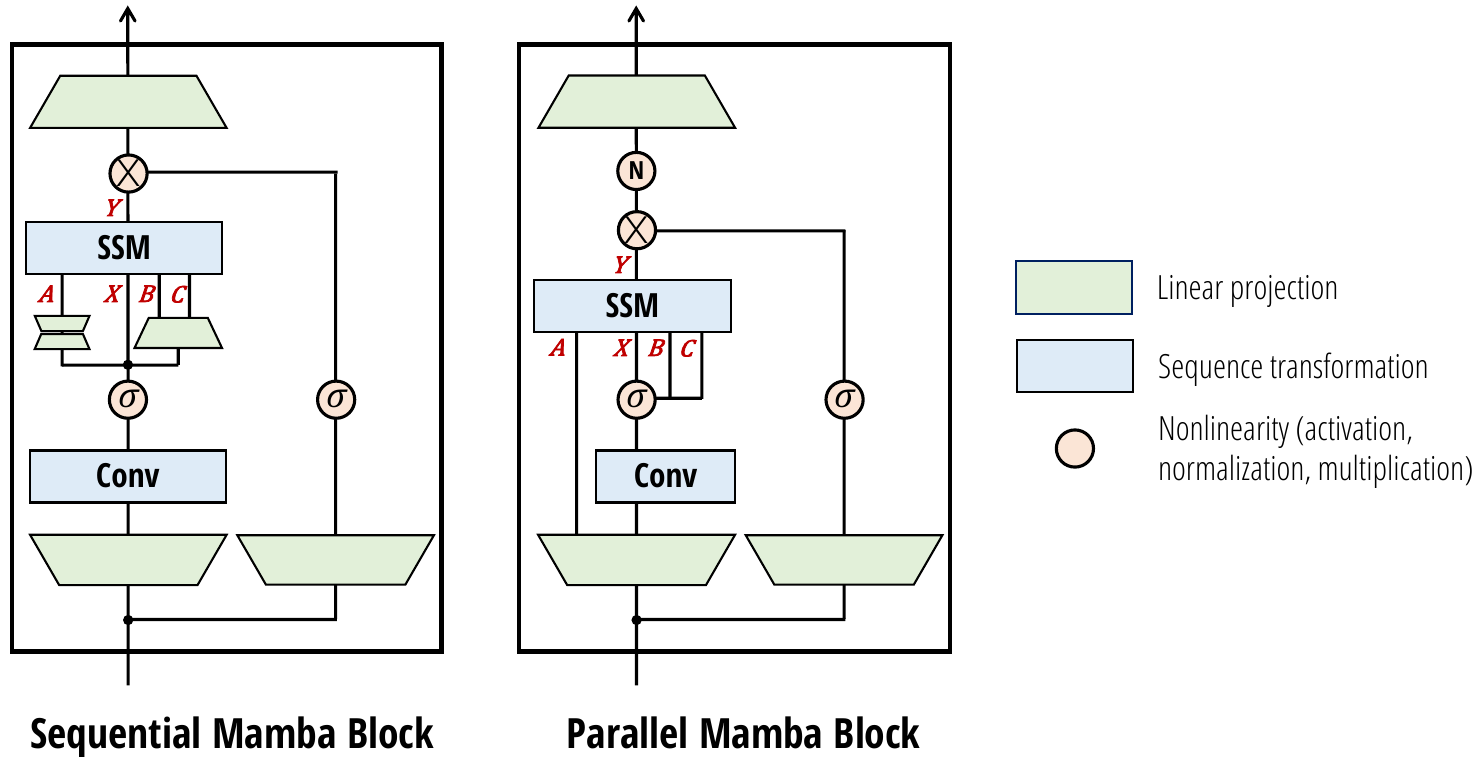}
  \caption{
    (\textbf{Mamba-2 Architecture}.) The Mamba-2 block simplifies the Mamba block by removing sequential linear projections; the SSM parameters $A, B, C$ are produced at the beginning of the block instead of as a function of the SSM input $X$.
    An additional normalization layer is added as in NormFormer~\citep{shleifer2021normformer}, improving stability.
    The $B$ and $C$ projections only have a single head shared across the $X$ heads, analogous to multi-value attention (MVA).
  }
  \label{fig:architecture}
  \iftoggle{arxiv}{}{\vspace{-1.25em}}
\end{figure}

\subsection{Block Design}
\label{sec:architecture:block}

We first discuss modifications to the neural network block that are independent of the inner sequence mixing layer (i.e. outside the core SSD layer).

\paragraph{Parallel Parameter Projections.}

Mamba-1 was motivated by an SSM-centric point of view
where the selective SSM layer is viewed as a map from $X \mapsto Y$.
The SSM parameters $A, B, C$ are viewed as
subsidiary and are functions of the SSM input $X$.
Thus the linear projections defining $(A, B, C)$ occur after the initial linear projection to create $X$.

In Mamba-2, the SSD layer is viewed as a map from $A, X, B, C \mapsto Y$.
It therefore makes sense to produce $A, X, B, C$ in parallel with a single projection at the beginning of the block.
Note the analogy to standard attention architectures,
where $X, B, C$ correspond to the $Q, K, V$ projections that are created in parallel.

Note that adopting parallel projections for the $A, B, C, X$ inputs to the SSM slightly reduces parameters and more importantly is more amenable to tensor parallelism for larger models, by using standard Megatron sharding patterns~\citep{shoeybi2019megatron}).

\paragraph{Extra Normalization.}

In preliminary experiments, we found that instabilities were prone to arising in larger models.
We were able to alleviate this by adding an extra normalization layer (e.g. LayerNorm, GroupNorm, or RMSNorm) to the block right before the final output projection.
This usage of a normalization is most directly related to the NormFormer architecture~\citep{shleifer2021normformer}, which also added normalization layers at the end of the MLP and MHA blocks.

We also note that this change is similar to other recent models related to Mamba-2 that were derived from a linear attention viewpoint.
The original linear attention formulation normalizes by a denominator term that emulates the normalization of the softmax function in standard attention.
TransNormerLLM~\citep{qin2023transnormerllm} and RetNet~\citep{sun2023retentive} find that this normalization is unstable and add an extra LayerNorm or GroupNorm after the linear attention layer.
Our extra normalization layer differs slightly from these, occuring after the multiplicative gate branch instead of before.

\subsection{Multihead Patterns for Sequence Transformations}
\label{sec:architecture:multihead}

Recall that SSMs are defined as a sequence transformation (\cref{def:sequence-transformation})
where:
\begin{itemize}
  \item $A, B, C$ parameters have a state dimension $\mathtt{N}$.
  \item They define a sequence transformation $\R^\mathtt{T} \to \R^\mathtt{T}$, which for example can be represented as a matrix $M \in \R^{\mathtt{(T, T)}}$.
  \item This transformation operates over an input sequence $X \in \R^{\mathtt{(T, P)}}$, independently over the $\mathtt{P}$ axis.
\end{itemize}
One can view this as defining one \emph{head} of the sequence transformation.
\begin{definition}[Multihead patterns]
  \label{def:head-pattern}
  A multihead sequence transformation consists of $\mathtt{H}$ independent heads, for a total model dimension of $\mathtt{D}=\mathtt{d\_model}$.
  The parameters may be tied across heads, leading to a \textbf{head pattern}.
\end{definition}
The state size $\mathtt{N}$ and head dimension $\mathtt{P}$ are analogous to the $QK$ head dimension and $V$ head dimension of attention, respectively.
Just as in modern Transformer architectures~\citep{chowdhery2022palm,touvron2023llama},
in Mamba-2 we generally choose these to be constants around $64$ or $128$; when the model dimension $\mathtt{D}$ increases, we increase the number of heads while keeping the head dimensions $\mathtt{N}$ and $\mathtt{P}$ fixed.
In order to describe how to do this,
we can transfer and generalize ideas from multihead attention to define similar patterns for SSMs, or any general sequence transformation.

\begin{minipage}[t]{.25\linewidth}
  \begin{equation}%
    \label{eq:multihead}
    \begin{aligned}%
        & \textbf{Multi-head SSM} \\
        & (\textrm{Multi-head Attn.}) \\
      X & \quad \mathtt{(T, H, P)} \\
      A & \quad \mathtt{(T, H)} \\
      B & \quad \mathtt{(T, H, N)} \\
      C & \quad \mathtt{(T, H, N)} \\
    \end{aligned}
  \end{equation}
\end{minipage}
\begin{minipage}[t]{.25\linewidth}
  \begin{equation}%
    \label{eq:multiquery}
    \begin{aligned}%
    & \textbf{Multi-contract SSM} \\
    & (\textrm{Multi-query Attn.}) \\
      X & \quad \mathtt{(T, 1, P)} \\
      A & \quad \mathtt{(T, H)} \\
      B & \quad \mathtt{(T, 1, N)} \\
      C & \quad \mathtt{(T, H, N)} \\
    \end{aligned}
  \end{equation}
\end{minipage}
\begin{minipage}[t]{.25\linewidth}
  \begin{equation}%
    \label{eq:multikey}
    \begin{aligned}%
    & \textbf{Multi-expand SSM} \\
    & (\textrm{Multi-key Attn.}) \\
      X & \quad \mathtt{(T, 1, P)} \\
      A & \quad \mathtt{(T, H)} \\
      B & \quad \mathtt{(T, H, N)} \\
      C & \quad \mathtt{(T, 1, N)} \\
    \end{aligned}
  \end{equation}
\end{minipage}
\begin{minipage}[t]{.25\linewidth}
  \begin{equation}%
    \label{eq:multivalue}
    \begin{aligned}%
    & \textbf{Multi-input SSM} \\
    & (\textrm{Multi-value Attn.}) \\
      X & \quad \mathtt{(T, H, P)} \\
      A & \quad \mathtt{(T, H)} \\
      B & \quad \mathtt{(T, 1, N)} \\
      C & \quad \mathtt{(T, 1, N)} \\
    \end{aligned}
  \end{equation}
\end{minipage}

\paragraph{Multihead SSM (MHS) / Multihead Attention (MHA) Pattern.}

The classic MHA pattern assumes that the head dimension $\mathtt{P}$ divides the model dimension $\mathtt{D}$.
The number of heads is defined as $\mathtt{H} = \mathtt{D} / \mathtt{P}$.
Then, $\mathtt{H}$ copies of the core sequence transformation are created by creating $\mathtt{H}$ independent copies of each parameter.
Note that while the MHA pattern was first described for the attention sequence transformation,
it can be applied to anything compatible with \cref{def:sequence-transformation}.
For example, a multi-head SSD layer would accept inputs with shapes according to equation \eqref{eq:multihead}
where the SSD algorithm is broadcasted over the $\mathtt{H} = \mathtt{n\_heads}$ dimension.

\paragraph{Multi-contract SSM (MCS) / Multi-query Attention (MQA) Pattern.}

Multi-query attention~\citep{shazeer2019fast} is a clever optimization for attention that can dramatically improve the speed of autoregressive inference,
which relies on caching the $K$ and $V$ tensors.
This technique simply avoids giving $K$ and $V$ the extra head dimension,
or in other words broadcasts a single head of $(K, V)$ across all the heads of $Q$.

Using the state space duality, we can define an equivalent SSM version of MQA as equation \eqref{eq:multiquery}.
Here, $X$ and $B$ (the SSM analogs of attention's $V$ and $K$) are shared across the $\mathtt{H}$ heads.
We also call this the \emph{multi-contract SSM (MCS)} head pattern, because the $C$ parameter which controls the SSM state contraction has independent copies per head.

We can similarly define a multi-key attention (MKA) or \emph{multi-expand SSM (MES)} head pattern,
where $B$ (which controls the SSM expansion) is independent per head while $C$ and $X$ are shared across heads.

\paragraph{Multi-input SSM (MIS) / Multi-value Attention (MVA) Pattern.}

While MQA makes sense for attention because of its KV cache, it is not the natural choice for SSMs.
In Mamba, instead, $X$ is viewed as the main input to the SSM,
and therefore $B$ and $C$ are parameters that are shared across the input channels.
We define a new multi-value attention (MVA) of \emph{multi-input SSM (MIS)} pattern in equation \eqref{eq:multivalue}, which can again be applied to any sequence transformation such as SSD.

Armed with this vocabulary, we can characterize the original Mamba architecture more precisely.
\begin{proposition}
  \label{prop:mamba-multihead}
  The selective SSM (S6) layer of the Mamba architecture~\citep{gu2023mamba} can be viewed as having
  \begin{itemize}
    \item Head dimension $P=1$: every channel has independent SSM dynamics $A$.
    \item \emph{Multi-input SSM} (MIS) or \emph{multi-value attention} (MVA) head structure: the $B, C$ matrices (corresponding to $K, Q$ in the attention duality) are shared across all channels of the input $X$ (corresponding to $V$ in attention).
  \end{itemize}
\end{proposition}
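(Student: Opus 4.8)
The plan is to prove Proposition~\ref{prop:mamba-multihead} by simply unpacking the precise definition of the Mamba selective SSM (S6) layer from \citet{gu2023mamba} and matching its parameter shapes against the head-pattern taxonomy of \cref{def:head-pattern} and equations \eqref{eq:multihead}--\eqref{eq:multivalue}; there is no analytic content, only careful bookkeeping of tensor axes. First I would recall that the S6 layer acts on an input $X \in \R^{\mathtt{(T, D)}}$, and that its parameters are a (non-input-dependent) $A \in \R^{\mathtt{(D, N)}}$ storing one diagonal $\mathtt{N}\times\mathtt{N}$ transition per channel, input-dependent projections $B, C \in \R^{\mathtt{(T, N)}}$ carrying \emph{no} channel axis, and an input-dependent step size $\dt \in \R^{\mathtt{(T, D)}}$ with one entry per channel and time. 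Discretization produces $\bar A_{t,d} = \exp(\dt_{t,d} A_{d})$ (still diagonal $\mathtt{N}\times\mathtt{N}$) and $\bar B_{t,d} = \dt_{t,d} B_t \in \R^{\mathtt{N}}$, so that the recurrence \eqref{eq:s6} reads, for each channel $d$ independently, $h_{t,d} = \bar A_{t,d} h_{t-1,d} + \bar B_{t,d} X_{t,d}$ and $y_{t,d} = C_t^{\top} h_{t,d}$.

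The key observation is then that this recurrence is entirely decoupled across the $\mathtt{D}$ channels and that, for each $d$, the driving signal $X_{t,d}$ is a \emph{scalar}. Hence, in the language of \cref{def:head-pattern}, the S6 layer is a multihead sequence transformation with $\mathtt{H} = \mathtt{D}$ heads and head dimension $\mathtt{P} = 1$: head $d$ is the scalar-input SSM operator $\mathsf{SSM}$ with transition $\bar A_{\cdot,d}$, input vector $\bar B_{\cdot,d}$, and output vector $C$. This establishes the first bullet. For the second bullet I would read off the tying pattern directly from the shapes: $A$ (equivalently $\dt$, hence $\bar A$) carries a channel/head index $d$, so it is independent per head; $B$ and $C$ carry no channel index and are therefore shared across all $\mathtt{H} = \mathtt{D}$ heads; and $X$ carries the head index. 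Comparing with equation \eqref{eq:multivalue} --- where $X$ and $A$ have the head axis while $B$ and $C$ do not --- this is precisely the multi-input SSM (MIS) head pattern, specialized to $\mathtt{P} = 1$. Finally, invoking the SSD dictionary of \cref{fig:ssd} ($C \leftrightarrow Q$, $B \leftrightarrow K$, $X \leftrightarrow V$), ``$B, C$ shared across the channels of $X$'' translates to ``$K, Q$ shared across the channels of $V$'', which is exactly multi-value attention (MVA).

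There is no substantive obstacle here; the only place that requires care is the treatment of discretization. One must verify that folding $\dt_{t,d}$ into $\bar A_{t,d}$ and $\bar B_{t,d}$ introduces no hidden coupling between channels (it does not, since $\dt$ is itself indexed only by $d$ among channel indices) and, conversely, that no parameter other than $A$/$\dt$ secretly depends on $d$ --- so that the tying pattern is genuinely MIS/MVA and not, say, the MHS/MHA pattern of \eqref{eq:multihead}. A related subtlety worth stating explicitly is the distinction between two different ``dimensions'': the diagonal structure of each $\bar A_{t,d}$ fixes the state size $\mathtt{N}$ (the state-expansion axis), whereas $\mathtt{H} = \mathtt{D}$ and $\mathtt{P} = 1$ concern how parameters are tied across the $\mathtt{D}$ input channels; conflating the two is the main pitfall to avoid. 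With these points noted, the proposition follows by direct comparison of tensor shapes.
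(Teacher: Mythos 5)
Your proposal is correct and matches the paper's treatment: the paper states this proposition without an explicit proof, treating it as an immediate consequence of comparing the S6 parameter shapes ($A,\dt$ indexed per channel; $B,C$ carrying no channel axis; $X$ per channel) against the head-pattern taxonomy of \eqref{eq:multihead}--\eqref{eq:multivalue}, which is exactly the bookkeeping you carry out. Your added care about the discretization not introducing cross-channel coupling and about not conflating the state dimension $\mathtt{N}$ with the head structure is a reasonable elaboration of the same argument, not a different route.
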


We can also ablate these head pattern variants when applied to SSD (\cref{sec:experiments:ablations:kernels}).
Interestingly, despite being controlled in parameter counts and total state dimension,
there is a noticeable difference in downstream performance.
We empirically find that the MVA pattern as originally used in Mamba performs best.

\paragraph{Grouped Head Patterns.}

The ideas of multi-query attention can be extended to \emph{grouped-query attention}~\citep{ainslie2023gqa}: instead of $1$ K and V head, one can create $\mathtt{G}$ independent K and V heads, where $1 < \mathtt{G}$ and $\mathtt{G}$ divides $\mathtt{H}$.
This is motivated both by bridging the performance difference between multi-query and multi-head attention,
and enabling more efficient tensor parallelism by setting $\mathtt{G}$ to be a multiple of the number of shards (\cref{sec:systems}).

Similarly, the multi-input SSM head pattern used in Mamba-2 can be easily extended to \textbf{grouped-input SSM (GIS)}, or synonymously \textbf{grouped-value attention (GVA)}.
The generalization is straightforward and we omit the details for simplicity.

\subsection{Other SSD Extensions from Linear Attention}
\label{sec:architecture:kernels}

We describe here an example of architectural modifications to SSD motivated by linear attention.
We ablate these in \cref{sec:experiments:ablations:kernels} as a form of negative result,
finding that they do not significantly improve performance enough to adopt them as default settings.
Nonetheless, these illustrate how the vast literature on attention can be incorporated to
define variants of SSD.
We treat the choice of kernel feature map as a hyperparameter in the Mamba-2 architecture,
and expect other simple modifications inspired by attention to be possible as well.

\paragraph{Kernel Attention Approximations to Softmax Attention.}
Many variants of linear attention or kernel attention are motivated by viewing the attention scores $\mathsf{softmax}(QK^{\top})$ as composed of
\begin{enumerate}
  \item An exponential kernel $Z = \exp(QK^{\top})$, which can be approximated by $Z = \psi(Q)\psi(K)^{\top}$ for some kernel feature map.
  \item Normalizing the kernel so that rows sum to $1$ via $M = G / G \bm{1} \bm{1}^{\top}$, where the division happens elementwise and $\bm{1}$ is the all 1's vector.
\end{enumerate}

\paragraph{Exponential Kernel Feature Maps.}

In Mamba-2, we incorporate a flexible kernel feature map, and apply it to the $B$ and $C$ branches (corresponding to the $K$ and $V$ branches in attention).
The feature map can also be optionally applied to the $X$ ($V$) branch, for simplicity and symmetry.
This is represented in \cref{fig:architecture} by an arbitrary nonlinearity.
By default, we simply choose $\psi$ to be an elementwise Swish / SiLU function~\citep{hendrycks2016gaussian,ramachandran2017swish}.
We explore other options in the ablations in \cref{sec:experiments:ablations:kernels},
including feature maps used by Linear Attention, Performer, Random Feature Attention, and cosFormer (\cref{sec:attention:kernel}).

\paragraph{Incorporating a Normalization (Denominator) Term.}

To find the denominator term, we simply have to compute $M \bm{1}$.
But recall that the final output of the model is just $Y = MX$ (equation \eqref{eq:ssm-quad}).
So the normalization terms can be found
simply by augmenting $X$ with an extra column $\bm{1}$, resulting in a tensor of shape $\mathtt{(T, P+1)}$.

Note that in this case, the kernel feature map $\psi$ must be positive so that the sum is positive.

\section{Systems Optimization for SSMs}
\label{sec:systems}

We describe several systems optimizations for SSMs, in particular the Mamba-2 architecture, for large-scale efficient training and inference.
In particular, we focus on tensor parallel and sequence parallel for large-scale training, as a well variable-length sequences for efficient finetuning and inference.

\subsection{Tensor Parallel}
\label{subsec:tp}

Tensor parallelism (TP)~\citep{shoeybi2019megatron} is a model parallelism technique that splits each layer (e.g., attention, MLP) to run on multiple accelerators such as GPUs.
This technique is widely used to train most large models~\citep{brown2020language, chowdhery2022palm, touvron2023llama, touvron2023llama2}  on GPU clusters where each node typically has 4-8 GPUs with fast networking such as NVLink.
TP was originally developed for the Transformer architecture, and it is not straight-forward to adapt it other architecture.
We first show the challenge of using TP with the Mamba architecture, and the show how the Mamba-2 architecture is designed to make TP efficient.

Recall the Mamba architecture, with a single input $u \in \mathbb{R}^{L \times d}$ (no batching for simplicity), input projection matrices $W^{(x)}, W^{(z)} \in \mathbb{R}^{d \times ed}$ where $e$ is the expansion factor (typically 2), and output projection matrix $W^{(o)} \in \mathbb{R}^{ed \times d}$:
\begin{align*}
   x &= u {W^{(x)}}^\top \in \mathbb{R}^{L \times ed} \\
   z &= u {W^{(z)}}^\top \in \mathbb{R}^{L \times ed} \\
   x_c &= \mathrm{conv1d}(x) \in \mathbb{R}^{L \times ed} \quad \text{(depthwise, independent along $d$)} \\
   \Delta, B, C &= \text{low-rank projection}(x_c) \\
   y &= SSM_{A, B, C, \Delta}(x_c) \in \mathbb{R}^{L \times ed} \quad \text{(independent along $d$)} \\
   y_g &= y \cdot \phi(z)  \quad \text{(gating, e.g., with $\phi$ being SiLU)} \\
  \mathrm{out} &= y_g {W^{(o)}}^\top \in \mathbb{R}^{L \times d}.
\end{align*}
With TP, suppose that we want to split the computation along 2 GPUs. It is easy to split the input projection matrices $W^{(x)}$ and $W^{(z)}$ into two partitions each of size $d \times \frac{ed}{2}$.
Then each GPU would hold half of $x_c$ of size $L \times \frac{ed}{2}$.
However, we see that since $\Delta, B, C$ are functions are $x_c$, so we would need an extra all-reduce between the GPUs to get the whole of $x_c$ before computing $\Delta, B, C$.
After that the two GPUs can compute the SSM in parallel since they are independent along $d$.
At the end, we can split the output projection matrices $W^{(o)}$ into two partitions each of size $\frac{ed}{2} \times d$, and do an all-reduce at the end.
Compared to Transformers, we would incur two all-reduces instead of one, doubling the time spent in communication. For large-scale Transformers training, communication might already take a significant fraction of time (e.g. 10-20\%), and doubling communication would make Mamba not as efficient for large-scale training.

With Mamba-2, our goal is to have only one all-reduce per block, similar to attention or MLP blocks in Transformers.
As a result, we have the projection to get $\Delta, B, C$ directly from $u$ instead of from $x_c$, allowing us to split these projection matrices.
This implies that we have different sets of $\Delta, B, C$ on different GPUs, which is equivalent to having several ``groups'' of $\Delta, B, C$ on a larger ``logical GPU''.
Moreover, we use GroupNorm within each block, with number of groups divisible by the TP degree, so that the GPUs in a TP group do not have a communicate within the block:
\begin{align*}
   x &= u {W^{(x)}}^\top \in \mathbb{R}^{L \times ed} \\
   z &= u {W^{(z)}}^\top \in \mathbb{R}^{L \times ed} \\
   \Delta, B, C &= \text{projection}(u) \quad \text{(one or more groups of $\Delta, B, C$ per GPU)} \\
   x_c &= \mathrm{conv1d}(x) \in \mathbb{R}^{L \times ed} \quad \text{(depthwise, independent along $d$)} \\
   y &= SSM_{A, B, C, \Delta}(x_c) \in \mathbb{R}^{L \times ed} \quad \text{(independent along $d$)} \\
   y_g &= y \cdot \phi(z)  \quad \text{(gating, e.g., with $\phi$ being SiLU)} \\
   y_n &= \mathrm{groupnorm}(y_g) \quad \text{(number of groups divisible by degree of tensor parallel)} \\
   \mathrm{out} &= y_g {W^{(o)}}^\top \in \mathbb{R}^{L \times d}.
\end{align*}

We see that we only need to split the input projection matrices, and the output projection matrices, and only need to do all-reduce at the end of the block. This is similar to the design of TP for attention and MLP layers.
In particular, if we have TP degree 2, we would split $W^{(x)} = [W^{(x)}_1, W^{(x)}_2]$ with $W^{(x)}_i \in \mathbb{R}^{d \times ed/2}$,
$W^{(z)} = [W^{(z)}_1, W^{(z)}_2]$ with $W^{(z)}_i \in \mathbb{R}^{d \times ed/2}$,
and $W^{(o)} = \begin{bmatrix} W^{(o)}_1 \\ W^{(o)}_2 \end{bmatrix}$ with $W^{(o)}_i \in \mathbb{R}^{ed/2 \times d}$.
For $i = 1, 2$, the TP Mamba-2 layer can be written as:
\begin{align*}
   x^{(i)} &= u {W^{(x)}_i}^\top \in \mathbb{R}^{L \times ed / 2} \\
   z^{(i)} &= u {W^{(z)}_i}^\top \in \mathbb{R}^{L \times ed / 2} \\
   \Delta^{(i)}, B^{(i)}, C^{(i)} &= \text{projection}(u) \quad \text{(one or more groups of $\Delta, B, C$ per GPU)} \\
   x_c^{(i)} &= \mathrm{conv1d}(x^{(i)}) \in \mathbb{R}^{L \times ed / 2} \\
   y^{(i)} &= SSM_{A, B, C, \Delta}(x_c^{(i)}) \in \mathbb{R}^{L \times ed/2}  \\
   y_g^{(i)} &= y^{(i)} \cdot \phi(z^{(i)})  \\
   y_n^{(i)} &= \mathrm{groupnorm}(y_g^{(i)}) \quad \text{(number of groups divisible by degree of tensor parallel)} \\
   \mathrm{out}^{(i)} &= y_g^{(i)} {W^{(o)}_i}^\top \in \mathbb{R}^{L \times d / 2} \\
   \mathrm{out} &= \sum_i \mathrm{out}^{(i)}. \quad \text{(summing outputs from all GPUs with an all-reduce)}
\end{align*}
We illustrate tensor parallel with Mamba-2 in~\cref{fig:mamba2_parallelism} (\emph{Left}).
\begin{figure}[!t]
\centering
\begin{minipage}{.4\linewidth}%
  \centering
  \includegraphics[width=\linewidth]{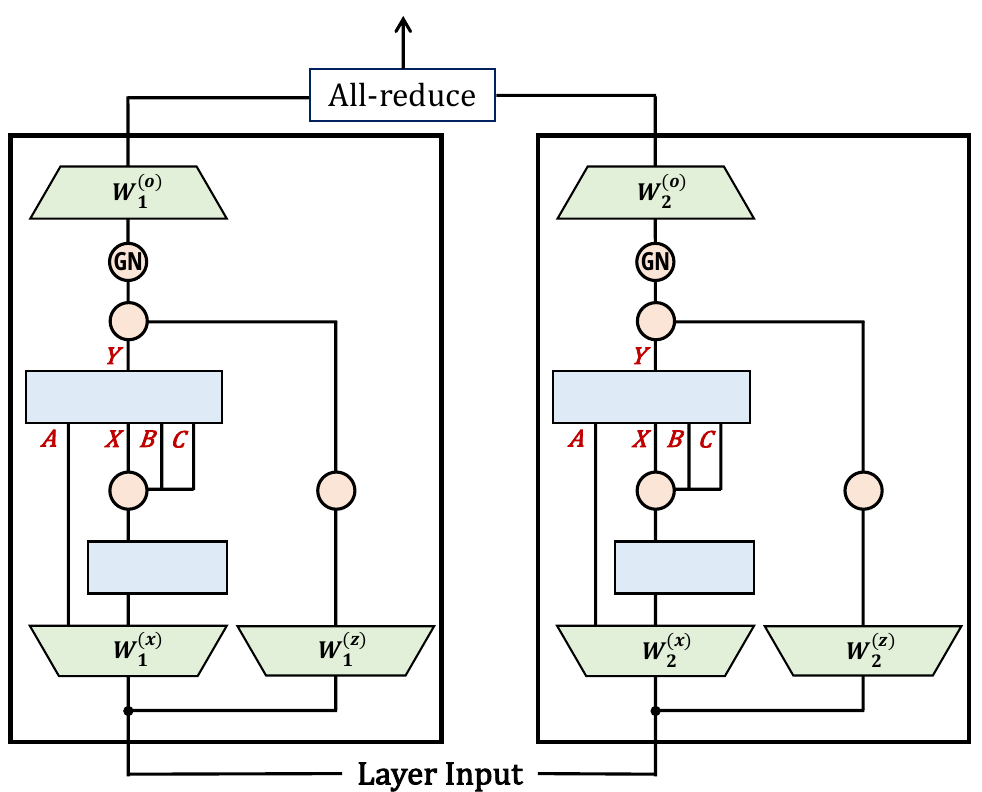}
\end{minipage}
\hfill
\begin{minipage}{.59\linewidth}%
  \centering
  \includegraphics[width=\linewidth]{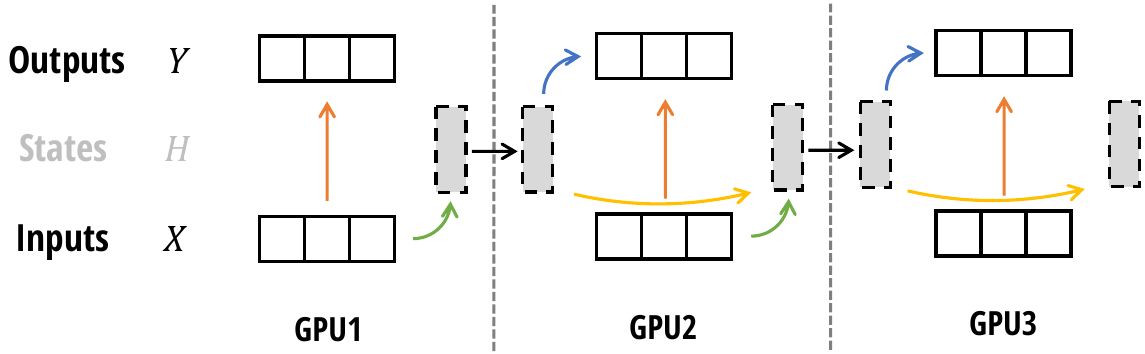}
\end{minipage}
\caption{
  (\textbf{Parallelism with the Mamba-2 Block}.)
  (\emph{Left}: \textbf{Tensor Parallelism})
  We split the input projection matrices $W^{(x)}, W^{(z)}$ and the output projection matrix $W^{(o)}$.
  Each SSM head $(A, B, C, X) \mapsto Y$ lives on a single device.
  Choosing GroupNorm for the final normalization layer avoids extra communication.
  We need one all-reduce per layer, just like the MLP or attention blocks in a Transformer.
  (\emph{Right}: \textbf{Sequence/Context Parallelism})
  Analogous to the SSD algorithm, with multiple devices, we can split along the sequence dimension. Each device computes the state of its sequence, then pass that state to the next GPU.
}
\label{fig:mamba2_parallelism}
\end{figure}

\subsection{Sequence Parallelism}
\label{subsec:sp}

For very long sequences, we might need to split the input and activation to different GPUs along the sequence length dimension.
There are two main techniques:
\begin{enumerate}
\item Sequence parallelism (SP) for the residual and normalization operations: first proposed by~\citet{korthikanti2023reducing}, this technique decomposes the all-reduce in TP as reduce-scatter and all-gather. Noticing that the residual and normalization operations are repeated on the same input for all GPUs in the same TP group, SP splits the activations along the sequence length dimension by performing: reduce-scatter, residual and normalization, then all-gather.

Since the Mamba-2 architecture uses the same residual and normalization structure, SP applies without modification.

\item Sequence parallelism for the token-mixing operations (attention or SSM), also known as ``context parallelism'' (CP).
Several techniques have been developed for attention layer (e.g., Ring attention~\citep{liu2023ring, liu2024world}), with sophisticated load-balancing technique~\citep{brandon2023striped}.
The difficulty with sequence parallelism in attention is that we can split queries and keys into block, but each query block needs to interact with key blocks, leading to communication bandwidth quadratic in the number of workers.

With SSMs, we can split the sequence in a simple manner: each worker takes an initial state, compute the SSM with respect to their inputs, return the final state, and pass that final state to the next worker.
The communication bandwidth is linear in the number of workers.
This decomposition is exactly the same as the block-decomposition in the SSD algorithm (\cref{fig:ssd-algorithm}) to split into blocks / chunks.
We illustrate this context parallelism in~\cref{fig:mamba2_parallelism} (\emph{Right}).

\end{enumerate}

\subsection{Variable Length}
\label{subsec:varlen}

While pretraining often uses the same sequence lengths for the batch, during finetuning or inference, the model might need to process different input sequences of different lengths.
One naive way to handle this case is to right-pad all sequences in the batch to the maximum length, but this can be inefficient if sequences are wildly different lengths.
For transformers, sophisticated techniques have been develop to avoid padding and do load-balancing between GPUs~\citep{zeng2022boosting, zhai2023bytetransformer}, or packing multiple sequences in the same batch and adjust the attention mask~\citep{ding2024fewer, pouransari2024dataset}.
With SSMs and Mamba in particular, we can handle variable sequence lengths by simply treating the whole batch as one long sequence, and avoid passing the states between individual sequences.
This is equivalent to simply setting $A_t = 0$ for tokens $t$ at the end of one sequence to prevent it from passing information to the token $t + 1$, which belongs to a different sequence.

\section{Empirical Validation}
\label{sec:experiments}

We empirically evaluate Mamba-2 on synthetic recall tasks that have been challenging for recurrent models (\cref{sec:experiments:mqar}), and standard language modeling pre-training and downstream evaluations (\cref{sec:experiments:lm}).
We validate that our SSD algorithm is much more efficient than Mamba-1 (\cref{sec:experiments:benchmark}) and comparable to optimized attention for moderate sequence lengths.
\iftoggle{arxiv}{
Finally, we ablate various design choices in the Mamba-2 architecture (\cref{sec:experiments:ablations}).
}{}

\begin{figure*}[!t]
  \centering
  \includegraphics[width=\linewidth]{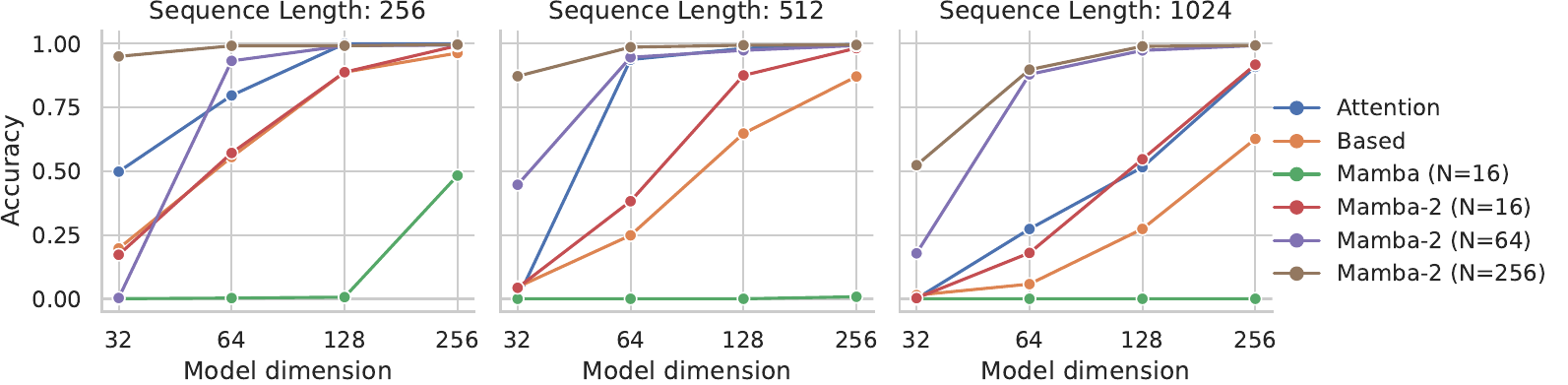}
  \caption{
    (\textbf{Multi-Query Associative Recall (MQAR)}).
    Associative recall tasks are challenging for SSMs, which must memorize all relevant information into their recurrent state.
    The SSD layer combined with improved architecture allows for much larger state sizes in Mamba-2,
    which performs significantly better than Mamba-1 and even vanilla attention.
  }
  \label{fig:mqar}
\end{figure*}

\begin{figure}[!t]
  \centering
    \centering
    \includegraphics[width=\iftoggle{arxiv}{0.7\linewidth}{\linewidth}]{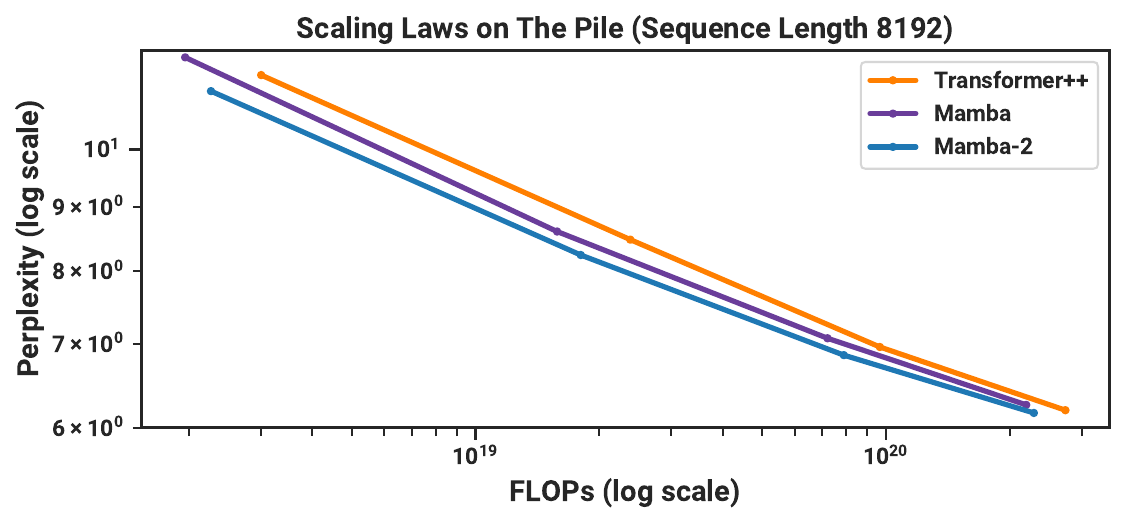}
  \caption{
    (\textbf{Scaling Laws}.) %
    Models of size $\approx 125M$ to $\approx 1.3B$ parameters, trained on the Pile.
    Mamba-2 matches or exceeds the performance of Mamba as well as a strong ``Transformer++'' recipe.
    Compared to our Transformer baseline, Mamba-2 is Pareto dominant on performance (perplexity), theoretical FLOPs, and actual wall-clock time.
  }
  \label{fig:lm-scaling}
  \iftoggle{arxiv}{}{\vspace{-1em}}
\end{figure}

\begin{table*}[!ht]
  \small
  \centering
  \captionsetup{font=small}
  \caption{
    (\textbf{Zero-shot Evaluations}.) Best results for each size in bold, second best unlined.
    We compare against open source LMs with various tokenizers, trained for up to 300B tokens.
    Pile refers to the validation split, comparing only against models trained on the same dataset and tokenizer (GPT-NeoX-20B).
    For each model size, Mamba-2 outperforms Mamba, and generally matches Pythia at twice the model size.
    Full results in \cref{table:downstream_zeroshot_full}.
  }
  \resizebox{0.99\linewidth}{!}
  {
    \begin{tabular}{@{}llllllllllll@{}}
      \toprule
      \sc{Model}                     & \sc{Token.} & \sc{Pile}          & \sc{LAMBADA}       & \sc{LAMBADA}       & \sc{HellaSwag}     & \sc{PIQA}          & \sc{Arc-E}         & \sc{Arc-C}           & \sc{WinoGrande}    & \sc{OpenbookQA}      & \sc{Average} \\
                                     &             & \sc{ppl $\downarrow$}       & \sc{ppl $\downarrow$}       & \sc{acc $\uparrow$}       & \sc{acc $\uparrow$}       & \sc{acc $\uparrow$}       & \sc{acc $\uparrow$}       & \sc{acc $\uparrow$}         & \sc{acc $\uparrow$}       & \sc{acc $\uparrow$}         & \sc{acc $\uparrow$} \\
                                        \midrule
      Pythia-1B                      & NeoX        & $7.82$             & $7.92$             & $56.1$             & $47.2$             & $70.7$             & $57.0$             & $27.1$               & $53.5$             & $31.4$               & $49.0$ \\
      Mamba-790M                     & NeoX        & $\underline{7.33}$ & $\underline{6.02}$ & $\mathbf{62.7}$    & $\mathbf{55.1}$    & $\mathbf{72.1}$    & $\mathbf{61.2}$    & $\mathbf{29.5}$      & $\underline{56.1}$ & $\underline{34.2}$   & $\underline{53.0}$ \\
      \textbf{Mamba-2-780M}          & NeoX        & $\mathbf{7.26}$    & $\mathbf{5.86}$    & $\underline{61.7}$ & $\underline{54.9}$ & $\underline{72.0}$ & $\underline{61.0}$ & $\underline{28.5}$   & $\mathbf{60.2}$    & $\mathbf{36.2}$      & $\mathbf{53.5}$ \\
      \midrule
      Hybrid H3-1.3B                 & GPT2        & ---                  & $11.25$            & $49.6$             & $52.6$             & $71.3$             & $59.2$             & $28.1$               & $56.9$             & $34.4$               & $50.3$ \\
      Pythia-1.4B                    & NeoX        & $7.51$             & $6.08$             & $61.7$             & $52.1$             & $71.0$             & $60.5$             & $28.5$               & $57.2$             & $30.8$               & $51.7$ \\
      RWKV4-1.5B                     & NeoX        & $7.70$             & $7.04$             & $56.4$             & $52.5$             & $72.4$             & $60.5$             & $29.4$               & $54.6$             & $34.0$               & $51.4$ \\
      Mamba-1.4B                     & NeoX        & $\underline{6.80}$ & $\underline{5.04}$ & $\underline{65.0}$ & $\underline{59.1}$ & $\mathbf{74.2}$    & $\mathbf{65.5}$    & $$\underline{32.8}$$ & $\mathbf{61.5}$    & $$\underline{36.4}$$ & $\mathbf{56.4}$ \\
      \textbf{Mamba-2-1.3B}          & NeoX        & $\mathbf{6.66}$    & $\mathbf{5.02}$    & $\mathbf{65.7}$    & $\mathbf{59.9}$    & $\underline{73.2}$ & $\underline{64.3}$ & $\mathbf{33.3}$      & $\underline{60.9}$ & $\mathbf{37.8}$      & $\mathbf{56.4}$ \\
      \midrule
      Hybrid H3-2.7B                 & GPT2        & ---                  & $7.92$             & $55.7$             & $59.7$             & $73.3$             & $65.6$             & $32.3$               & $61.4$             & $33.6$               & $54.5$ \\
      Pythia-2.8B                    & NeoX        & $6.73$             & $5.04$             & $64.7$             & $59.3$             & $74.0$             & $64.1$             & $32.9$               & $59.7$             & $35.2$               & $55.7$ \\
      RWKV4-3B                       & NeoX        & $7.00$             & $5.24$             & $63.9$             & $59.6$             & $73.7$             & $67.8$             & $33.1$               & $59.6$             & $37.0$               & $56.4$ \\
      Mamba-2.8B                     & NeoX        & $\underline{6.22}$ & $\underline{4.23}$ & $\underline{69.2}$ & $\underline{66.1}$ & $\underline{75.2}$ & $\mathbf{69.7}$    & $\underline{36.3}$   & $\underline{63.5}$ & $\mathbf{39.6}$      & $\underline{59.9}$ \\
      \textbf{Mamba-2-2.7B}          & NeoX        & $\mathbf{6.09}$    & $\mathbf{4.10}$    & $\mathbf{69.7}$    & $\mathbf{66.6}$    & $\mathbf{76.4}$    & $\underline{69.6}$ & $\mathbf{36.4}$      & $\mathbf{64.0}$    & $\underline{38.8}$   & $\mathbf{60.2}$ \\
      \bottomrule
    \end{tabular}
  }
  \label{table:downstream_zeroshot}
\end{table*}

\begin{figure*}[!ht]
  \centering
  \begin{subfigure}{.5\textwidth}
    \centering
    \includegraphics[width=.95\linewidth]{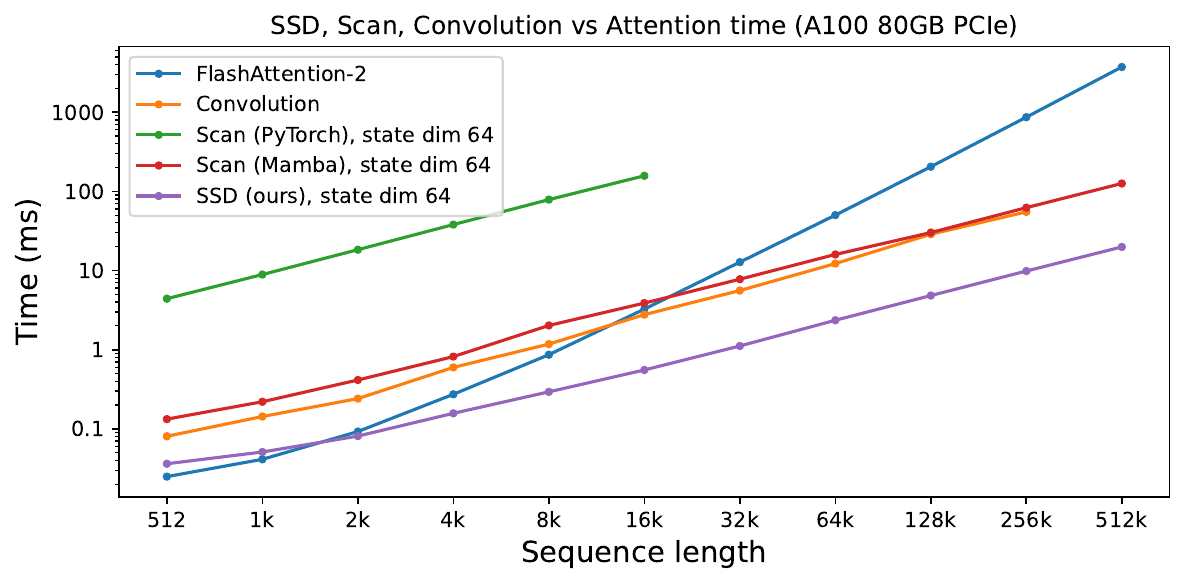}
  \end{subfigure}%
  \begin{subfigure}{.5\textwidth}
    \centering
    \includegraphics[width=.95\linewidth]{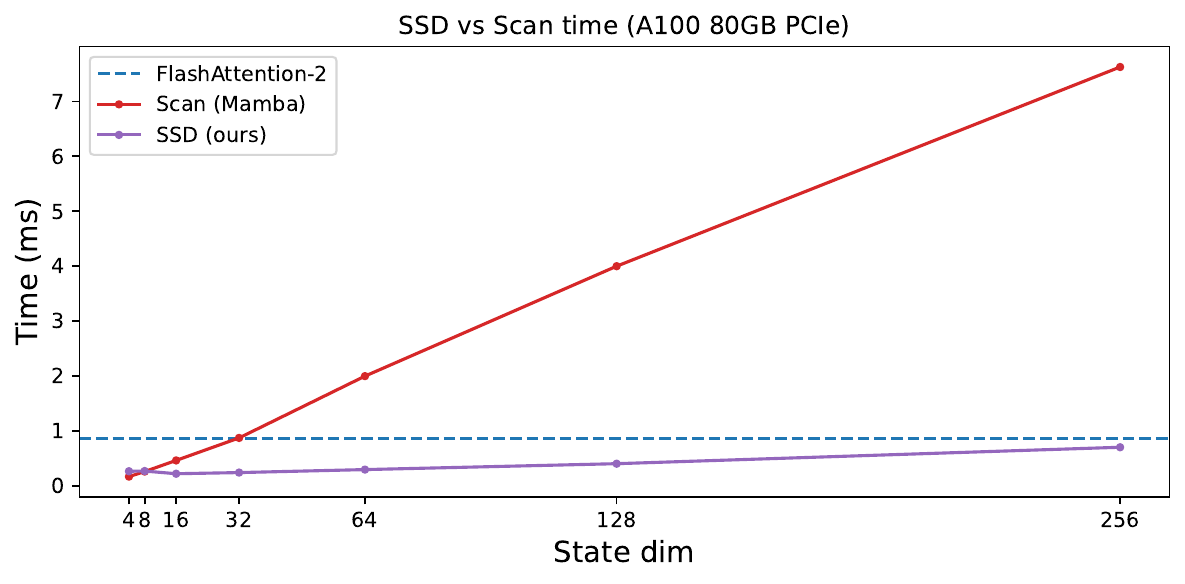}
  \end{subfigure}
  \caption{
    (\textbf{Efficiency Benchmarks}.)
    (\emph{Left}) Our SSD is $2-8\times$ faster than a Mamba fused scan for large state expansion ($N = 64$) and faster than FlashAttention-2 for sequence length 2k and above.
    (\emph{Right}) Sequence length 4K: Increasing state expansion slows down the Mamba optimized scan implementation linearly. SSD can handle much larger state expansion factors without much slowdown.
    \iftoggle{arxiv}{}{\vspace*{-0.75em}}
  }
  \label{fig:scan_benchmark}
\end{figure*}

\subsection{Synthetics: Associative Recall}
\label{sec:experiments:mqar}

Synthetic associative recall tasks have been popular for testing the ability of language models to look up information in their context.
Broadly, they involve feeding autoregressive models pairs of key-value associations, and then prompting the model to produce the correct completion upon being shown a previously-seen key.
The \textbf{multi-query associative recall (MQAR)} task is a particular formulation of this task that requires the model to memorize multiple associations~\citep{arora2024zoology}.
\iftoggle{arxiv}{
The original Mamba paper reported results on related synthetic tasks, in particular Selective Copying~\citep{gu2023mamba} and Induction Heads~\citep{olsson2022context}, which can be seen as easier associative recall tasks.
The MQAR task is also closely related to ``phonebook look-up'' tasks which has been shown to be challenging for recurrent models such as SSMs, due to their finite state capacity~\citep{jelassi2024repeat,de2024griffin}.
}{}

We compare on a challenging version of the MQAR setup from~\citep{arora2024simple}, using a harder task, longer sequences, and smaller models.
Our baselines include standard multi-head softmax attention as well as the Based architecture which combines convolutions, local attention, and a linear attention variant.

Results are shown in \cref{fig:mqar}.
While Mamba-1 struggles on this task, Mamba-2 performs well across all settings.
\iftoggle{arxiv}{
Surprisingly, it is significantly better than Mamba-1 even when the state sizes are controlled ($\mathtt{N}=16$).
(We are not sure which aspect of the architecture is the predominant factor, which remains a question to explore in future work.)
Additionally, this task validates the importance of state size: increasing from $\mathtt{N}=16$ to $\mathtt{N}=64$ and $\mathtt{N}=256$ consistently improves performance on MQAR, as the larger state allows more information (key-value pairs) to be memorized.
}{}

\subsection{Language Modeling}
\label{sec:experiments:lm}

Following standard protocols in LLMs, we train and evaluate the Mamba-2 architecture on standard autoregressive language modeling against other architectures.
We compare both pretraining metrics (perplexity) and zero-shot evaluations.
The model sizes (depth and width) follow GPT3 specifications, from 125m to 2.7B. %
We use the Pile dataset~\citep{pile}, and follow the training recipe described in~\citet{brown2020language}.
This follows the same setup as reported in Mamba~\citep{gu2023mamba};
training details are in~\cref{sec:exp-details}.

\subsubsection{Scaling Laws}
For baselines, we compare against both Mamba and its Transformer++ recipe~\citep{gu2023mamba}, which is based on the PaLM and LLaMa architectures (e.g.\ rotary embedding, SwiGLU MLP, \iftoggle{arxiv}{RMSNorm instead of LayerNorm, no linear bias, and higher learning rates}{etc.}).
As Mamba has already demonstrated that it outperforms the standard Transformer architecture (GPT3 architecture) as well as recent subquadratic architectures (H3~\citep{dao2023hungry}, Hyena~\citep{poli2023hyena}, RWKV-4~\citep{peng2023rwkv}, RetNet~\citep{sun2023retentive}), we omit those in the plot for clarity (see \citet{gu2023mamba} for comparisons).

\iftoggle{arxiv}{
\cref{fig:lm-scaling} shows scaling laws under the standard Chinchilla~\citep{hoffmann2022empirical} protocol,
on models from $\approx 125M$ to $\approx 1.3B$ parameters.

}{}

\subsubsection{Downstream Evaluations}

\cref{table:downstream_zeroshot} shows the performance of Mamba-2 on a range of popular downstream zero-shot evaluation tasks, compared to the most well-known open source models at these sizes,
most importantly Pythia~\citep{biderman2023pythia} which were trained with the same tokenizer, dataset, and training length (300B tokens) as our models.

\subsubsection{Hybrid Models: Combining SSD Layer with MLP and Attention}
\label{sec:hybrid}

Recent and concurrent work~\citep{dao2023hungry,de2024griffin,lieber2024jamba,glorioso2024zamba} suggests that a hybrid architecture with both SSM layers and
attention layers could improve the model quality over that of a Transformer, or a
pure SSM (e.g., Mamba) model, especially for in-context learning.
We explore the different ways that SSD layers can be combined with attention and
MLP to understand the benefits of each.
Empirically we find that having around 10\% of the total
number of layers being attention performs best.
Combining SSD layers, attention layers, and MLP also works better than either
pure Transformer++ or Mamba-2.

\paragraph{SSD and Attention}
We find that SSD and attention layers are complementary: by themselves (e.g.
in the Mamba-2 architecture vs. Transformer++) their performance (measured by
perplexity) is nearly the same, but a mixture of SSD and attention layers
outperforms the pure Mamba-2 or Transformer++ architecture. We show some results
(\cref{tab:ssd_attn}) for the 350M model (48 layers) trained to 7B tokens on the Pile with the GPT-2 tokenizer (same
number of parameters, same hyperparameters, same training and validation set).
Adding in just a few attention layers already yields notable improvement and
strikes the best balance between quality and efficiency. We hypothesize that
the SSM layers function well as a general sequence-to-sequence mapping, and
attention layers act as a retrieval mechanism to quickly refer to previous
tokens in the sequence instead of forcing the model to compress all the context
to its memory (SSM states).

\begin{table*}[ht]
  \small
  \caption{
    (\textbf{Combining SSD and Attention Blocks}.) Perplexity of a 350M model
    with 48 layers, with different number of attention layers.
    Having around a 10\% ratio of attention layers performs best.
  }
  \centering
  \begin{tabular}{@{}llllllllllllll@{}}
    \toprule
    \sc{Num.\ Attn Blocks} & 0 (Mamba-2) & 1 & 2 & 3 & 4 & 5 & 6 & 7 & 9 & 11 & 15 & 24 & Transformer++ \\
    \midrule
    \sc{Perplexity} $\downarrow$ & 8.60 & 8.38 & 8.32 & 8.29 & 8.29 & 8.28 & \textbf{8.26} & 8.27 & 8.28 & 8.30 & 8.34 & 8.50 & 8.68\\
    \bottomrule
  \end{tabular}
  \label{tab:ssd_attn}
\end{table*}

\paragraph{Hybrid Models with SSD, MLP, and Attention}

We compare different ways that SSD can be combined with the (gated) MLP and attention layers, and evaluate at the 2.7B scale (64 layers), trained to 300B tokens on the Pile (same
number of parameters, same hyperparameters, same training and validation set, same data order):
\begin{enumerate}
\item Transformer++: 32 attention layers and 32 gated MLP, interleaving.
\item Mamba-2: 64 SSD layers.
\item Mamba-2-MLP: 32 SSD and 32 gated MLP layers, interleaving.
\item Mamba-2-Attention: 58 SSD layers and 6 attention layers (at indices 9, 18, 27, 36, 45, 56)\footnote{In small-scale experiments, we find that as long as the attention layers are spaced out, not at the very beginning or at the very end, the model quality does not depend very much on the exact location of the attention layers.}.
\item Mamba-2-MLP-Attention: 28 SSD layers and 4 attention layers, interleaving with 32 gated MLP layers.
\end{enumerate}
We report the validation perplexity on the Pile, as well as zero-shot evaluation, in~\cref{table:hybrid_eval}.
In general, the quality of Transformer++ and Mamba-2 models are around the same.
We see that adding just 6 attention layers noticeably improves over the pure Mamba-2 model (and over Transformer++). Adding MLP layers reduces model quality, but can (i) speed up training and inference due to the simplicity and hardware-efficiency of the MLP layer (ii) be easier to up-cycle to MoE models by replacing MLP layers with mixture-of-experts.
\begin{table*}[!ht]
  \centering
  \captionsetup{font=small}
  \caption{
    (\textbf{Zero-shot Evaluations}.) Best results for each size in bold.
    We compare different ways SSD, MLP, and attention layers can be combined, evaluated at 2.7B scale trained to 300B tokens on the Pile.
  }
  \resizebox{0.99\linewidth}{!}
  {
    \begin{tabular}{@{}llllllllllll@{}}
      \toprule
      \sc{Model}                  & \sc{Token.} & \sc{Pile}             & \sc{LAMBADA}          & \sc{LAMBADA}        & \sc{HellaSwag}      & \sc{PIQA}           & \sc{Arc-E}          & \sc{Arc-C}          & \sc{WinoGrande}     & \sc{OpenbookQA}     & \sc{Average} \\
                                  &             & \sc{ppl $\downarrow$} & \sc{ppl $\downarrow$} & \sc{acc $\uparrow$} & \sc{acc $\uparrow$} & \sc{acc $\uparrow$} & \sc{acc $\uparrow$} & \sc{acc $\uparrow$} & \sc{acc $\uparrow$} & \sc{acc $\uparrow$} & \sc{acc $\uparrow$} \\
      \midrule
      Transformer++               & NeoX        & 6.13                  & 3.99                  & \underline{70.3}    & 66.4                & 75.2                & 67.7                & \underline{37.8}    & 63.9                & \textbf{40.4}       & 60.2 \\
      Mamba-2                     & NeoX        & 6.09                  & 4.10                  & 69.7                & \underline{66.6}    & \textbf{76.4}       & 69.6                & 36.4                & 64.0                & 38.8                & 60.2 \\
      Mamba-2-MLP                 & NeoX        & 6.13                  & 4.18                  & 69.3                & 65.0                & \textbf{76.4}       & 68.1                & 37.0                & 63.1                & 38.2                & 59.6 \\
      Mamba-2-Attention           & NeoX        & \textbf{5.95}         & \textbf{3.85}         & \textbf{71.1}       & \textbf{67.8}       & \underline{75.8}    & \underline{69.9}    & \underline{37.8}    & \textbf{65.3}       & 39.0                & \textbf{61.0} \\
      Mamba-2-MLP-Attention       & NeoX        & \underline{6.00}      & \underline{3.95}      & 70.0                & \underline{66.6}    & 75.4                & \textbf{70.6}       & \textbf{38.6}       & \underline{64.6}    & \underline{39.2}    & \underline{60.7} \\
      \bottomrule
    \end{tabular}
  }
  \label{table:hybrid_eval}
\end{table*}

\subsection{Speed Benchmarks}
\label{sec:experiments:benchmark}

We benchmark the speed of the SSD algorithm against Mamba's scan implementation and FlashAttention-2 (\cref{fig:scan_benchmark}).
SSD, thanks to its reformulation to use matrix multiplication as a subroutine, can exploit specialized matrix multiplication (matmul) units on GPUs, also known as tensor cores.
As a result, it is 2-8$\times$ faster than Mamba's fused associative scan, which does not leverage matmul units.
Due to its linear scaling in sequence length, SSD is faster than FlashAttention-2 starting at sequence length $2K$.

However, we note that the Mamba-2 model as a whole might not be as efficient to train as Transformer at short sequence length (e.g. at $2K$), since a Transformer with $L$ layers would have $\frac{L}{2}$ MLP layers and $\frac{L}{2}$ attention layers, while a Mamba-2 model would have $L$ SSD layers for the same number of parameters.
Generally the MLP layers are very hardware efficient since they consist of simple matrix multiplication and pointwise linearity.
As shown in~\cref{sec:hybrid}, one can also combine $\frac{L}{2}$ SSD layers and $\frac{L}{2}$ MLP layers to speed up training at short sequence length.

\iftoggle{arxiv}{
\subsection{Architecture Ablations}
\label{sec:experiments:ablations}

\subsubsection{Block Design}
\label{sec:experiments:ablations:block}

\cref{sec:architecture:block} introduces the Mamba-2 block, which has small modifications to the Mamba-1 block which are partly motivated by the connection to attention and also to improve the scalability of Mamba-2.
\cref{tab:ablations-architecture} ablates these architecture changes to the block, which occur outside of the core SSM layer.

The ablations validate that parallel projections to create $(A,B,C,X)$ saves parameters and performs slightly better than Mamba's sequential projections.
More importantly, this modification is amenable to tensor parallelism at larger model sizes (\cref{sec:systems}).
Additionally, the extra normalization layer also slightly improves performance.
More importantly, preliminary experiments at larger scales observed that it also helps with training stability.

\begin{table}
  \caption{
    (\textbf{Ablations: Mamba-2 block}.)
    We ablate the major differences between the Mamba-2 and Mamba-1 neural network blocks (\cref{fig:architecture}, \cref{sec:architecture:block}).
    Note that these components are independent of the inner sequence mixing layer; in these ablations, we use SSD for the inner SSM layer (differing from the S6 layer of Mamba-1).
  }
  \centering
  \begin{tabular}{@{}lllll@{}}
    \toprule
    Block & $ABCX$ \sc{Projections} & \sc{Extra Normalization} & \sc{Parameters} & \sc{Perplexity} \\
    \midrule
    Mamba-1 & Sequential              & \xmark                   & 129.3M          & 11.76 \\
            & Sequential              & \cmark                   & 129.3M          & 11.54 \\
            & Parallel                & \xmark                   & 126.5M          & 11.66 \\
    Mamba-2 & Parallel                & \cmark                   & 126.5M          & 11.49 \\
    \bottomrule
  \end{tabular}
  \label{tab:ablations-architecture}
\end{table}

\subsubsection{Head Structure}

\cref{sec:architecture:multihead} describes how the dimensions of the $B, C, X$ projections can be viewed as a hyperparameter analogous to notions of multi-head attention and multi-query attention.
We also showed how the original Mamba architecture is analogous to multi-value attention (\cref{prop:mamba-multihead}),
which was a choice that naturally developed from the state-space model point of view and was not previously ablated.

\cref{tab:ablations-heads} ablates choices of the multi-head structure for the Mamba-2 architecture.
Strikingly, we find a large difference between multi-value and multi-query or multi-key head patterns,
despite seeming very similar.
Note that this is not explained by the total state size, which is the same for all of them (equal to $\mathtt{HPN}$ or the product of the number of heads, head dimension, and state dimension).

We also compare to multi-head patterns where the number of $C, B, X$ (analogous to $Q, K, V$) heads is equal.
We compare against the standard multi-head pattern, as well as one with aggressive sharing where they all have only $1$ head.
Note that in the latter case, the model still has $\mathtt{H}$ different sequence mixers $M$, because each head still has a different $A$.
When parameter matched, these multi-head patterns perform similarly to each other, in between the MVA and MQA/MKA patterns.

\begin{table}[!th]
  \small
  \centering
  \captionsetup{type=table}
  \caption{
    (\textbf{Ablations: Multi-head structure}.)
    All models have state expansion factor $N=64$ and head size $P=64$ and are trained to Chinchilla scaling law token counts.
    The number of $A$ heads is always equal to the total heads $\mathtt{H}$, i.e. each head has a separate input-dependent $A$ decay factor.
    (\emph{Top}) 125M models, 2.5B tokens
    (\emph{Bottom}) 360M models, 7B tokens
  }
  \begin{tabular}{@{}lllllllll@{}}
    \toprule
    \sc{SSM Head Pattern} & \sc{Attn. Analog} & $A$ \sc{heads} & $B$ \sc{heads} & $C$ \sc{heads} & $X$ \sc{heads} & \sc{Layers} & \sc{Params} & \sc{Ppl.} \\
    \midrule
    Multi-input (MIS)     & Multi-value (MVA) & 24             & 1              & 1              & 24             & 24          & 126.5M      & $\mathbf{11.66}$ \\
    Multi-contract (MCS)  & Multi-query (MQA) & 24             & 1              & 24             & 1              & 24          & 126.5M      & $12.62$ \\
    Multi-expand (MES)    & Multi-key (MKA)   & 24             & 24             & 1              & 1              & 24          & 126.5M      & $12.59$ \\
    Multi-head (MHS)      & Multi-head (MHA)  & 24             & 24             & 24             & 24             & 15          & 127.6M      & $12.06$ \\
    Multi-state (MSS)     & -                 & 24             & 1              & 1              & 1              & 36          & 129.6M      & $12.00$ \\
    \midrule
    Multi-input (MIS)     & Multi-value (MVA) & 32             & 1              & 1              & 32             & 48          & 361.8M      & $\mathbf{8.73}$ \\
    Multi-contract (MCS)  & Multi-query (MQA) & 32             & 1              & 32             & 1              & 48          & 361.8M      & $9.33$ \\
    Multi-expand (MES)    & Multi-key (MKA)   & 32             & 32             & 1              & 1              & 48          & 361.8M      & $9.36$ \\
    Multi-head (MHS)      & Multi-head (MHA)  & 32             & 1              & 1              & 1              & 70          & 361.3M      & $9.01$ \\
    Multi-state (MSS)     & -         & 32             & 32             & 32             & 32             & 29          & 357.3M      & $9.04$ \\
    \bottomrule
  \end{tabular}
  \label{tab:ablations-heads}
\end{table}

\subsubsection{Attention Kernel Approximations}
\label{sec:experiments:ablations:kernels}

\cref{sec:architecture:kernels} noted how SSD can be combined with ideas from the linear attention literature,
such as various forms of kernel approximations.
We ablate several variants of these suggested by previous works in \cref{tab:ablations-kernel}.
These include the cosFormer~\citep{qin2022cosformer}, Random Feature Attention~\cite{peng2021random}, and Positive Random Features (Performer)~\citep{choromanski2021rethinking}.

We also ablate adding a normalization term, akin to the denominator of the softmax function in standard attention.
We found that this introduced instabilities to most variants, but slightly improved performance for the ReLU activation function $\psi$.

\cref{tab:ablations-kernel-based} also tests more recent proposals to improve linear attention that involve expanding the feature dimension (Based~\citep{arora2024simple} and ReBased~\citep{aksenov2024linear}).
These linear attention extensions aim to appropriate the $\exp$ kernel with a quadratic approximation.
ReBased also proposes to replace the QK activation function with a layer normalization;
from an SSM-centric view we apply a normalization on top of $(B, C)$ before applying the SSM function.
We note that this technique has been independently proposed as the ``QK-Norm'' for softmax attention~\citep{team2024chameleon}
and an ``internal normalization'' for Mamba~\citep{lieber2024jamba}.

Overall, \cref{tab:ablations-kernel} and \cref{tab:ablations-kernel-based} found that the kernel approximation methods we tried did not seem to improve over simple pointwise non-linear activation functions for $\psi$.
Thus our default settings for Mamba-2 used $\psi(x) = \mathsf{Swish}(x)$ to follow Mamba-1,
but we suggest that removing this activation entirely may be a simpler choice that we did not extensively test.

We emphasize however that SSD and vanilla linear attention differ in the inclusion of the 1-semiseparable mask $L$, while the various linear attention methods in the literature were derived to approximate softmax attention without this term; thus, our negative results may be not unexpected.

\begin{figure}[!t]
  \begin{minipage}{.5\linewidth}
    \centering
    \captionsetup{type=table}
    \caption{
      (\textbf{Ablations: Kernel approximations}.)
      We test various proposals for the kernel activation function $\psi$, including linear attention variants aiming to approximate the $\exp$ kernel from standard softmax attention.
    }
    \begin{tabular}{@{}llll@{}}
      \toprule
      \sc{Kernel activation $\varphi$} & \sc{Perplexity} \\
      \midrule
      none                             & $11.58$ \\
      Swish                            & $11.66$ \\
      Exp                              & $11.62$ \\
      ReLU                             & $11.73$ \\
      ReLU + normalization             & $11.64$ \\
      \midrule
      cosFormer                        & $11.97$ \\
      Random Feature Attention         & $11.57$ \\
      Positive Random Features (Performer) & $12.21$ \\
      \bottomrule
    \end{tabular}
    \label{tab:ablations-kernel}
  \end{minipage}
\hfill
\begin{minipage}{.45\linewidth}
  \centering
  \captionsetup{type=table}
  \caption{
    (\textbf{Ablations: Kernel approximations}.)
    We test the (Re)Based methods for linear attention approximations, which involve expanded feature maps.
    (\emph{Top}) 130M models. (\emph{Top}) 380M models with $N=256$.
  }
  \begin{tabular}{@{}lll@{}}
    \toprule
    \sc{Kernel activation $\varphi$} & \sc{Perplexity} \\
    \midrule
    Swish                            & $11.67$         \\
    Swish + Taylor (Based)           & $12.19$         \\
    LayerNorm                        & $11.50$         \\
    LayerNorm + Square (ReBased)     & $11.84$         \\
    \midrule
    Swish                            & $8.58$ \\
    Swish + Taylor (Based)           & $8.71$ \\
    LayerNorm                        & $8.61$ \\
    LayerNorm + Square (ReBased)     & $8.63$ \\
    \bottomrule
  \end{tabular}
  \label{tab:ablations-kernel-based}
\end{minipage}
\end{figure}
}{}

\section{Related Work and Discussion}
\label{sec:related}

The state space duality framework bridges connections between SSMs, structured matrices, and attention.
We discuss in more depth the relations between SSD and these concepts more broadly.
Using ideas from each of the viewpoints, we also suggest some directions that the SSD framework can be extended in future work.

\subsection{State Space Models}
\label{sec:related:ssm}

Structured state space models can be characterized along the axes
\begin{enumerate}[label=(\roman*)]
  \item whether it is time-invariant or time-varying.
  \item the dimensionality of the system.
  \item the structure on the recurrent transitions $A$.
\end{enumerate}
SSD can be described as a selective SSM with SISO dimensions and scalar-identity structure.

\paragraph{Time Variance (Selectivity).}
The original structured SSMs (S4) were linear time-invariant (LTI) systems~\citep{gu2023thesis,gu2022efficiently} motivated by continuous-time online memorization~\citep{gu2020hippo,gu2021combining,gu2023train}.
Many variants of structured SSMs have been proposed~\citep{gupta2022diagonal,gu2022parameterization,smith2023s5,ma2023mega,dao2023hungry}, including several that drop the recurrence and focus on the convolutional representation of LTI SSMs~\citep{li2023makes,poli2023hyena,fu2023simple,qin2023toeplitz}.

SSD is a time-varying structured SSM, also known as a \textbf{selective SSM} introduced in Mamba~\citep{gu2023mamba}.
Selective SSMs are closely related to gating mechanisms of RNNs, including classical RNNs such as the LSTM~\citep{lstm} and GRU~\citep{chung2014empirical} as well as more modern variants such as the QRNN~\citep{bradbury2016quasi}, SRU~\citep{lei2017simple,lei2021attention}, RWKV~\citep{peng2023rwkv}, HGRN~\citep{qin2023hierarchically}, and Griffin~\citep{de2024griffin,botev2024recurrentgemma}.
These RNNs differ in their parameterizations in various ways, most importantly in the lack of a state expansion.

\paragraph{Dimensionality and State Expansion.}
An important characteristic of SSD, shared by previous SSMs in its lineage (S4, H3, Mamba), is that it is a \textbf{single-input single-output (SISO)} system where input channels are processed independently.
This leads to a much larger effective state size of $\mathtt{ND}$ where $\mathtt{N}$ is the SSM state size (also called state expansion factor) and $\mathtt{D}$ is the standard model dimension.
Traditional RNNs either have $\mathtt{N}=1$ or are
multi-input multi-output (MIMO) with dense $B, C$ matrices,
either of which leads to a smaller state.
While MIMO SSMs have been shown to work well in some domains~\citep{smith2023s5,orvieto2023resurrecting,lu2023structured}, Mamba showed that state expansion is crucial for information-dense domains such as language.
One of the main advantages of SSD is allowing for even larger state expansion factors without slowing down the model.
Many subsequent works have since adopted state expansion (\cref{sec:related:concurrent}).

\paragraph{Structure.}
Compared to previous structured SSMs,
the main restriction of SSD is on the expressivity of the state transitions $A_t$.
We note that more general SSMs, such as the case of diagonal $A_t$, have the same theoretical efficiency as SSD, but are less hardware-friendly.
This is because the dual quadratic form loses its attention-like interpretation and becomes more difficult to compute.
Thus compared to Mamba, SSD differs only in a slightly more restrictive form of diagonal $A_t$,
and trades off this expressivity for improved hardware efficiency (and ease of implementation).

We hypothesize that it may be possible to refine our structured matrix algorithms to improve to the general diagonal SSM case as well.

\subsection{Structured Matrices}
\label{sec:related:matrices}

The first viewpoint of the state space duality adopts the viewpoint of these models as \textbf{matrix sequence transformations} or ``matrix mixers'':
sequence transformations (\cref{def:sequence-transformation}) that can be represented as matrix multiplication (by a $\mathtt{T} \times \mathtt{T}$ matrix) along the sequence dimension $\mathtt{T}$.

Several such matrix mixers have been proposed before, where the primary axis of variation is the representation of the matrix.
These include MLP-Mixer~\citep{tolstikhin2021mlp} (unstructured matrix), FNet~\citep{lee2021fnet} (Fourier Transform matrix), M2~\citep{dao2019learning,dao2020kaleidoscope,dao2022monarch,fu2024monarch} (butterfly/monarch matrix),
Toeplitz matrices~\citep{poli2023hyena,qin2023toeplitz}, and even more exotic structures~\citep{de2018two,thomas2018learning}.

An important characterization is that efficient (sub-quadratic) matrix sequence transformations are exactly those which have \emph{structured matrix mixers}.
A core result of the SSD framework is viewing SSMs as matrix mixers with a particular structure -- semiseparable matrices (\cref{sec:ssm}).
The linear vs. quadratic duality then takes the form of structured matrix multiplication vs. naive matrix multiplication.

The structure matrix representation led to our efficient SSD algorithm through block decompositions of particular semiseparable matrices (\cref{sec:efficient}).
We note that semiseparable matrices are well-studied in the scientific computing literature, and incorporating those ideas may be a promising avenue for more improvements to state space models.
We also suggest that focusing on the matrix mixer viewpoint can lead to more fruitful directions for sequence models,
such as designing principled non-causal variants of Mamba, or finding ways to characterize and bridge the gap between softmax attention and sub-quadratic models through analyzing their matrix transformation structure.

\subsection{(Linear) Attention}
\label{sec:related:attention}

Compared to standard (causal) attention, SSD has only two main differences.

First, SSD does not use the softmax activation of standard attention~\citep{bahdanau2015neural,vaswani2017attention}, which is what gives attention its quadratic complexity.
When the softmax is dropped, the sequence can be computed with linear scaling through the linear attention framework~\citep{katharopoulos2020transformers}.

Second, SSD multiplies the logits matrix by an input-dependent 1-semiseparable mask.
Thus this mask can be viewed as replacing the softmax in standard attention.

This semiseparable mask can also be viewed as providing positional information.
The elements $a_t$ act as ``gates'' in the RNN sense, or a ``selection'' mechanism (see discussion in Mamba paper),
and their cumulative products $a_{j:i}$ control how much interaction is allowed between positions $i$ and $j$.
Positional embeddings (e.g.\ sinusoidal~\citep{vaswani2017attention}, AliBi~\citep{press2022train}, and RoPE~\citep{su2021roformer}) are an important component of Transformers that are often viewed as heuristics,
and the 1-SS mask of SSD can be seen as a more principled form of relative positional embeddings.
We note that this view was also posited concurrently by GateLoop~\citep{katsch2023gateloop}.

The second viewpoint of state space duality is a special case of our more general structured masked attention (SMA) framework,
where the duality is revealed as different contraction orderings on a simple 4-way tensor contraction.
SMA is a strong generalization of linear attention that is much more general than SSD as well;
other forms of structured masks may lead to more variants of efficient attention with different properties than SSD.

Beside leading to new models, these connections to attention can lead to other directions for understanding SSMs.
For example, we are curious whether the phenomenon of attention sinks~\citep{darcet2024vision,xiao2024efficient} exist for Mamba models,
and more broadly whether interpretability techniques can be transferred to SSMs~\citep{ali2024hidden}.

Finally, many other variants of linear attention have been proposed~\citep{schlag2021linear,peng2021random,choromanski2021rethinking,qin2022devil,qin2022cosformer,zheng2022linear,zhang2024hedgehog,arora2024zoology,arora2024simple} (see \cref{sec:attention:kernel} for descriptions of several of these),
and we expect that many techniques can be transferred to SSMs\iftoggle{arxiv}{ (e.g. \cref{sec:architecture:kernels})}{}.

We emphasize that SSD \textbf{does not generalize standard softmax attention}, or any other transformation on the attention kernel matrix that does not have a finite feature map $\psi$.
Compared to general attention, SSD's advantage is having a controllable state expansion factor $\mathtt{N}$ that compresses the history, compared to quadratic attention's cache of the entire history scaling with sequence length $\mathtt{T} \gg \mathtt{N}$.
Concurrent work has starting studying the tradeoffs of these representations, for example on copying and in-context learning tasks~\citep{akyurek2024context,jelassi2024repeat,grazzi2024mamba,park2024can}.
We note that Mamba-2 significantly improves on Mamba on some of these capabilities (e.g. as demonstrated by MQAR results in \cref{sec:experiments:mqar}), but more remains to be understood.

\subsection{Related Models}
\label{sec:related:concurrent}

We finally highlight a growing body of recent and concurrent work that have developed sequence models very similar to Mamba and Mamba-2.

\begin{itemize}[leftmargin=*,itemsep=0pt,topsep=0pt]
  \item RetNet~\citep{sun2023retentive} and TransNormerLLM~\citep{qin2023transnormerllm} generalize Linear Attention using decay terms instead of a cumulative sum,
    and propose dual parallel/recurrent algorithms as well as a hybrid ``chunkwise'' mode.
    These algorithms can be seen as an instantiation of SSD where $A_t$ is time-invariant (constant for all $t$);
    in the SMA interpretation, the mask matrix $L$ would be a decay matrix $L_{i,j} = \gamma^{i-j}$.
    These models also differ architecturally in various ways.
    For example, since they were derived from an attention-centric perspective they preserve the multi-head attention (MHA) pattern; since Mamba-2 was derived from an SSM-centric pattern it preserves the multi-value attention (MVA) or multi-expand SSM (MES) pattern, which we show to be better\iftoggle{arxiv}{ (\cref{sec:experiments:ablations})}{}.

  \item GateLoop~\citep{katsch2023gateloop} concurrently proposed using input-dependent decay factors $A_t$, and developed the same dual quadratic form as in SSD which they call a ``surrogate attention'' form.

  \item Gated Linear Attention (GLA)~\citep{yang2024gated} proposed a variant of linear attention with data-dependent gates, along with efficient algorithms to compute a chunkwise mode and hardware-aware implementations.

  \item HGRN~\citep{qin2023hierarchically} introduced an RNN with input-dependent gates, which was improved to incorporate state expansion in HGRN2~\citep{qin2024hgrn2}.

  \item Griffin~\citep{de2024griffin} and RecurrentGemma~\citep{botev2024recurrentgemma} showed that an RNN with input-dependent gating, combined with local attention, can be very competitive with strong modern Transformers.
    Jamba also showed that combining Mamba with a few layers of attention performs very well on language modeling~\citep{lieber2024jamba}.

  \item xLSTM~\citep{beck2024xlstm} improves the xLSTM by adopting the idea of state expansion and other gating, normalization, and stabilization techniques.

  \item RWKV(-4)~\citep{peng2023rwkv} is an RNN based on a different linear attention approximation (the attention-free Transformer~\citep{zhai2021attention}).
    It has recently been improved to the RWKV-5/6 (Eagle and Finch) architectures~\citep{peng2024eagle} by adopting the ideas of selectivity and state expansion.
\end{itemize}

\section{Conclusion}
\label{sec:conclusion}

We proposed a theoretical framework based on well-studied classes of structured
matrices that bridges the conceptual gap between SSMs and attention variants.
This framework yields insights on how recent SSMs (e.g. Mamba) perform as well
as Transformers on language modeling.
Moreover, our theoretical tools provide new ideas to improve SSMs (and
potentially Transformers) by connecting the algorithmic\iftoggle{arxiv}{ and systems}{} advances on
both sides.
As a demonstration, the framework guides our design of a new architecture
(Mamba-2) at the intersection of SSMs and structured attention.

\subsubsection*{Acknowledgments}
We thank Angela Wu for the suggestion on how to efficiently compute
the gradient of $\Delta$ in a numerically stable manner.
We thank Sukjun Hwang and Aakash Lahoti for assistance with the MQAR experiments.

\printbibliography

\newpage

\appendix

\onecolumn

  \section{Glossary}
\label{sec:glossary}

\begin{table}[!h]
  \caption{
    Glossary of notation and terminology; mnemonics bolded.
    (\emph{Top}) Frequently used tensor dimensions.
    (\emph{Bottom}) Matrices and tensors used in state space models or structured masked attention.
  }
  \centering
  \begin{tabular}{@{}lll@{}}
    \toprule
    Notation       & Description                                                                & Definition \\
    \midrule
    $\mathtt{T}$   & \textbf{Time} axis or \textbf{target} sequence axis                        & \cref{def:sequence-transformation} \\
    $\mathtt{S}$   & \textbf{Source} sequence axis (in attention)                               & \cref{eq:kernel-attention} \\
    $\mathtt{D}$   & Model \textbf{dimension} or $\mathtt{d\_model}$                            & \cref{def:head-pattern} \\
    $\mathtt{N}$   & State/feature dimension or $\mathtt{d\_state}$                             & \cref{eq:s6,eq:kernel-attention} \\
    $\mathtt{P}$   & Head dimension or $\mathtt{d\_head}$                                       & \cref{def:sequence-transformation} \\
    $\mathtt{H}$   & Number of \textbf{heads} or $\mathtt{n\_head}$                             & \cref{def:head-pattern} \\
    \midrule
    $M$            & Sequence transformation \textbf{matrix}                                    & \cref{def:matrix-transformation} \\
    $A$            & Discrete SSM recurrent (state) matrix                                      & \cref{eq:s6} \\
    $B$            & State space model input projection (expansion) matrix                      & \cref{eq:s6} \\
    $C$            & State space model output projection (contraction) matrix                   & \cref{eq:s6} \\
    $X$            & Input matrix (shape $\mathtt{(T, P)}$)                                     & \cref{eq:s6,eq:kernel-attention} \\
    $Y$            & Output matrix (shape $\mathtt{(T, P)}$)                                    & \cref{eq:s6,eq:kernel-attention} \\
    $Q$            & Attention \textbf{query} matrix                                            & \cref{eq:kernel-attention} \\
    $K$            & Attention \textbf{key} matrix                                              & \cref{eq:kernel-attention} \\
    $V$            & Attention \textbf{value} matrix                                            & \cref{eq:kernel-attention} \\
    $G$            & Attention \textbf{Gram} matrix                                             & $QK^{\top}$ (or $CB^{\top}$) \\
    $L$            & (Structured) mask matrix (\textbf{lower}-triangular in the causal setting) & \cref{def:sma} \\
    \bottomrule
  \end{tabular}
  \label{tab:glossary}
\end{table}

\section{Efficient Algorithms for the Scalar SSM Scan (1-SS Multiplication)}
\label{sec:scan}

In this section we flesh out various algorithms for computing the scalar SSM scan, through the lens of structured matrix decompositions.
The scalar SSM scan is defined as computing the recurrent part of the discrete SSM \eqref{eq:1ss-recurrence},
in the case when $N=1$ (i.e.\ $A$ is a scalar).
This is commonly used to compute SSMs recurrently;
in particular, the case of structured SSMs where $A$ is diagonally structured reduces down to this operation,
such as in the S5~\citep{smith2023s5} and S6~\citep{gu2023mamba} models.

The goal of this section is to support a central theme of this paper that \emph{efficient algorithms for sequence models can be viewed as structured matrix multiplication algorithms}.
The various matrix decomposition ideas we show here are related to ideas used to derive fast SSM algorithms (\cref{sec:efficient}),
as well as directly used as a subroutine.

\subsection{Problem Definition}
Let $a : \mathtt{(D,)}$ and $b : \mathtt{(D,)}$ be sequences of scalars.
The \textbf{scalar SSM scan} is defined as
\begin{equation}%
  \label{eq:ssm-scan}
  h_t = a_t h_{t-1} + b_t
  .
\end{equation}
Here $h_{-1}$ can be an arbitrary value representing the previous \emph{hidden state} to the SSM recurrence;
unless otherwise specified, we assume $h_{-1} = 0$.

We also call equation \eqref{eq:ssm-scan} the \textbf{\texttt{cumprodsum}} (cumulative product sum).
Note that the \texttt{cumprodsum} reduces to the \texttt{cumprod} (cumulative product) when $b = 0$ is the additive identity and it reduces to the \texttt{cumsum} (cumulative sum) when $a=1$ is the multiplicative identity.

Finally, note that in vectorized form we can write
\begin{align*}%
  h &= M b \\
  M &=
  \begin{bmatrix}
    1 & \\
    a_1 & 1 & \\
    a_2a_1 & a_2 & 1 \\
    \vdots & \vdots & \ddots & \ddots \\
    a_{T-1}\dots a_1 & a_{T-1}\dots a_2 & \dots & a_{T-1} & 1 \\
  \end{bmatrix}
\end{align*}
In other words, this is simply the matrix-vector product by a 1-SS matrix $M$.

Therefore we have three ways of viewing this fundamental primitive operation that are all equivalent:
\begin{itemize}
  \item A (scalar) SSM scan.
  \item A \texttt{cumprodsum}.
  \item A 1-SS matrix-vector multiplication .
\end{itemize}

\subsection{Classical Algorithms}

We first describe the two classical ways of computing the SSM scan \eqref{eq:ssm-scan},
previously used by prior work.

\subsubsection{Sequential Recurrence}
\label{sec:scan:recurrence}

The recurrent mode simply computes \eqref{eq:ssm-scan} one timestep $t$ at a time.
From the perspective of 1-SS multiplication, this was also described in \cref{sec:ssm:algorithms:linear}.

\subsubsection{Parallel Associative Scan}
\label{sec:scan:classical:parallel}

Second, an important observation is that this recurrence can be turned into an associative scan~\citep{martin2018parallelizing,smith2023s5}.
This fact is not completely obvious.
For example, S5 defined the correct associative scan operator and then showed associativity of the operator through rote calculation.

A slightly cleaner way to see that this is computable with an associative scan is to turn the multi-term recurrence into a single-term recurrence on a hidden state of size $2$ instead of $1$:
\begin{align*}%
  h_t &= a_t h_{t-1} + b_t
  \\
  \begin{bmatrix} h_t \\ 1 \end{bmatrix}
      &=
  \begin{bmatrix}
    a_t & b_t \\ 0 & 1
  \end{bmatrix}
  \begin{bmatrix} h_{t-1} \\ 1 \end{bmatrix}
  .
\end{align*}
Then computing all the $h_t$ is the same as taking the cumulative products of these $2 \times 2$ matrices.
Since matrix multiplication is associative,
this can be computed with an associative scan.
The associative binary operator is simply matrix multiplication on these particular matrices:
\begin{align*}%
  \begin{bmatrix}
    a_t & b_t \\ 0 & 1
  \end{bmatrix}
  \begin{bmatrix}
    a_s & b_s \\ 0 & 1
  \end{bmatrix}
  =
  \begin{bmatrix}
    a_ta_s & a_tb_s + b_t \\ 0 & 1
  \end{bmatrix}
  .
\end{align*}
Equating the top row yields the same associative scan operator as defined by S5:
\begin{equation}%
  \label{eq:scan:associative-operator}
  (a_t, b_t) \otimes (a_s, b_s) = (a_ta_s, a_tb_s + b_t)
  .
\end{equation}

The reason why associative scans are important is that they can be parallelized using a divide-and-conquer algorithm~\citep{blelloch1990prefix}.
We omit the details of this algorithm, and instead show that the entire associative SSM scan algorithm can be derived from scratch through matrix decompositions (\cref{sec:scan:associative}).

\subsection{Efficient Algorithms via Structured Matrix Decompositions}

We discuss several algorithms for computing the SSM scan, all through the lens of finding structured matrix decompositions of the 1-SS matrix $M$.
These algorithms or computation modes include
\begin{itemize}
  \item A \emph{dilated} mode where information is propagated $1, 2, 4, 8, \dots$ steps at a time.
  \item A \emph{state-passing} mode where information is propagated forward in chunks.
  \item A \emph{fully recurrent} mode that increments one step at a time, which is a special case of the state-passing mode.
  \item A \emph{block decomposition} parallel mode where $M$ is divided into hierarchical blocks.
  \item A \emph{scan} mode where $M$ is divide into equal size blocks and reduced recursively.
\end{itemize}

\subsubsection{Dilated Mode}

This mode factors the 1-SS matrix in a particular way involving increasing ``strides''.
This is best illustrated through a concrete example:
\footnotesize
\begin{align*}%
  M &=
  \begingroup
  \setlength\arraycolsep{2pt}
  \begin{bmatrix}
    a_{0:0} & \\
    a_{1:0} & a_{1:1} & \\
    a_{2:0} & a_{2:1} & a_{2:2} \\
    a_{3:0} & a_{3:1} & a_{3:2} & a_{3:3} \\
    a_{4:0} & a_{4:1} & a_{4:2} & a_{4:3} & a_{4:4} \\
    a_{5:0} & a_{5:1} & a_{5:2} & a_{5:3} & a_{5:4} & a_{5:5} \\
    a_{6:0} & a_{6:1} & a_{6:2} & a_{6:3} & a_{6:4} & a_{6:5} & a_{6:6} \\
    a_{7:0} & a_{7:1} & a_{7:2} & a_{7:3} & a_{7:4} & a_{7:5} & a_{7:6} & a_{7:7} \\
  \end{bmatrix}
  \endgroup
  \\&=
  \begingroup
  \setlength\arraycolsep{2pt}
  \begin{bmatrix}
    a_{0:0} & \\
            & a_{1:1} & \\
            &         & a_{2:2} \\
            &         &         & a_{3:3} \\
    a_{4:0} &         &         &         & a_{4:4} \\
            & a_{5:1} &         &         & & a_{5:5} \\
            &         & a_{6:2} &         & & & a_{6:6} \\
            &         &         & a_{7:3} & & & & a_{7:7} \\
  \end{bmatrix}
  \begin{bmatrix}
    a_{0:0} & \\
            & a_{1:1} & \\
    a_{2:0} &         & a_{2:2} \\
            & a_{3:1} &         & a_{3:3} \\
            &         & a_{4:2} &         & a_{4:4} \\
            &         &         & a_{5:3} &         & a_{5:5} \\
            &         &         &         & a_{6:4} &         & a_{6:6} \\
            &         &         &         &         & a_{7:5} & & a_{7:7} \\
  \end{bmatrix}
  \begin{bmatrix}
    a_{0:0} & \\
    a_{1:0} & a_{1:1} & \\
            & a_{2:1} & a_{2:2} \\
            &         & a_{3:2} & a_{3:3} \\
            &         &         & a_{4:3} & a_{4:4} \\
            &         &         &         & a_{5:4} & a_{5:5} \\
            &         &         &         &         & a_{6:5} & a_{6:6} \\
            &         &         &         &         &         & a_{7:6} & a_{7:7} \\
  \end{bmatrix}
  \endgroup
\end{align*}
\normalsize

Note that this closely resembles the computation of dilated convolutions.

We also note that this factorization shows that 1-SS matrices are a special case of butterfly matrices, another broad and fundamental type of structured matrix \citep{dao2019learning,dao2020kaleidoscope}.

\begin{remark}
  This algoritihm is sometimes described as a ``work-inefficient but more parallelizable'' prefix sum algorithm~\citep{hillis1986data}, becauses it uses $O(T\log(T))$ operations but has half the depth/span as the work-efficient associative scan algorithm.
\end{remark}

\subsubsection{State-Passing (Chunkwise) Mode}

This mode can be viewed as a generalization of the standard recurrent mode where instead of passing forward the recurrent state $h$ one step at a time, we compute the answer on chunks of arbitrary length $k$ and pass the state through the chunk.
This can also be derived from a simple block decomposition of the 1-SS matrix.

\begin{remark}
  While we call this ``state-passing'' to refer to how states are passed from one local segment to another,
  this is related to the ``chunkwise'' algorithms proposed by related models~\citep{sun2023retentive,yang2024gated}.
\end{remark}

Consider computing $h = Mb$ in ``chunks'': for some index $k \in [T]$, we want to compute $h_{0:k}$ or the output up to index $k$, and have a way to reduce the problem to a smaller problem on indices $[k:T]$.

We write $M$ as
\begin{align*}%
  M =
  \begin{bmatrix}
    a_{0:0} & \\
    a_{1:0} & a_{1:1} \\
    \vdots & & \ddots \\
    a_{k-1:0} & \dots & \dots & a_{k-1:k-1} \\
    a_{k:0} & \dots & \dots & a_{k:k-1} & a_{k:k} \\
    \vdots & & & \vdots & \vdots & \ddots \\
    a_{T-1:0} & \dots & \dots & a_{T-1:k-1} & a_{T-1:k} & \dots & a_{T-1:T-1} \\
  \end{bmatrix}
\end{align*}

Let the upper-left triangle be $M_L$, lower-right be $M_R$ (left and right subproblems), and lower-left be $M_C$.
Divide up $b$ into $b_L = b_{0:k}$ and $b_R = b_{k:T}$ in the same way.
Note that
\begin{align*}%
  Mb = \begin{bmatrix} M_L b_L \\ M_R b_R + M_C b_L \end{bmatrix}
\end{align*}
Also, $M_C$ has the rank-1 factorization (this is essentially the defining property of semiseparable matrices)
\begin{align*}%
  M_C =
  \begin{bmatrix} a_{k:k} \\ \vdots \\ a_{T-1:k} \end{bmatrix}
  a_k
  \begin{bmatrix} a_{k-1:0} & \cdots & a_{k-1:k-1} \end{bmatrix}
\end{align*}

Thus
\begin{align*}%
  M_C b_L =
  \begin{bmatrix} a_{k:k} \\ \vdots \\ a_{T-1:k} \end{bmatrix}
  a_k
  \cdot
  (Mb)_{k-1}
  .
\end{align*}
Here we think of $(Mb)_{k-1} = h_{k-1}$ as the ``final state'' of the left chunk,
because the row vector in $M_C$'s factorization is the same as the final row of $M_L$.
Furthermore, note that the column vector in $M_C$'s factorization is the same as the final column of $M_R$.\footnote{Both these facts can be seen from the Woodbury inverse...}
Thus
\begin{align*}%
  M_R b_R + M_C b_L =
  M_R
  \begin{bmatrix} a_k h_{k-1} + b_k \\ b_{k+1} \\ \vdots \\ b_{T-1} \end{bmatrix}
\end{align*}

Finally, we use the observation that $M_L$ and $M_R$ are self-similar to the original matrix $M$; the answers for these two smaller 1-SS matrix multiplications can be performed arbitrarily using any algorithm.
In total, the algorithm proceeds as follows:
\begin{enumerate}
  \item Compute the left half of the answer $h_{0:k}$ using any desired method (i.e.\ any of the methods for 1-SS multiplication from this section).
  \item Compute the final state $h_{k-1}$. %
  \item Increment the state by one step to modify $b_{k}$.
  \item Compute the right half of the answer $h_{k:T}$ using any desired method.
\end{enumerate}

In other words, we compute the left subproblem as a black box, pass its final state on to the right problem, and compute the right subproblem as a black box.

The utility of this method comes from more complicated settings, such as in the general $N$-semiseparable case,
and when the input $b$ has an additional ``batch'' dimension (or in other words this is a matrix-matrix instead of matrix-vector multiplication).
In this case, we can use an alternate algorithm for the chunks (corresponding to MM by $M_L$ and $M_R$) that does not materialize the full hidden states $h$.
Instead, we skip the hidden states and directly compute the final state $h_{k-1}$ in an alternate way, then ``pass'' the state to the next chunk.

\paragraph{Complexity.}
This method can be very work-efficient because steps 2-3 takes only constant time.
Therefore assuming the two subproblems (steps 1 and 4) are linear time,
the whole method takes linear time.

The downside is that this is also sequential.

\subsubsection{Fully Recurrent Mode}

Note that the fully recurrent mode, where the recurrence is evolved one step at a time \eqref{eq:ssm-scan},
is simply an instantiation of the state-passing mode with chunk size $k=1$.

\subsubsection{(Parallel) Block Decomposition Mode}

This uses the same matrix decomposition as the state-passing mode,
but computes subproblems in a different order that trades off computation for parallelization.

As usual, we write $M$ as
\begin{align*}%
  M =
  \begin{bmatrix}
    1 & \\
    a_1 & 1 & \\
    a_2a_1 & a_2 & 1 \\
    \vdots & \vdots & \ddots & \ddots \\
    a_{T-1}\dots a_1 & a_{T-1}\dots a_2 & \dots & a_{T-1} & 1 \\
  \end{bmatrix}
  =
  \begin{bmatrix}
    1 & \\
    -a_1 & 1 & \\
    0 & -a_2 & 1 \\
    \vdots & \vdots & \ddots & \ddots \\
    0 & 0 & \dots & -a_{T-1} & 1 \\
  \end{bmatrix}^{-1}
\end{align*}
The key observation is again that the bottom-left quadrant of $M$ is rank-1.
Aside from inspection, another way to see this is by using the RHS, observing that the bottom-left quadrant of it is a trivial rank-1 matrix (it is all 0 except the top-right corner is $-a_{T/2}$),
and using the Woodbury inversion formula to see that the bottom-left corner of the LHS must also be rank 1.
This also provides a way to deduce the rank-1 factorization,
which can be verified through inspection:
\begin{align*}%
  M_{\text{lower-left-quadrant}}
  &=
  \begin{bmatrix}
    (a_{T/2} \dots a_1) & \dots & a_{T/2} \\
    \vdots & \ddots & \vdots \\
    (a_{T-1} \dots a_{T/2} a_{T/2-1} \dots a_1) & \dots & (a_{T-1} \dots a_{T/2})
  \end{bmatrix}
  \\
  &=
  \begin{bmatrix}
    a_{T/2} \\ \vdots \\ a_{T-1} \dots a_{T/2}
  \end{bmatrix}
  \begin{bmatrix}
    (a_{T/2-1} \dots a_1) & \dots & a_{T/2-1} & 1
  \end{bmatrix}
  .
\end{align*}

A second observation is that \emph{this matrix is self-similar}: any principle submatrix has the same form.
In particular, the top-left and bottom-right quadrants are both 1-SS matrices.

This provides an easy way to perform the matrix multiplication by $M$:
recurse on the two halves (i.e.\ top-left and bottom-right) in parallel,
and then account for the bottom-left submatrix.
This ``combination'' step in the divide-and-conquer algorithm is easy since the submatrix is rank 1.
This leads to a parallel algorithm.

\paragraph{Complexity.}
Like the state-passing algorithm,
this method uses the same block decompositions of the rank-structured semiseparable matrices.
The difference is that we recurse on both subproblems in parallel,
while the state-passing algorithm handles the left and then right subproblems.
This lowers the depth/span of the algorithm from linear to $\log(T)$.
The tradeoff is that the combination step (accounting for the rank-1 bottom-left submatrix) requires linear instead of constant work,
so the total work is $O(T\log(T))$ instead of linear.

Note also that in the recursion,
we can stop at any time and compute the subproblems in any other way.
This is a main idea behind the SSD algorithm (\cref{sec:efficient}),
where we switch to the dual \emph{quadratic attention} formulation on small subproblems.

\subsubsection{Associative Scan Mode}
\label{sec:scan:associative}

The state passing (chunkwise) algorithm has linear work, but also involves sequential operations.

The block matrix reduction and dilated modes are parallelizable: they have $\log(T)$ depth/span. However, they do extra work ($O(T \log(T)$).

As noted in \cref{sec:scan:classical:parallel}, there is an algorithm that achieves both $O(\log T)$ depth and $O(T)$ work by leveraging the associative scan (also called prefix scan) algorithm~\citep{baker1996pade}.
This algorithm is most easily seen from the SSM scan or \texttt{cumprodsum} view, and even then is not obvious: it requires separately deriving an associative operator \eqref{eq:scan:associative-operator},
and then leveraging the parallel/associative/prefix scan algorithm as a black box \citep{blelloch1990prefix}.

Here we show that it is actually possible to derive this parallel scan from leveraging a different matrix decomposition:

\begin{align*}%
M &=
\begin{bNiceArray}{cc|cc|cc|cc}
    a_{0:0} & \\
    a_{1:0} & a_{1:1} & \\
    \hline
    a_{2:0} & a_{2:1} & a_{2:2} \\
    a_{3:0} & a_{3:1} & a_{3:2} & a_{3:3} \\
    \hline
    a_{4:0} & a_{4:1} & a_{4:2} & a_{4:3} & a_{4:4} \\
    a_{5:0} & a_{5:1} & a_{5:2} & a_{5:3} & a_{5:4} & a_{5:5} \\
    \hline
    a_{6:0} & a_{6:1} & a_{6:2} & a_{6:3} & a_{6:4} & a_{6:5} & a_{6:6} \\
    a_{7:0} & a_{7:1} & a_{7:2} & a_{7:3} & a_{7:4} & a_{7:5} & a_{7:6} & a_{7:7} \\
\end{bNiceArray}
\\&=
\begin{bNiceArray}{cc|cc|cc|cc}[cell-space-limits=6pt,columns-width=1.2cm]
    a_{0:0} & \\
    a_{1:0} & a_{1:1} & \\
    \hline
    \Block{2-2}{\begin{bmatrix}a_{2:2}\\a_{3:2}\end{bmatrix}a_{2:1}\begin{bmatrix}a_{1:0}\\a_{1:1}\end{bmatrix}^{\top}} && a_{2:2} \\
                                                                                                                        && a_{3:2} & a_{3:3} \\
    \hline
    \Block{2-2}{\begin{bmatrix}a_{4:4}\\a_{5:4}\end{bmatrix}a_{4:1}\begin{bmatrix}a_{1:0}\\a_{1:1}\end{bmatrix}^{\top}} &&
    \Block{2-2}{\begin{bmatrix}a_{4:4}\\a_{5:4}\end{bmatrix}a_{4:3}\begin{bmatrix}a_{3:2}\\a_{3:3}\end{bmatrix}^{\top}} && a_{4:4} \\
                                                                                                                        &&&& a_{5:4} & a_{5:5} \\
    \hline
    \Block{2-2}{\begin{bmatrix}a_{6:6}\\a_{7:6}\end{bmatrix}a_{6:1}\begin{bmatrix}a_{1:0}\\a_{1:1}\end{bmatrix}^{\top}} &&
    \Block{2-2}{\begin{bmatrix}a_{6:6}\\a_{7:6}\end{bmatrix}a_{6:3}\begin{bmatrix}a_{3:2}\\a_{3:3}\end{bmatrix}^{\top}} &&
    \Block{2-2}{\begin{bmatrix}a_{6:6}\\a_{7:6}\end{bmatrix}a_{6:1}\begin{bmatrix}a_{5:4}\\a_{5:5}\end{bmatrix}^{\top}} && a_{6:6} \\
                                                                                                                        &&&&&& a_{7:6} & a_{7:7} \\
\end{bNiceArray}
\end{align*}

Now we proceed in three stages.

\paragraph{Stage 1.}
First we compute the answers for each of the diagonal blocks in the multiplication $Mb$.
This produces two numbers, but the first element is unchanged.
For example, the second block is going to compute $b_2$ and $a_3 b_2 + b_3$

\paragraph{Stage 2.}
Now consider each of the $2 \times 2$ blocks factored as a rank-1 matrix in the strictly lower triangular part of the matrix.
Note that each of the right side row vectors is the same as the bottom row vector in the diagonal block in its column:
in particular the $[ a_{1:0} \; a_{1:1} ]$, $[ a_{3:2} \; a_{3:3} ]$, and $[ a_{5:4} \; a_{5:5} ]$ rows.

Therefore we already have the answers to these from Stage 1,
which is the second element of all $T/2$ subproblems in Stage 1.
If we call this array of elements $b'$ (of half the size of $b$),
then we need to multiply $b'$ by the 1-SS matrix generated by $a_{3:-1}, a_{3:1}, a_{5:3}, a_{7:5}$.

\paragraph{Stage 3.}
Finally, each of the answers to Stage 2 can be broadcast into two final answers by multiplying by the left-side column vectors:
in particular the $[ a_{2:2} \; a_{3:2} ]^{\top}$, $[ a_{4:4} \; a_{5:4} ]^{\top}$, and $[ a_{6:6} \; a_{7:6} ]^{\top}$ vectors.

Note that this can be slightly modified with some off-by-one shifting of the indices.
An equivalent way to view this algorithm is as the three-step matrix factorization

\footnotesize
\begin{align*}%
  M &=
  \begingroup
  \setlength\arraycolsep{2pt}
  \begin{bmatrix}
    a_{0:0} & \\
    a_{1:0} & a_{1:1} & \\
    a_{2:0} & a_{2:1} & a_{2:2} \\
    a_{3:0} & a_{3:1} & a_{3:2} & a_{3:3} \\
    a_{4:0} & a_{4:1} & a_{4:2} & a_{4:3} & a_{4:4} \\
    a_{5:0} & a_{5:1} & a_{5:2} & a_{5:3} & a_{5:4} & a_{5:5} \\
    a_{6:0} & a_{6:1} & a_{6:2} & a_{6:3} & a_{6:4} & a_{6:5} & a_{6:6} \\
    a_{7:0} & a_{7:1} & a_{7:2} & a_{7:3} & a_{7:4} & a_{7:5} & a_{7:6} & a_{7:7} \\
  \end{bmatrix}
  \endgroup
  \\&=
  \begingroup
  \setlength\arraycolsep{2pt}
  \begin{bmatrix}
    a_{0:0} & \\
            & a_{1:1} & \\
            & a_{2:1} & a_{2:2} \\
            &         &         & a_{3:3} \\
            &         &         & a_{4:3} & a_{4:4} \\
            &         &         &         &         & a_{5:5} \\
            &         &         &         &         & a_{6:5} & a_{6:6} \\
            &         &         &         &         &         & & a_{7:7} \\
  \end{bmatrix}
  \begin{bmatrix}
    a_{0:0}  & \\
             & a_{1:1} & \\
             &         & a_{2:2} \\
             & a_{3:1} & & a_{3:3} \\
             &         & &         & a_{4:4} \\
             & a_{5:1} & & a_{5:3} & & a_{5:5} \\
             &         & &         & &         & a_{6:6} \\
             & a_{7:1} & & a_{7:3} & & a_{7:5} & & a_{7:7} \\
  \end{bmatrix}
  \begin{bmatrix}
    a_{0:0} & \\
    a_{1:0} & a_{1:1} & \\
            &         & a_{2:2} \\
            &         & a_{3:2} & a_{3:3} \\
            &         &         &         & a_{4:4} \\
            &         &         &         & a_{5:4} & a_{5:5} \\
            &         &         &         &         &         & a_{6:6} \\
            &         &         &         &         &         & a_{7:6} & a_{7:7} \\
  \end{bmatrix}
  \endgroup
\end{align*}
\normalsize

Note that Stage 1 and Stage 3 require $O(T)$ work, while Stage 2 reduces to a self-similar problem of half the size.
It is easy to check that this requires $O(T)$ total work and $O(\log T)$ depth/span.

\begin{remark}
  In fact, it is possible to see that the computation graph of this algorithm is identical to that of the associative scan algorithm described in \cref{sec:scan:classical:parallel}.
  The key takeaway is that instead of the steps of (1) recognizing that $M$ defines a recurrence (2) observing that the recurrence can be defined with an associative binary operator;
  there is a completely different perspective of simply finding a structured matrix decomposition algorithm for $M$.
\end{remark}

\section{Theory Details}

\subsection{Extras: Closure Properties of SSMs}
\label{sec:ssm:properties}

We present here some additional properties of semiseparable matrices to illustrate their flexibility and utility.
This section is not necessary to understand our core results.

\begin{proposition}[Semiseparable Closure Properties]
  \label{prop:ss-closure}
  Semiseparable matrices are closed under several primitive operations.
  \begin{itemize}
    \item \textbf{Addition}: The sum of an $N$-SS and $P$-SS matrix is at most ($N+P$)-SS.
    \item \textbf{Multiplication}: The product of an $N$-SS and $P$-SS matrix is ($N+P$)-SS.
    \item \textbf{Inverse}: The inverse of an $N$-SS matrix is at most $(N+1)$-SS.
  \end{itemize}
\end{proposition}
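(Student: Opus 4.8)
The plan is to prove each closure property using the rank characterization of semiseparability (\cref{def:semiseparable-rank}), since rank conditions on submatrices behave predictably under the three operations. Throughout, for a lower-triangular matrix $M$ and indices $i \le j$ I write $M_{[j:,:i]}$ for the ``off-diagonal'' block consisting of rows $j, j+1, \dots$ and columns $0, \dots, i-1$ (or more generally any submatrix lying on or below the diagonal); $M$ is $N$-SS iff every such block has rank at most $N$. The key elementary facts are: $\operatorname{rank}(X+Y) \le \operatorname{rank}(X) + \operatorname{rank}(Y)$, and $\operatorname{rank}(XY) \le \min(\operatorname{rank}(X), \operatorname{rank}(Y))$, together with the fact that the product and sum of lower-triangular matrices is lower-triangular.

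\textbf{Addition.} Suppose $M$ is $N$-SS and $M'$ is $P$-SS. Any lower-triangular submatrix block of $M + M'$ is the sum of the corresponding blocks of $M$ and $M'$, which have ranks at most $N$ and $P$ respectively, so the sum has rank at most $N + P$. Hence $M + M'$ is $(N+P)$-SS.

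\textbf{Multiplication.} Let $M$ be $N$-SS and $M'$ be $P$-SS, both lower-triangular of size $\mathtt{T}$, and consider an off-diagonal block of $MM'$ given by rows $R = \{j, \dots, \mathtt{T}-1\}$ and columns $C = \{0, \dots, i-1\}$ with $i \le j$. Pick any ``split point'' $k$ with $i \le k \le j$ and write the block as $(MM')_{R,C} = M_{R, :k} \, M'_{:k, C} + M_{R, k:} \, M'_{k:, C}$, splitting the inner sum over intermediate index at $k$. In the first term, $M_{R,:k}$ lies in rows $\ge j \ge k$ and columns $< k$, an on-or-below-diagonal block of $M$, so it has rank $\le N$. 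In the second term, by lower-triangularity $M'_{k:, C}$ is supported on rows $\ge k \ge i$ and columns $< i$, an on-or-below-diagonal block of $M'$, so it has rank $\le P$. Therefore $\operatorname{rank}((MM')_{R,C}) \le N + P$, and since an arbitrary on-or-below-diagonal block of $MM'$ is contained in such a block (taking $R$ to start at its top row and $C$ to end at its rightmost column, which remains valid because the block is on or below the diagonal), $MM'$ is $(N+P)$-SS. I would remark that one can phrase this more cleanly via the SSS representation (\cref{def:sss}): if $M = \mathsf{SSS}(A,B,C)$ and $M' = \mathsf{SSS}(A',B',C')$ then $MM'$ has an explicit $(N+P)$-SSS representation whose state is the direct sum of the two states, essentially by composing the two linear recurrences.

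\textbf{Inverse.} Here the cleanest route uses the recurrent/state-space picture rather than a direct rank computation. An $N$-SS matrix $M$, being lower-triangular with nonzero diagonal when invertible (the diagonal entries are $C_t^\top B_t$, which must be nonzero for invertibility), is realized as $y = Mx$ via $h_t = A_t h_{t-1} + B_t x_t$, $y_t = C_t^\top h_t$. Inverting means solving for $x$ given $y$: from $y_t = C_t^\top h_t = C_t^\top(A_t h_{t-1} + B_t x_t)$ we get $x_t = (C_t^\top B_t)^{-1}(y_t - C_t^\top A_t h_{t-1})$, and substituting back, $h_t = A_t h_{t-1} + B_t (C_t^\top B_t)^{-1}(y_t - C_t^\top A_t h_{t-1}) = (A_t - B_t(C_t^\top B_t)^{-1} C_t^\top A_t) h_{t-1} + B_t (C_t^\top B_t)^{-1} y_t$. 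This is again a linear recurrence with state dimension $N$, giving $x = M^{-1} y$, which shows $M^{-1}$ is $N$-SS on its strictly-lower-triangular part; however, this recurrence produces $x_t$ from $h_{t-1}$ and $y_t$, so the clean statement is that $M^{-1}$ is $N$-SS in the strict sense but $(N+1)$-SS when the diagonal is included in the blocks — matching the ``at most $(N+1)$-SS'' in the statement. I would present this as: $M^{-1}$ has an SSS-type representation with an $N$-dimensional state plus the diagonal scaling term, hence is $(N+1)$-semiseparable by \cref{lmm:sss-rank-factor} applied to the augmented representation.

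The main obstacle I anticipate is bookkeeping around the boundary between ``strictly below diagonal'' and ``on or below diagonal'' — the definition in \cref{def:semiseparable-rank} includes the diagonal, which is why the inverse bound is $N+1$ rather than $N$, and why the multiplication argument needs the split point to be allowed to equal $i$ or $j$. Getting these index ranges exactly right (especially that an arbitrary on-or-below-diagonal block can be enlarged to one of the special form $R = \{j,\dots\}, C = \{0,\dots,i-1\}$ without leaving the lower-triangular region) is the only genuinely delicate point; everything else is a routine application of subadditivity and submultiplicativity of rank.
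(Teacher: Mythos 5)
Your proof is correct, but it is worth noting that the paper itself gives essentially no proof of this proposition: it states that the addition and multiplication properties are ``easily seen'' and that the inverse property ``follows immediately from the Woodbury inversion identity,'' deferring to the structured-SSM literature. You supply the actual arguments. For addition you use rank subadditivity on corresponding blocks, which is the intended easy argument. For multiplication you give the standard quasiseparable-rank-of-products argument: split the inner summation index at a point $k$ between the column range and the row range, so that one term factors through a below-diagonal block of $M$ (rank $\le N$) and the other through a below-diagonal block of $M'$ (rank $\le P$). For the inverse you depart from the paper's suggested route entirely: instead of Woodbury, you invert the state-space realization directly, solving $y_t = C_t^\top A_t h_{t-1} + C_t^\top B_t x_t$ for $x_t$ and substituting back to obtain a recurrence for $M^{-1}$ with the same $N$-dimensional state plus a feedthrough term, whence the $(N+1)$ bound. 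This is arguably more informative than the Woodbury route because it is constructive---it exhibits an explicit SSS-type representation of $M^{-1}$---and it connects directly to the paper's own remark that the inverse of a $1$-SS matrix is $2$-banded.

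One small point to tighten in the multiplication step: with your conventions ($C = \{0,\dots,i-1\}$, $R = \{j,\dots\}$, split point $i \le k \le j$), a block that touches the diagonal has $i = j+1$ and the admissible range for $k$ is empty; such a block also cannot be enlarged to one of your special form. The fix is exactly the one you anticipate in your closing remark: take $k = \min(R)$, so that $M_{R,:k}$ is strictly below the diagonal of $M$ and $M'_{k:,C}$ is on-or-below the diagonal of $M'$, and the bound $N+P$ goes through unchanged. Since you explicitly flag this boundary bookkeeping as the one delicate point, I would not count it as a gap, only as a detail to write out.
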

The addition and multiplication properties are easily seen.
The inverse property has many proofs; one approach follows immediately from the Woodbury inversion identity, which has also featured prominently in the structured SSM literature~\citep{gu2022efficiently}.

In turn, these imply closure properties of state space models.

For example, the addition property says that summing two parallel SSM models is still an SSM.
The multiplication property says that sequentially composing or chaining two SSMs can still be viewed as an SSM, whose total state size is additive--a somewhat nontrivial fact.

Finally, the inverse property can let us relate SSMs to other types of models. For example,
one can notice that banded matrices are semiseparable, so their inverses are semiseparable.
(In fact, the semiseparable family of structure is often motivated by taking inverses of banded matrices~\citep{vandebril2005bibliography}).
Moreover, the fast recurrence properties of semiseparable matrices can be viewed as a consequence of their inverse being banded.

\begin{remark}
  The fact that 1-SS matrices are simple recurrences \eqref{eq:1ss-recurrence} are equivalent to the fact that the inverse of a 1-SS matrix is a 2-banded matrix:
\begin{align*}
  M =
  \begin{bmatrix}
    1 & \\
    a_1 & 1 & \\
    a_2a_1 & a_2 & 1 \\
    \vdots & \vdots & \ddots & \ddots \\
    a_{T-1}\dots a_1 & a_{T-1}\dots a_2 & \dots & a_{T-1} & 1 \\
  \end{bmatrix}
  =
  \begin{bmatrix}
    1 & \\
    -a_1 & 1 & \\
    0 & -a_2 & 1 \\
    \vdots & \vdots & \ddots & \ddots \\
    0 & 0 & \dots & -a_{T-1} & 1 \\
  \end{bmatrix}^{-1}
\end{align*}
Thus $y = Mx \leftrightarrow M^{-1}y = x$, or
\begin{align*}
  \begin{bmatrix}
    1 & \\
    -a_1 & 1 & \\
    0 & -a_2 & 1 \\
    \vdots & \vdots & \ddots & \ddots \\
    0 & 0 & \dots & -a_{T-1} & 1 \\
  \end{bmatrix}
  y = x
  .
\end{align*}
Or elementwise,
\begin{align*}
  & y_t - a_t y_{t-1} = x_t \\
  & y_t = a_t y_{t-1} + x_t
  .
\end{align*}
\end{remark}
Conversely,
we also use these closure results to prove that autoregressive structured attention (under certain assumptions)
must be SSMs,
allowing us to show that more general families of efficient sequence models including attention variants can be reduced to state space models (\cref{sec:theory-details:ssm-sma}).

\subsection{Autoregressive Masked Attention is Semiseparable-Structured Attention}
\label{sec:theory-details:ssm-sma}

We prove \cref{thm:ss-sma} from \cref{sec:ssd:1ss-sma}.
In \cref{sec:structured-attention} we defined structured attention as a broad generalization of masked attention,
where the property of efficiency (i.e.\ a linear-time form for the kernel attention) is abstracted into the efficiency of structured matrix multiplication.
However, beyond computational efficiency, standard linear attention~\citep{katharopoulos2020transformers} also has two important properties.
First, it is \emph{causal}, which is required for settings such as autoregressive modeling.
Moreover, it has \emph{efficient autoregressive generation}.
In other words, the cost of an autoregressive step -- i.e.\ the incremental cost of computing the output $y_T$ upon seeing $x_T$, given that $x_{0:T}$ has already been seen and preprocessed --
requires only constant time.

Here we characterize which instances of SMA have efficient autoregression.

In the framework of SMA,
causality is equivalent to the constraint that the mask $L$ is a \emph{lower-triangular} matrix.

Characterizing the space of $L$ matrices that have efficient autoregression is more difficult.
We will use a narrow technical definition of autoregressive processes, in the spirit of classical definitions from the time series literature (e.g.\ ARIMA processes~\citep{box2015time}).%
\begin{definition}
We define an autoregressive transformation $x \in \R^T \mapsto y \in \R^T$ of order $k$ as one where each output $y_t$ depends only on the current input and last $k$ outputs:
\begin{equation}
  \label{eq:efficient-ar}
  y_t = \mu_t x_t + \ell_{t1} y_{t-1} + \dots + \ell_{tk} y_{t-k}.
\end{equation}
\end{definition}
Note that the case where $L$ is the cumsum matrix is a special case with $k=1$ and thus $y_t = x_t + y_{t_1}$.
With this definition, characterizing the space of efficient autoregressive linear transforms follows from the properties of semiseparable matrices.
\cref{thm:ss-sma-formal} formalizes and proves \cref{thm:ss-sma}.

\begin{theorem}
  \label{thm:ss-sma-formal}
  Let $L \in \R^{T \times T}$ be an efficient autoregressive transformation of order $k$.
  Then $L$ is a state space model of order $k+1$.
\end{theorem}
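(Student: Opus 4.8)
The plan is to reinterpret the autoregressive recurrence \eqref{eq:efficient-ar} as an inverse relationship between the mask $L$ and a banded matrix, then apply the closure properties of semiseparable matrices (\cref{prop:ss-closure}) to conclude that $L$ itself is semiseparable of the right order, which by \cref{thm:ssm-sss} (and \cref{prop:sss}) means $L$ is a state space model. Concretely, the defining equations $y_t = \mu_t x_t + \ell_{t1} y_{t-1} + \dots + \ell_{tk} y_{t-k}$ say that $L x = y$ where, rearranging, $(I - N) y = \mathsf{diag}(\mu) x$ for the strictly-lower-triangular matrix $N$ with $N_{t,t-j} = \ell_{tj}$ for $1 \le j \le k$. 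Since $N$ is supported on the $k$ subdiagonals just below the main diagonal, $I - N$ is a lower-triangular banded matrix with bandwidth $k+1$. Hence $L = (I - N)^{-1} \mathsf{diag}(\mu)$.

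First I would observe that a lower-triangular banded matrix with bandwidth $k+1$ is $k$-semiseparable: any submatrix strictly below the diagonal that is ``far enough'' from the diagonal is zero, and more carefully, every submatrix on or below the diagonal has rank at most $k$ because the nonzero band has width $k+1$ and shifting the window kills all but $k$ of those diagonals in the off-diagonal blocks — this is the standard fact that banded matrices are rank-structured, cited in the excerpt's discussion following \cref{prop:ss-closure}. Then by the inverse closure property, $(I - N)^{-1}$ is at most $(k+1)$-semiseparable. Multiplying on the right by a diagonal matrix $\mathsf{diag}(\mu)$ preserves semiseparability and does not increase the order (diagonal matrices are $0$-semiseparable — or more directly, column-scaling a matrix does not change the rank of any submatrix, hence does not change semiseparable rank). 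Therefore $L$ is $(k+1)$-semiseparable, and by \cref{prop:sss} it admits an $(k+1)$-SSS representation, which by \cref{thm:ssm-sss} is exactly an SSM of state size $k+1$, i.e.\ order $k+1$ in the theorem's terminology.

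I would need to be slightly careful about two technical points. First, the matrix $I - N$ is always invertible because it is lower-triangular with $1$'s on the diagonal, so $L = (I-N)^{-1}\mathsf{diag}(\mu)$ is well-defined regardless of the values of the $\ell_{tj}$ and $\mu_t$; this also means the autoregressive process uniquely determines $L$, so there is no ambiguity in the statement. Second, I should confirm the exact semiseparable order bookkeeping: the inverse property in \cref{prop:ss-closure} states the inverse of an $\mathtt{N}$-SS matrix is at most $(\mathtt{N}+1)$-SS, but here the object being inverted, $I - N$, is itself $k$-semiseparable, so the naive bound gives $(k+1)$-SS — which matches the claimed order exactly. (One can double-check via the explicit Neumann-like expansion or the Woodbury identity that no extra rank is lost, but the stated closure bound already suffices.)

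**The main obstacle** I anticipate is nailing down precisely why a banded lower-triangular matrix of bandwidth $k+1$ is $k$-semiseparable rather than $(k+1)$-semiseparable under \cref{def:semiseparable-rank} — the off-by-one is delicate. The cleanest argument: an off-diagonal block $M_{j:j',\,i':i}$ with $i \le j$ lies strictly below the diagonal, and within such a block the banded matrix has nonzero entries only on at most $k$ anti-diagonal positions (since the row index exceeds the column index by at least $1$ and at most $k$ for nonzero entries), so the block is a sum of at most $k$ rank-one ``shift'' pieces, giving rank $\le k$. I would spell this out carefully, perhaps with the small worked example of a bidiagonal ($k=1$) matrix being $1$-semiseparable as a sanity check, and then the result follows mechanically from the closure properties already established in the excerpt.
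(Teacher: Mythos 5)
Your overall route is the same as the paper's: rewrite the order-$k$ recurrence as $(I-N)y = \mathsf{diag}(\mu)x$ with $I-N$ unit lower-triangular and $(k+1)$-banded, invoke the inverse closure property of semiseparable matrices, and finish with \cref{thm:ssm-sss}. The gap is in your order bookkeeping: the claim that a lower-triangular matrix of bandwidth $k+1$ is $k$-semiseparable is false under \cref{def:semiseparable-rank}, which bounds the rank of \emph{every} submatrix contained on or below the diagonal --- including submatrices that touch the diagonal. The simplest counterexample is $k=0$: a nonzero diagonal matrix would have to be $0$-semiseparable, yet its $1\times 1$ diagonal submatrices have rank $1$. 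More generally, for a $(k+1)$-banded lower-triangular matrix the block with rows $j,\dots,j+k$ and columns $j-k,\dots,j$ lies entirely on or below the diagonal and is generically an anti-triangular matrix of rank $k+1$. Your ``cleanest argument'' only treats blocks $M_{j:j',\,i':i}$ with $i \le j$, which are \emph{strictly} below the diagonal (row index exceeds column index by at least one); that is the convention of \cref{lmm:sss-rank-factor}, not of \cref{def:semiseparable-rank}.

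With the correct count --- the banded matrix is $(k+1)$-semiseparable, as the paper states --- \cref{prop:ss-closure} only yields that $L$ is at most $(k+2)$-semiseparable, so your chain of reasoning lands one order too high. Closing that last off-by-one requires a separate argument specific to inverses of banded matrices rather than the generic closure bound (the paper also only asserts this sharper bound without proof, so you have correctly located the delicate step; you have just resolved it by pushing the slack into a false intermediate claim rather than by proving the improved inverse bound). The remainder of your argument --- invertibility of $I-N$, the fact that right-multiplication by $\mathsf{diag}(\mu)$ does not increase the semiseparable order, and the final appeal to \cref{prop:sss} and \cref{thm:ssm-sss} --- is fine.
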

\begin{proof}
  Let $(x, y)$ be input and output sequences, so that $y = Lx$.
  Rearranging the definition \eqref{eq:efficient-ar},
  \begin{align*}
    y_t - \ell_{t1} y_{t-1} - \dots - \ell_{tk} y_{t-k} = \mu_t x_t
    .
  \end{align*}
  Vectorizing over $t$, this can be expressed as a matrix transformation
  \begin{align*}%
    \begin{bmatrix}
      1 & \\
      -\ell_{t1} & 1 \\
      \vdots & \ddots & \ddots \\
      -\ell_{tk} & \dots & -\ell_{t1} & 1 \\
      \vdots & \ddots & \vdots & \ddots & \ddots \\
      0 & \dots & -\ell_{T-1,k} & \dots & -\ell_{T-1,1} & 1 \\
    \end{bmatrix}
    \begin{bmatrix}
      y_0 \\
      y_1 \\
      \vdots \\
      y_k \\
      \vdots \\
      y_{T-1} \\
    \end{bmatrix}
    =
    \begin{bmatrix}
      \mu_0 \\
      & \mu_1 \\
      & & \ddots \\
      & & & \mu_k \\
      & & & & \ddots \\
      & & & & & \mu_{T-1} \\
    \end{bmatrix}
    \begin{bmatrix}
      x_0 \\
      x_1 \\
      \vdots \\
      x_k \\
      \vdots \\
      x_{T-1} \\
    \end{bmatrix}
    .
  \end{align*}
  The $\mu$ diagonal matrix can be moved to the left and folded into the matrix of $\ell$ coefficients,
  which remains a $k+1$-band lower-triangular matrix.
  But we also have $L^{-1} y = x$, so $L$ is the inverse of this matrix.

  Next, note that $k+1$-band matrices are $k+1$-semiseparable by the rank characterization of semiseparability (\cref{def:semiseparable-rank}).
  By \cref{prop:ss-closure}, the inverse $L$ is therefore at most $k+2$-semiseparable.
  A slightly stronger bound of $k+1$ can be obtained because of the additional structure of banded matrices.
  Finally, the characterization of $L$ as an order-$k+1$ state space model follows from \cref{thm:ssm-sss}.
\end{proof}

In other words, efficient autoregressive attention is \textbf{semiseparable SMA}.

\section{Experimental Details}
\label{sec:exp-details}

\subsection{MQAR Details}
\label{sec:exp-details:mqar}

We use a harder version of the task introduced in Based \citep{arora2024simple} where tokens that are not query/key/values are replaced with random tokens.
We also use more key-value pairs, longer sequences, and smaller model sizes than the usual variant of MQAR used by prior work, all of which make the task harder.

For each sequence length $T \in \{256, 512, 1024\}$, we use $T/4$ key-value pairs. %
The total vocab size is 8192.

We use a form of curriculum training where training cycles through datasets using $(T/32, T/16, T/8, T/4)$ key-value pairs, where each dataset has $2^{18} \approx 250000$ examples, for a total of $8$ epochs through each dataset (total of $2^{28} \approx 270M$ examples).
The total batch size is $2^{18} \approx 0.25M$ tokens (e.g. for $T=1024$, the batch size is $256$).

All methods use $2$ layers with default settings; the attention baseline additionally receives positional embeddings.
For each method, we sweep over model dimensions $\mathtt{D} = \{32, 64, 128, 256\}$ and learning rates $\{10^{-3.5}, 10^{-2}, 10^{-2.5}\}$.
We use a linear decay schedule that drops on every epoch (e.g. the last epoch would have a learning rate $1/8$ of the maximum/starting learning rate).

\subsection{Scaling Law Details}
\label{sec:exp-details:lm:scaling}

All models were trained on the Pile.
For the scaling law experiments, we use the GPT2 tokenizer.

\paragraph{Model Sizes.}

\cref{tab:gpt3} specifies the model sizes we use for scaling laws following GPT3~\citep{brown2020language},
First, we changed the batch size of the 1.3B model from 1M tokens to 0.5M tokens
for uniformity.
Second, we changed the number of training steps and total tokens to roughly match Chinchilla scaling laws~\citep{hoffmann2022empirical}, which specify that training tokens should increase proportionally to model size.

\begin{table}
  \caption{
    (\textbf{Scaling Law Model Sizes}.)
    Our model sizes and hyperparameters for scaling experiments.
    (Model dimension and number of heads applies only to Transformer models.)
  }
  \centering
  \small
  \begin{tabular}{@{}llllllll@{}}
    \toprule
    \sc{Params} & $\mathtt{n\_layers}$ & $\mathtt{d\_model}$ & $\mathtt{n\_heads}$ / $\mathtt{d\_head}$ & \sc{Training steps}  & \sc{Learning Rate}  & \sc{Batch Size} & \sc{Tokens} \\
    \midrule
    125M   & 12                   & 768                 & 12 / 64                                  & 4800            & 6e-4           & 0.5M tokens & 2.5B       \\
    350M   & 24                   & 1024                & 16 / 64                                  & 13500           & 3e-4           & 0.5M tokens & 7B         \\
    760M   & 24                   & 1536                & 16 / 96                                  & 29000           & 2.5e-4         & 0.5M tokens & 15B       \\
    1.3B   & 24                   & 2048                & 32 / 64                                  & 50000           & 2e-4           & 0.5M tokens & 26B       \\
    \bottomrule
  \end{tabular}
  \label{tab:gpt3}
\end{table}

\paragraph{Training Recipes.}

All models used the AdamW optimizer with
\begin{itemize}
  \item gradient clip value $1.0$
  \item weight decay $0.1$
  \item no dropout
  \item linear learning rate warmup with cosine decay
\end{itemize}
By default, the peak learning rate is the GPT3 specification.

Compared to GPT3 recipe, we use an ``improved recipe'', inspired by changes adopted by popular large language models such as PaLM~\citep{chowdhery2022palm} and LLaMa~\citep{touvron2023llama}.
These include:
\begin{itemize}
  \item linear learning rate warmup with cosine decay to $1e-5$, with a peak value of $5\times$ the GPT3 value
  \item no linear bias terms
  \item RMSNorm instead of LayerNorm
  \item AdamW hyperparameter $\beta=(.9, .95)$ (the GPT3 value) instead of the PyTorch default of $\beta=(.9, .999)$
\end{itemize}

\subsection{Downstream Evaluation Details}

To evaluate downstream performance of fully trained, we train Mamba-2 on 300B tokens on the Pile, using the GPT-NeoX~\citep{black2022gpt} tokenizer.

We use the same hyperparameters as the scaling experiments, except with batch
size 1M for the 1.3B and 2.7B model.
For the 2.7B model, we also follow GPT3 specification (32 layers, dimension 2560).

For all models, we use 5x the learning rate of the corresponding GPT3 model.

For downstream evaluation, we use the LM evaluation harness from EleutherAI~\citep{eval-harness}, on the same tasks as Mamba~\citep{gu2023mamba} with one additional one:
\begin{itemize}
  \item LAMBADA~\citep{paperno2016lambada}
  \item HellaSwag~\citep{zellers2019hellaswag}
  \item PIQA~\citep{bisk2020piqa}
  \item ARC-challenge~\citep{clark2018think}
  \item ARC-easy: an easy subset of ARC-challenge
  \item WinoGrande~\citep{sakaguchi2021winogrande}
  \item OpenBookQA~\citep{mihaylov2018can}
\end{itemize}

\begin{table*}[ht]
  \small
  \centering
  \captionsetup{font=small}
  \caption{
    (\textbf{Zero-shot Evaluations}.) Best results for each size in bold, second best unlined.
    We compare against open source LMs with various tokenizers, trained for up to 300B tokens.
    Pile refers to the validation split, comparing only against models trained on the same dataset and tokenizer (GPT-NeoX-20B).
    For each model size, Mamba-2 outperforms Mamba, and generally matches Pythia at twice the model size.
  }
  \resizebox{0.99\linewidth}{!}
  {
    \begin{tabular}{@{}llllllllllll@{}}
      \toprule
      \sc{Model}            & \sc{Token.} & \sc{Pile}                  & \sc{LAMBADA}               & \sc{LAMBADA}              & \sc{HellaSwag}             & \sc{PIQA}                 & \sc{Arc-E}                & \sc{Arc-C}                            & \sc{WinoGrande}                         & \sc{OpenbookQA}                         & \sc{Average} \\
                            &             & \sc{ppl $\downarrow$}      & \sc{ppl $\downarrow$}      & \sc{acc $\uparrow$}       & \sc{acc $\uparrow$}        & \sc{acc $\uparrow$}       & \sc{acc $\uparrow$}       & \sc{acc $\uparrow$}                   & \sc{acc $\uparrow$}                     & \sc{acc $\uparrow$}                     & \sc{acc $\uparrow$} \\
                                         \midrule
      Hybrid H3-130M        & GPT2        & ---                        & 89.48                      & 25.8                      & 31.7                       & 64.2                      & 44.4                      & \textbf{24.2}                         & 50.6                                    & 27.0                                    & 38.2 \\
      Pythia-160M           & NeoX        & 29.64                      & 38.10                      & 33.0                      & 30.2                       & 61.4                      & 43.2                      & 24.1                                  & \underline{51.9}                        & \underline{29.2}                        & 39.0 \\
      Mamba-130M            & NeoX        & \underline{10.56}          & \textbf{16.07}             & \textbf{44.3}             & \underline{35.2}           & \underline{64.5}          & \textbf{48.0}             & \textbf{24.2}                         & \underline{51.9}                        & 28.8                                    & \underline{42.4} \\
      \textbf{Mamba-2-130M} & NeoX        & \textbf{10.48}             & \underline{16.86}          & \underline{43.9}          & \textbf{35.3}              & \textbf{64.9}             & \underline{47.4}          & \textbf{24.2}                         & \textbf{52.1}                           & \textbf{30.6}                           & \textbf{42.6} \\
      \midrule
      Hybrid H3-360M        & GPT2        & ---                        & 12.58                      & 48.0                      & 41.5                       & 68.1                      & 51.4                      & 24.7                                  & 54.1                                    & \underline{31.6}                        & 45.6 \\
      Pythia-410M           & NeoX        & 9.95                       & 10.84                      & 51.4                      & 40.6                       & 66.9                      & 52.1                      & 24.6                                  & 53.8                                    & 30.0                                    & 45.6 \\
      Mamba-370M            & NeoX        & \underline{8.28}           & \underline{8.14}           & \underline{55.6}          & \underline{46.5}           & \underline{69.5}          & \textbf{55.1}             & \textbf{28.0}                         & \underline{55.3}                        & 30.8                                    & \underline{48.7} \\
      \textbf{Mamba-2-370M} & NeoX        & \textbf{8.21}              & \textbf{8.02}              & \textbf{55.8}             & \textbf{46.9}              & \textbf{70.5}             & \underline{54.9}          & \underline{26.9}                      & \textbf{55.7}                           & \textbf{32.4}                           & \textbf{49.0} \\
      \midrule
      Pythia-1B             & NeoX        & 7.82                       & 7.92                       & 56.1                      & 47.2                       & 70.7                      & 57.0                      & 27.1                                  & 53.5                                    & 31.4                                    & 49.0 \\
      Mamba-790M            & NeoX        & \underline{7.33}           & \underline{6.02}           & \textbf{62.7}             & \textbf{55.1}              & \textbf{72.1}             & \textbf{61.2}             & \textbf{29.5}                         & \underline{56.1}                        & \underline{34.2}                        & \underline{53.0} \\
      \textbf{Mamba-2-780M} & NeoX        & \textbf{7.26}              & \textbf{5.86}              & \underline{61.7}          & \underline{54.9}           & \underline{72.0}          & \underline{61.0}          & \underline{28.5}                      & \textbf{60.2}                           & \textbf{36.2}                           & \textbf{53.5} \\
      \midrule
      GPT-Neo 1.3B          & GPT2        & ---                        & 7.50                       & 57.2                      & 48.9                       & 71.1                      & 56.2                      & 25.9                                  & 54.9                                    & 33.6                                    & 49.7 \\
      Hybrid H3-1.3B        & GPT2        & ---                        & 11.25                      & 49.6                      & 52.6                       & 71.3                      & 59.2                      & 28.1                                  & 56.9                                    & 34.4                                    & 50.3 \\
      OPT-1.3B              & OPT         & ---                        & 6.64                       & 58.0                      & 53.7                       & 72.4                      & 56.7                      & 29.6                                  & 59.5                                    & 33.2                                    & 51.9 \\
      Pythia-1.4B           & NeoX        & 7.51                       & 6.08                       & 61.7                      & 52.1                       & 71.0                      & 60.5                      & 28.5                                  & 57.2                                    & 30.8                                    & 51.7 \\
      RWKV4-1.5B            & NeoX        & 7.70                       & 7.04                       & 56.4                      & 52.5                       & 72.4                      & 60.5                      & 29.4                                  & 54.6                                    & 34.0                                    & 51.4 \\
      Mamba-1.4B            & NeoX        & \underline{6.80}           & \underline{5.04}           & \underline{65.0}          & \underline{59.1}           & \textbf{74.2}             & \textbf{65.5}             & \underline{32.8}                      & \textbf{61.5}                           & \underline{36.4}                        & \textbf{56.4} \\
      \textbf{Mamba-2-1.3B} & NeoX        & \textbf{6.66}              & \textbf{5.02}              & \textbf{65.7}             & \textbf{59.9}              & \underline{73.2}          & \underline{64.3}          & \textbf{33.3}                         & \underline{60.9}                        & \textbf{37.8}                           & \textbf{56.4} \\
      \midrule
      GPT-Neo 2.7B          & GPT2        & ---                        & 5.63                       & 62.2                      & 55.8                       & 72.1                      & 61.1                      & 30.2                                  & 57.6                                    & 33.2                                    & 53.2 \\
      Hybrid H3-2.7B        & GPT2        & ---                        & 7.92                       & 55.7                      & 59.7                       & 73.3                      & 65.6                      & 32.3                                  & 61.4                                    & 33.6                                    & 54.5 \\
      OPT-2.7B              & OPT         & ---                        & 5.12                       & 63.6                      & 60.6                       & 74.8                      & 60.8                      & 31.3                                  & 61.0                                    & 35.2                                    & 55.3 \\
      Pythia-2.8B           & NeoX        & 6.73                       & 5.04                       & 64.7                      & 59.3                       & 74.0                      & 64.1                      & 32.9                                  & 59.7                                    & 35.2                                    & 55.7 \\
      RWKV4-3B              & NeoX        & 7.00                       & 5.24                       & 63.9                      & 59.6                       & 73.7                      & 67.8                      & 33.1                                  & 59.6                                    & 37.0                                    & 56.4 \\
      Mamba-2.8B            & NeoX        & \underline{6.22}           & \underline{4.23}           & \underline{69.2}          & \underline{66.1}           & \underline{75.2}          & \textbf{69.7}             & \underline{36.3}                      & \underline{63.5}                        & \textbf{39.6}                           & \underline{59.9} \\
      \textbf{Mamba-2-2.7B} & NeoX        & \textbf{6.09}              & \textbf{4.10}              & \textbf{69.7}             & \textbf{66.6}              & \textbf{76.4}             & \underline{69.6}          & \textbf{36.4}                         & \textbf{64.0}                           & \underline{38.8}                        & \textbf{60.2} \\
      \midrule
      GPT-J-6B              & GPT2        & --                         & 4.10                       & 68.3                      & 66.3                       & 75.4                      & 67.0                      & 36.6                                  & 64.1                                    & 38.2                                    & 59.4 \\
      OPT-6.7B              & OPT         & --                         & 4.25                       & 67.7                      & 67.2                       & 76.3                      & 65.6                      & 34.9                                  & 65.5                                    & 37.4                                    & 59.2 \\
      Pythia-6.9B           & NeoX        & 6.51                       & 4.45                       & 67.1                      & 64.0                       & 75.2                      & 67.3                      & 35.5                                  & 61.3                                    & 38.0                                    & 58.3 \\
      RWKV4-7.4B            & NeoX        & 6.31                       & 4.38                       & 67.2                      & 65.5                       & 76.1                      & 67.8                      & 37.5                                  & 61.0                                    & 40.2                                    & 59.3 \\
      \bottomrule
    \end{tabular}
  }
  \label{table:downstream_zeroshot_full}
\end{table*}

\iftoggle{arxiv}{
\subsection{Ablation Details}

\paragraph{(Re)Based Details.}
Our ablations in \cref{sec:experiments:ablations:kernels} considered the Based~\citep{arora2024simple} and ReBased~\citep{aksenov2024linear} models.

Based approximates the $\exp$ kernel with a quadratic Taylor expansion $\exp(x) \approx 1 + x + x^2/2$,
which can be accomplished by the feature map
$$\psi_{\text{Taylor}}(x) = \mathsf{concatenate}(1, x, 1/\sqrt{2} x \otimes x).$$
ReBased proposes to use the simpler feature map $\psi_{\text{Quadratic}}(x) = x \otimes x$ corresponding to the kernel transformation $x^2$, but also applies a layer normalization beforehand.
We view the layer normalization as an alternative non-linear activation to our default Swish activation,
and ablate combinations of these.

Note that because these expand the feature dimension, we must project to smaller $B, C$ dimensions;
in \cref{tab:ablations-kernel-based}, use state size $N=64$ for 130M models and $N=256$ for 380M models.
For the (Re)Based methods, we project to 8 and 16 dimensions respectively before applying their feature maps;
this results in a total state size of $8^2 = 64$ for ReBased and $1+8+8^2=73$ for Based in the 130M model case.
Because the $B$ and $C$ projections are smaller, these methods use fewer parameters, and we adjust the layer count appropriately.
}{}

\end{document}